\newcommand\blfootnotea[1]{\begingroup
  \renewcommand\thefootnote{}\footnote{#1}\endgroup
}
\definecolor{green}{rgb}{0.0, 0.5, 0.0}
\crefname{lemma}{lemma}{lemmata}
\crefname{claim}{claim}{claims}
\crefname{theorem}{theorem}{theorems}
\crefname{proposition}{proposition}{propositions}
\crefname{corollary}{corollary}{corollaries}
\crefname{claim}{claim}{claims}
\crefname{remark}{remark}{remarks}
\crefname{definition}{definition}{definitions}
\crefname{fact}{fact}{facts}
\crefname{question}{question}{questions}
\crefname{condition}{condition}{conditions}
\crefname{problem}{problem}{problems}
\crefname{algorithm}{algorithm}{algorithms}
\crefname{assumption}{assumption}{assumptions}
\crefname{notation}{notation}{notation}
\crefname{cond}{Condition}{Conditions}
\crefname{ineq}{Inequality}{Inequality}
\crefname{sub}{Subsection}{Subsection}
\crefname{sdp}{SDP}{SDP}
\crefname{lp}{LP}{LP}
\renewcommand\vec[1]{\mathbf{#1}}
\newcommand{\x}{\vec x}
\def\d{\mathrm{d}}
\newtheorem{theorem}{Theorem}[section]
\newtheorem{lemma}[theorem]{Lemma}
\newtheorem{proposition}[theorem]{Proposition}
\newtheorem{corollary}[theorem]{Corollary}
\newtheorem{claim}[theorem]{Claim}
\newtheorem{definition}[theorem]{Definition}
\newtheorem{fact}[theorem]{Fact}
\theoremstyle{definition}
\newtheorem{problem}[theorem]{Problem}
\renewcommand{\epsilon}{\varepsilon}
\newcommand{\eps}{\epsilon}
\newcommand{\poly}{\mathrm{poly}}
\newcommand{\polylog}{\mathrm{polylog}}
\newcommand{\dtv}{\mathrm{d}_\mathrm{TV}}
\def\D{\mathcal D}
\def\R{\mathbb R}
\def\N{\mathbb N}
\def\Z{\mathbb Z}
\newcommand{\cA}{\mathcal{A}}
\newcommand{\cB}{\mathcal{B}}
\newcommand{\cC}{\mathcal{C}}
\newcommand{\cI}{\mathcal{I}}
\newcommand{\cN}{\mathcal{N}}
\newcommand{\cP}{\mathcal{P}}
\newcommand{\cS}{\mathcal{S}}
\newcommand{\cU}{\mathcal{U}}
\newcommand{\cX}{\mathcal{X}}
\newcommand{\bA}{\vec{A}}
\newcommand{\bB}{\vec{B}}
\newcommand{\bI}{\vec{I}}
\newcommand{\bR}{\vec{R}}
\newcommand{\bU}{\vec{U}}
\newcommand{\bV}{\vec{V}}
\newcommand{\bx}{\mathbf{x}}
\newcommand{\by}{\mathbf{y}}
\newcommand{\bv}{\mathbf{v}}
\newcommand{\bu}{\mathbf{u}}
\newcommand{\bz}{\mathbf{z}}
\newcommand{\bw}{\mathbf{w}}
\DeclareMathOperator*{\pr}{\mathbf{Pr}}
\DeclareMathOperator*{\E}{\mathbf{E}}
\DeclareMathOperator*{\Var}{\mathbf{Var}}
\newcommand{\eqdef}{\stackrel{{\mathrm {\footnotesize def}}}{=}}
\newcommand{\op}{\textnormal{op}}
\newcommand{\fr}{\textnormal{F}}
\newcommand{\supp}{\mathrm{supp}}
\newcommand{\normal}{\mathcal{N}}
\def\d{\mathrm{d}}
\let\vec\mathbf
\def\colorful{0}
\newcommand{\new}[1]{{\color{red} #1}}
\newcommand{\blue}[1]{{\color{blue} #1}}
\newcommand{\new}[1]{{#1}}
\newcommand{\blue}[1]{{#1}}
\title{SQ Lower Bounds for Learning Bounded Covariance GMMs\blfootnotea{Authors are in alphabetical order.}}
\author{
Ilias Diakonikolas\thanks{Supported by NSF Medium Award CCF-2107079, NSF Award CCF-1652862 (CAREER), 
and a DARPA Learning with Less Labels (LwLL) grant.
}\\
University of Wisconsin-Madison\\
{\tt ilias@cs.wisc.edu}\\
\and
Daniel M. Kane\thanks{Supported by NSF Medium Award CCF-2107547 and NSF Award CCF-1553288 (CAREER).}\\
University of California, San Diego\\
{\tt dakane@cs.ucsd.edu}
\and
Thanasis Pittas\thanks{Supported by NSF Medium Award CCF-2107079.}\\
University of Wisconsin-Madison\\
{\tt pittas@wisc.edu}\\
\and
Nikos Zarifis\thanks{Supported in part by DARPA Learning with Less Labels (LwLL) grant and NSF Award DMS-2023239 (TRIPODS).}\\
University of Wisconsin-Madison\\
{\tt zarifis@wisc.edu}\\
}
\begin{document}

\maketitle

\begin{abstract}
We study the complexity of learning 
mixtures of separated Gaussians with common unknown bounded
covariance matrix. Specifically, we focus on learning Gaussian mixture models (GMMs) on $\R^d$ 
of the form $P= \sum_{i=1}^k w_i \cN(\bm \mu_i,\vec \Sigma_i)$, 
where $\vec \Sigma_i = \vec \Sigma \preceq \vec I$
and $\min_{i \neq j} \|\bm \mu_i - \bm \mu_j\|_2 \geq k^\eps$ for some $\eps>0$. 
Known learning algorithms for this family of GMMs have complexity $(dk)^{O(1/\eps)}$. In this work, we
prove that any Statistical Query (SQ) algorithm for this problem requires complexity at least 
 $d^{\Omega(1/\eps)}$. \new{In the special case where the separation is on the order of $k^{1/2}$, 
 we additionally obtain fine-grained SQ lower bounds with the correct exponent.}
Our SQ lower bounds imply 
similar lower bounds for low-degree polynomial tests. 
\new{Conceptually,} our results provide evidence that known algorithms for 
this problem are nearly best possible. 
\end{abstract}

\setcounter{page}{0}

\thispagestyle{empty}

\newpage

\section{Introduction} \label{sec:intro}

We study the classical problem of learning Gaussian mixture models (GMMs) in high dimensions.
This problem has a long history, starting with the early work of \new{Pearson}~\cite{Pearson:94}
who introduced the method of moments in this context.
Over the past \new{three} decades, there has been a vast literature on learning GMMs 
in both statistics and theoretical computer science~\cite{Dasgupta:99, AroraKannan:01, VempalaWang:02,
AchlioptasMcSherry:05, FOS:06, KSV08, BV:08, MoitraValiant:10, BelkinSinha:10, SOAJ14, DK14, HardtP15, 
DHKK20, BakshiDHKKK20, DKKLT21, LM20-gmm, BD+20-gmm}. 
Here we focus on computational aspects \new{of this problem} 
with a focus on {\em information-computation tradeoffs} 
in high dimensions.

The learning setup is as follows: 
We have access to i.i.d.\ samples from an unknown $k$-GMM on $\R^d$, 
$P = \sum_{i=1}^k w_i \cN(\bm \mu_i, \vec \Sigma_i)$, where $w_i \geq 0$ are the mixing weights satisfying $\sum_{i=1}^k w_i = 1$, 
$\bm \mu_i \in \R^d$ are the unknown component means 
and $\vec \Sigma_i$ are the unknown component covariances. 
Roughly speaking, there are two versions of the learning problem:
(1) density estimation, where the goal is to compute a hypothesis distribution $H$ 
that is close to $P$ in total variation distance, and 
(2) parameter estimation\footnote{
A related task is that of clustering the sample 
based on the generating component. Once we have an accurate clustering, 
assuming one exists, we can individually learn the individual component parameters.}, 
where the goal is to approximate the target parameters $w_i, \bm \mu_i, \vec \Sigma_i$ within small error.
While density estimation of $k$-GMMs on $\R^d$ 
is information-theoretically solvable with $\poly(d, k)$ samples,
parameter estimation may require $2^{\Omega(k)}$ samples (even in one dimension)
if the individual components are close to each other~\cite{MoitraValiant:10}. 
On the other hand, under the standard separation assumption
that the components are ``nearly non-overlapping'', 
parameter estimation can also be solved with $\poly(d, k)$ samples. 
Here we focus on families of instances satisfying appropriate separation assumptions.
Even though such instances can be learned with $\poly(d, k)$ samples, it is by no means
clear that a $\poly(d, k)$-{\em time} learning algorithm exists.
In other words, we explore the relevant {\em information-computation tradeoffs} --- 
inherent tradeoffs between the sample complexity and the computational complexity of learning. 

A number of recent works have established information-computation tradeoffs in the context of 
learning GMMs. The first such result was given in~\cite{DKS17-sq} and applied to the class of 
Statistical Query (SQ) algorithms\footnote{Via a recent reduction~\cite{BBHLS20}, these SQ lower bounds imply
qualitatively similar low-degree testing lower bounds.}.
\new{Specifically,} \cite{DKS17-sq} constructed a hard family 
of GMMs (henceforth informally termed as ``parallel pancakes'') 
and showed that any SQ learner for this family requires super-polynomial time. 
\new{Interestingly, the} class of parallel pancakes is learnable with $O(k \log d)$ samples, 
while any SQ learning algorithm requires $d^{\Omega(k)}$ time. It is worth noting that 
subsequent work~\cite{BRST21, GupteVV22} established computational hardness 
for essentially the same class of instances, under widely-believed cryptographic assumptions.

In this work, we focus on a \new{simpler and well-studied} 
family of GMMs for which significantly faster learning algorithms are known. 
(We provide a detailed comparison between the family of instances \new{we consider} 
and the parallel pancakes
construction of~\cite{DKS17-sq} in Section~\ref{ssec:techniques}.) 
Specifically, we consider GMMs of the form $P =  \sum_{i=1}^k w_i \mathcal{N}(\bm \mu_i, \vec \Sigma_i)$, 
satisfying (a) $\min_i w_i \geq 0.9/k$, (b) $\vec \Sigma_i \preceq \vec I$, and 
(c)$\|\bm \mu_i - \bm \mu_j\|_2  \geq k^{\eps}$, for some $\eps>0$. 
Condition (a) posits that the component weights are nearly uniform. 
(This first condition is relevant for the clustering/parameter estimation problems, as these tasks 
require $\Omega(1/\min_i w_i)$ samples.)
Condition (b) says that each component covariance is unknown and bounded above
by the identity. Finally, condition (c) requires that the \new{component} 
means are pairwise separated by at least $k^{\eps}$,
in $\ell_2$-distance. Here the parameter $\eps>0$ is assumed to be sufficiently large 
so that $k^{\eps} \gg \sqrt{\log k}$. This assumption is required as, even for the uniform weights and 
identity covariance case (i.e., when $w_i = 1/k$ and $\vec \Sigma_i = \vec I$ for all $i$), 
the clustering problem can be solved with $\poly(d, k)$ samples if and only if 
the pairwise mean separation is  $\Delta \gg \sqrt{\log k}$~\cite{RV17-mixtures}.

It is easy to see that the aforementioned family of GMMs is learnable
using $\poly(d, k)$ samples (ignoring computational considerations).
Two independent works~\cite{HL18-sos, KSS18-sos} gave SoS-based learning algorithms for this family of GMMs
with sample complexity $k^{O(1)} d^{O(1/\eps)}$ and 
computational complexity $(d k)^{O(1/\eps^2)}$. With a more careful analysis, 
the runtime \new{can be further} improved to $(d k)^{O(1/\eps)}$~\cite{ST21, DKKPP22-dop}. 
Note that for the important special case
that the mean separation is $\Delta \gg \log^c(k)$, for some constant $c \geq 1/2$, 
these algorithms have quasi-polynomial sample and time complexities, namely $(dk)^{O(\log k)}$.

A natural question is whether \new{the aforementioned} 
upper bounds are inherent or can be significantly improved. 
Concretely, we address the following open problem:
\begin{center}
{\em Is there a $\poly(d, k)$-time learning algorithm for separated GMMs \\ 
with bounded covariance components and mean separation $\Delta = \polylog(k)$?}
\end{center}
For the special case of {\em spherical} components, 
namely when each individual Gaussian has identity covariance (i.e., $\vec \Sigma_i=\vec I$ for all $i$),
very recent work~\cite{LL22} made significant algorithmic progress on this question.
Specifically, they gave a  $\poly(d, k)$ time learning algorithm that succeeds
as long as $\Delta \gg \log^{1/2+c}(k)$, for any constant $c>0$. 
The algorithm in \cite{LL22} crucially leveraged
the assumption that the individual components are known (and equal to the identity). On the other hand,
their upper bound raised the hope that $\poly(d, k)$ complexity might be attainable even for unknown 
bounded covariance components with similar mean separation.

In this work, we provide evidence that known learning 
algorithms~\cite{HL18-sos, KSS18-sos, ST21,  DKKPP22-dop} for 
this subclass of GMMs are qualitatively best possible.
Concretely, we prove an SQ lower bound for this family of GMMs 
suggesting the following information-computation tradeoff: For mean separation $\Delta  = k^{\eps}$, 
any (SQ) learning algorithm either requires $2^{d^{\Omega(1)}}$ time 
or uses at least $d^{\Omega(1/\eps)}$ samples. 
In particular, this implies that the quasi-polynomial upper bounds 
for mean separation of $\Delta  = \polylog(k)$ are \new{best possible for the class of} SQ algorithms. 
Using known results~\cite{BBHLS20}, this SQ lower bound 
implies a qualitatively similar low-degree testing lower bound.

\new{We also provide an interesting implication for the special case of $\eps = 1/2$. 
Specifically, we establish an SQ lower bound suggesting 
that any efficient SQ algorithm under 
separation $\Delta \ll k^{1/2}$ requires nearly {\em quadratically} 
many samples (in the dimension $d$). 
On the other hand, $O(k d)$ samples suffice without computational limitations.
Recent work \cite{DKKLT21} developed an $O(dk)$-sample and computationally efficient 
algorithm for learning bounded covariance distributions (and, consequently, bounded covariance Gaussians) under separation $\tilde{\Omega}(k^{1/2})$. A natural open question is whether this 
separation bound can be significantly improved {\em while preserving sample near-optimality}. 
Perhaps surprisingly, we show that this is not possible for SQ algorithms: any efficient SQ algorithm
that works for separation $C k^{1/2}$, for a sufficiently small constant $C$, requires near-quadratically
many samples in $d$. This gap suggests that the algorithm of~\cite{DKKLT21} 
succeeds under the best possible separation within the class of computationally
efficient and sample near-optimal algorithms.}

\subsection{Our Results} \label{ssec:results}

Our main result is a Statistical Query lower bound of $d^{\Omega(1/\eps)}$ 
for learning the aforementioned subclass of Gaussian mixtures 
with mean separation $\Delta\geq k^\eps$. 

\new{Before we formally state our contributions, we require basic background
on the SQ model.}

\paragraph{SQ Model Basics} 
Before we state our main result, we recall the basics of the SQ model~\cite{Kearns:98, FGR+13}. 
Instead of drawing samples from the input distribution, 
SQ algorithms  are only permitted query access to the distribution via the following oracle:
\begin{definition}[VSTAT Oracle]\label{def:stat}
    Let $D$ be a distribution on $\R^d$. 
A statistical query is a bounded function $q: \R^d \to [0,1]$. 
For $u>0$, the $\mathrm{VSTAT}(u)$ oracle responds to the query $q$
with a value $v$ such that $|v - \E_{\bx \sim D}[q(\bx)]| \leq \tau$, where 
$\tau=\max\{ 1/u,\sqrt{\mathrm{Var}_{\bx \sim D}[q(\x)]/u} \}$. 
We call $\tau$ the \emph{tolerance} of the statistical query.
\end{definition}

An SQ lower bound for a learning problem $\Pi$ is typically of the following form: 
any SQ algorithm for $\Pi$ must either make a large number of queries $Q$ 
or at least one query with small tolerance $\tau$. 
When simulating a statistical query in the standard PAC model 
(by averaging i.i.d.\ samples to approximate expectations), 
the number of samples needed for a $\tau$-accurate query 
can be as high as $\Omega(1/\tau^2)$. Thus, we can intuitively interpret 
an SQ lower bound as a tradeoff between runtime of $\Omega(Q)$ 
or a sample complexity of $\Omega(1/\tau^{\new{2}})$.

\paragraph{Main Result}

Our main SQ lower bound result for learning GMMs 
is stated informally below. A more detailed formal version is provided in \Cref{thm:main}.

\begin{theorem}[Main Result, Informal] \label{thm:main-informal}
For $d,k \in \Z_+$ sufficiently large and $\eps>0$ such that $k^\eps \gg \sqrt{\log k}$, 
any SQ algorithm that correctly distinguishes between 
$\cN(\vec 0, \bI_d)$ and a $k$-GMM on $\R^d$ 
with minimum mixing weight at least $0.99/k$,
common covariance $\vec \Sigma \preceq \vec I_d$ for each component, 
and pairwise mean separation $\Delta \geq k^\eps$, 
either makes $2^{d^{\Omega(1)}}$ statistical queries or requires 
\new{at least one query to $\mathrm{VSTAT}(d^{\Omega(1/\eps)})$}.
\end{theorem}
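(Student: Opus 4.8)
The plan is to establish the statement (in its formal version, \Cref{thm:main}) by instantiating the by-now-standard Statistical Query lower bound template for hidden-direction problems pioneered in \cite{DKS17-sq}. Concretely, I would: (1) build a single ``base'' hard instance, namely a $k$-component GMM $A$ on a space $\R^{D}$ of moderate dimension $D$, with all mixing weights $\Theta(1/k)$, common covariance $\vec\Sigma_0\preceq\bI_D$, pairwise mean separation $\ge k^\eps$, and first $m$ moments equal to those of $\cN(\vec 0,\bI_D)$; (2) embed $A$ into $\R^d$ along a hidden $D$-dimensional subspace $V$ (with an orthonormal frame), placing the standard Gaussian on $V^\perp$, to obtain a family $\{P_V\}_V$ of distributions each of the required GMM form; and (3) invoke the generic SQ lower bound in terms of pairwise $\chi^2$-correlations (the ``statistical query dimension'' argument). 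Using $2^{d^{\Omega(1)}}$ suitably incoherent subspaces converts a per-pair correlation bound of the form $\rho^{m+1}\chi^2(A,\cN)$ into the stated dichotomy: any SQ algorithm either makes $2^{d^{\Omega(1)}}$ queries or makes a single query to $\mathrm{VSTAT}(d^{\Omega(m)})$ in the sense of \Cref{def:stat}. Taking $m=\Theta(1/\eps)$ gives $\mathrm{VSTAT}(d^{\Omega(1/\eps)})$; the fine-grained $\eps=1/2$ statement is the case $m=O(1)$, which yields $\mathrm{VSTAT}(d^2)$.

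The technical core is the construction of $A$, for which the organizing observation is that $A=\cM\ast\cN(\vec 0,\vec\Sigma_0)$, where $\cM=\sum_i w_i\delta_{\bm\mu_i}$ is the discrete law of the component means. Since convolving both sides of a moment-matching identity by a fixed distribution preserves agreement of all moments up to degree $m$, and since $\cN(\vec 0,\bI_D)=\cN(\vec 0,\mathrm{Cov}(\cM))\ast\cN(\vec 0,\bI_D-\mathrm{Cov}(\cM))$, the moment-matching requirement on $A$ reduces to asking that $\cM$ match the degree-$\le m$ moments of the Gaussian whose covariance is $\mathrm{Cov}(\cM)$, after which we set $\vec\Sigma_0=\bI_D-\mathrm{Cov}(\cM)\succeq\vec 0$. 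So it suffices to place $k$ points $\bm\mu_i\in\R^D$ that are pairwise $k^\eps$-separated, have covariance $\preceq\bI_D$, and are ``Gaussian-moment-matching up to degree $m$''. I would obtain these from a code: fix a $q$-node Gauss--Hermite-type quadrature rule for $\cN(0,1)$ with $q=\Theta(m)$ (matching $2q-1$ moments, with minimum node gap $\Theta(1/\sqrt q)$), and let $\cM$ be an $m$-wise independent distribution whose one-dimensional marginal is this quadrature and whose support is a set of $k$ points pairwise differing in $\Omega(D)$ coordinates --- such an ensemble comes from a random $q$-ary linear code with dual distance exceeding $m$ (its rate is tiny since $\log_q k\le\log k\ll k^{2\eps}$), after a coordinatewise relabeling matching the quadrature weights. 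The $m$-wise independence makes every degree-$\le m$ monomial moment of $\cM$ factor across coordinates, hence match the Gaussian; the large coordinatewise Hamming distance together with the node gap gives $\ell_2$-separation $\Omega(\sqrt{D/q})$, so choosing $D=\Theta(m\,k^{2\eps})$ yields separation $\ge k^\eps$ while keeping $\mathrm{Cov}(\cM)\preceq\bI_D$.

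With $A$ in hand, the bookkeeping goes as follows. The diagonal quantity $\chi^2(A,\cN(\vec 0,\bI_D))$ is governed by the spikiness of the individual components relative to the standard Gaussian and is of order $e^{\Theta(\|\bm\mu_i\|^2)}/k=e^{\Theta(k^{2\eps})}/k$; the hypothesis $k^\eps\gg\sqrt{\log k}$ is exactly what makes this at least a large constant --- needed both for $P_V$ to stay far from $\cN(\vec 0,\bI_d)$ in total variation and for the moment-matching construction to be feasible --- while a mild upper bound on $k$ in terms of $d$ (of the form $k\le(\log d)^{O(1/\eps)}$, harmless in the quasi-polynomial regime) keeps it at most $d^{O(1/\eps)}$. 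Expanding $P_U-\cN$ and $P_V-\cN$ in multivariate Hermite polynomials with respect to $\cN(\vec 0,\bI_d)$ and using that the degree-$1,\dots,m$ Hermite coefficients of $A$ vanish, the $\chi^2$-correlation of $P_U$ and $P_V$ is $O(\rho^{m+1}\chi^2(A,\cN))$, where $\rho\lesssim\sqrt{D/d}$ is the coherence between the two hidden subspaces; since $D=\Theta(m\,k^{2\eps})$ is polynomially smaller than $d$, this is $(D/d)^{\Omega(1/\eps)}d^{O(1/\eps)}=d^{-\Omega(1/\eps)}$ once the constant in $m=\Theta(1/\eps)$ is taken large enough. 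Feeding the three quantities --- $2^{d^{\Omega(1)}}$ distributions, pairwise correlation $d^{-\Omega(1/\eps)}$, diagonal $d^{O(1/\eps)}$ --- into the generic SQ lower bound gives \Cref{thm:main}, and the low-degree polynomial testing lower bound then follows from the black-box reduction of \cite{BBHLS20}.

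The step I expect to be the main obstacle is the construction of $A$ with \emph{exact} degree-$m$ moment matching while simultaneously respecting all of the geometric constraints --- in particular making $\ell_2$-separation $k^\eps$ coexist with $\vec\Sigma_0\preceq\bI_D$ (which, as the trace bound $\tr(\mathrm{Cov}(\cM))\gtrsim k^{2\eps}$ shows, already forces the means into $\Omega(k^{2\eps})$ dimensions) and with exact moment matching in that same dimension. Producing an $m$-wise independent point configuration that is also a good Hamming packing, over a small alphabet and of the right support size, and then verifying that the induced diagonal $\chi^2$ factor is only $d^{O(1/\eps)}$ so that the moment-matching gain $\rho^{m+1}$ is not swamped, is where the real work lies; the remaining ingredients --- the Hermite-expansion bound on the correlation, the abundance of pairwise-incoherent subspaces, and the invocation of the generic SQ machinery and the \cite{BBHLS20} reduction --- are by now routine.
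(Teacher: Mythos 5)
Your high-level plan coincides with the paper's: both embed a moment-matching distribution $A$ on a small ambient subspace via the hidden-subspace non-Gaussian-component-analysis template, take a family of $2^{d^{\Omega(1)}}$ incoherent subspaces, bound the pairwise $\chi^2$-correlation by $\|\bU\bV^\top\|_{\op}^{t+1}\chi^2(A,\cN)$, and invoke the $\mathrm{SD}/\mathrm{VSTAT}$ machinery. Where you genuinely differ is in \emph{how $A$ is produced}. The paper draws $N=m^{13t}$ i.i.d.\ points from $\cN(\vec 0,\bI_m)$ with $m=k^{2\eps}$, shows by LP duality (\Cref{lem:hard-inst}) that with constant probability some reweighting of these points exactly matches $t$ moments while giving every point mass $\ge 0.99/N$, reads off pairwise separation $\ge\sqrt m$ from near-orthogonality of Gaussian samples, and then applies the Ornstein--Uhlenbeck operator $U_\rho$ with $\rho=\sqrt{1-\delta}$ to turn the discrete law into a GMM. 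You instead propose an explicit, derandomized construction: an $m$-wise independent ensemble from a random $q$-ary linear code of dual distance $>t$, with coordinate marginals given by a $q$-node Gauss--Hermite quadrature, and you obtain separation from the code's Hamming distance. Your decomposition $A=\cM*\cN(\vec 0,\vec\Sigma_0)$ with $\vec\Sigma_0=\bI-\mathrm{Cov}(\cM)$ is the same device as the paper's OU smoothing (you correctly note that convolution preserves moment matching), just stated differently.

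That said, the code/quadrature construction as written has two concrete gaps that the LP-duality route sidesteps, and which you partly flag as ``the main obstacle.'' First, a random linear code over $\F_q$ induces a \emph{uniform} marginal over $[q]$ at each coordinate, whereas Gauss--Hermite quadrature has non-uniform (indeed irrational) weights; the ``coordinatewise relabeling matching the quadrature weights'' cannot be a bijection, and a many-to-one relabeling requires the quadrature weights to be rational with small denominators (or forces you to switch to a uniform-weight symmetric node set, which matches fewer moments per node). You have to nail this down, because the SQ argument needs \emph{exact} moment matching, not approximate. Second, if the coordinate marginals of $\cM$ have unit variance and the ensemble is pairwise independent, then $\mathrm{Cov}(\cM)=\bI_D$ and $\vec\Sigma_0=\vec 0$, so ``$\succeq\vec 0$'' is not enough: you need $\vec\Sigma_0\succ\vec 0$ to get a genuine GMM, which means scaling the quadrature to variance $1-\delta$ for some small $\delta>0$ (this is precisely the role the paper's $\rho=\sqrt{1-\delta}$ plays). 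Both gaps are likely patchable, but the paper's approach---random support plus an LP feasibility argument---neatly avoids them, since the LP is indifferent to the weight profile and the OU smoothing handles the variance budget automatically.

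One further minor point: your last paragraph asserts a pairwise correlation bound of the form $\rho^{m+1}\chi^2(A,\cN)$ with $\rho\lesssim\sqrt{D/d}$. The paper's \Cref{lemma:correlation-bound} gives $\|\bU\bV^\top\|_{\op}^{t+1}\chi^2(A,\cN)$, and the incoherence from \Cref{fact:setofmatrices} gives $\|\bU\bV^\top\|_{\fr}\le O(d^{-1/10})$, hence operator norm $O(d^{-1/10})$; the exponent $1/10$ (not $1/2$) is what makes the paper's hidden constant in $d^{\Omega(1/\eps)}$ large, and is precisely why they switch to an explicit three-moment construction and a sharper incoherence fact (\Cref{fact:setofmatrices-improved}) in the $\eps=1/2$ case of \Cref{thm:epsilon_half}.
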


As is typically the case, our SQ lower bound applies for the hypothesis testing problem
of distinguishing between the standard Gaussian and an unknown GMM in our family.
Hardness for testing a fortiori implies hardness for the corresponding learning problem 
(see \Cref{cor:density-estimation-hard}). 

A few additional remarks are in order. First notice that our SQ lower bound applies even for the special
case where the mixing weights are nearly uniform (within a factor of $2$, say)
and the component covariances are the same, 
as long as they are unknown\footnote{Recall that known algorithms do not require these assumptions. 
The runtime upper bound of $(dk)^{O(1/\eps)}$ holds as long as the minimum weight
is at least $1/\poly(k)$ and even if the component covariances are different.}. 
As it will become clear from our construction,  the common covariance matrix 
of each component  has only two distinct eigenvalues: each Gaussian component 
behaves like a standard Gaussian in all directions that are orthogonal 
to a low-dimensional subspace, and along that subspace behaves like 
 a spherical Gaussian with different variance. 
Finally, we remark that our lower bound applies for a large range of the parameter $\eps>0$, 
as long as $k^{\eps}$ is at least a sufficiently large constant multiple of $\sqrt{\log k}$. 
Consequently, it implies that the quasi-polynomial upper bounds for separation of $\polylog(k)$
are best possible for the class of SQ algorithms.

The implications of our SQ lower bound to the low-degree polynomial testing 
model, via the result of \cite{BBHLS20}, are provided in \Cref{sec:low_degree}.

\paragraph{Quadratic SQ Lower Bound for $\Omega(\sqrt{k})$ Separation}
\new{Our second result concerns the special case where the mean separation is proportional
to $k^{1/2}$, namely $C k^{1/2}$ for a sufficiently small universal constant $C$ (taking
$C=1/3$ suffices for our purposes). For this setting, 
we establish a nearly quadratic tradeoff between the sample complexity 
of the learning problem and the sample complexity of any efficient SQ algorithm for the problem.
Specifically, we show the following: }

\begin{theorem}[\new{Quadratic SQ Lower Bound}, Informal] \label{thm:sqrt-k-informal}
Let $d,k \in \Z_+$ with $d$ sufficiently large and $2 \leq k \ll \log d$. Any
SQ algorithm that correctly distinguishes between 
$\cN(\vec 0, \bI_d)$ and a $k$-GMM on $\R^d$ 
with uniform weights,
common covariance $\vec \Sigma \preceq \vec I_d$ for each component, 
and pairwise mean separation $\Delta \geq \sqrt{k}/\new{3}$, 
either makes $2^{d^{\Omega(1)}}$ statistical queries or requires 
\new{at least one query to $\mathrm{VSTAT}(d^{1.99})$}.
\end{theorem}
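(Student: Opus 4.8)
The plan is to apply the statistical-query dimension machinery for planted-subspace problems (as in \cite{DKS17-sq}), instantiated with a low-dimensional Gaussian-mixture gadget tuned to match exactly three moments of a standard Gaussian. I would set the hidden dimension $m$ with $k=2m$ (odd $k$ needs only a small modification), so $m=\Theta(k)$, and take the gadget $A$ on $\R^m$ to be the uniform mixture of the $k$ Gaussians $\cN(\pm R\,\vec e_i,\sigma^2\vec I_m)$, $i\in[m]$, with e.g.\ $R=\sqrt m/3$ and $\sigma^2=8/9$. Then $A$ has uniform weights $1/k$, common component covariance $\sigma^2\vec I_m\preceq\vec I_m$, minimum pairwise mean separation at least $\sqrt k/3$, mean $\vec 0$, covariance $(R^2/m+\sigma^2)\vec I_m=\vec I_m$, and all odd moments zero by the $\pm$-symmetry; hence $A$ agrees with $\cN(\vec 0,\vec I_m)$ on every moment of degree at most $3$. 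For an $m$-dimensional subspace $\vec U$ of $\R^d$, let $P_{\vec U}$ equal $A$ along $\vec U$ and $\cN(\vec 0,\vec I_{d-m})$ along $\vec U^\perp$; this is a $k$-GMM with uniform weights, common covariance $\vec I_d-(1-\sigma^2)\Pi_{\vec U}\preceq\vec I_d$ having two distinct eigenvalues, and pairwise mean separation $\ge\sqrt k/3$, so every $P_{\vec U}$ belongs to the family in the theorem.

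I would then bound the pairwise $\chi^2$-correlations relative to $\cN(\vec 0,\vec I_d)$. Expanding $A/\cN(\vec 0,\vec I_m)=1+\sum_{|\alpha|\ge 4}\widehat A_\alpha H_\alpha$ in the orthonormal Hermite basis (the degree-$\le3$ coefficients vanish precisely because $A$ matches three moments), the standard correlation-operator estimate gives
\[
\bigl|\chi_{\cN(\vec 0,\vec I_d)}(P_{\vec U},P_{\vec U'})\bigr|\ \le\ \rho^{4}\,\chi^2\bigl(A\,\|\,\cN(\vec 0,\vec I_m)\bigr),
\]
where $\rho$ is the largest cosine of a principal angle between $\vec U$ and $\vec U'$. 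The essential quantitative point is that $\chi^2(A\,\|\,\cN(\vec 0,\vec I_m))\le e^{\Theta(k)}$: the means have $\|\mu_i\|_2^2=R^2=\Theta(m)=\Theta(k)$ (and such growth is in fact forced, since $k$ centered, pairwise $\sqrt k/3$-separated means satisfy $\max_i\|\mu_i\|_2^2=\Omega(k)$ by a short averaging argument), so the one-versus-one Gaussian integrals are of size $e^{\Theta(k)}$. Under the hypothesis $k\ll\log d$ this is $d^{o(1)}$. Choosing $\rho=d^{-1/2+\delta}$ for a sufficiently small constant $\delta>0$ thus gives $\bigl|\chi(P_{\vec U},P_{\vec U'})\bigr|\le d^{-2+4\delta+o(1)}$ and $\chi(P_{\vec U},P_{\vec U})\le e^{\Theta(k)}=d^{o(1)}$.

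For the final step I would take, by a standard probabilistic packing argument, a family $\cC$ of $m$-dimensional subspaces of $\R^d$ with $|\cC|=2^{d^{\Omega(1)}}$ and all pairwise principal-angle cosines at most $\rho=d^{-1/2+\delta}$ (random subspaces work since $m=\Theta(k)\ll\log d$). Plugging $\{P_{\vec U}\}_{\vec U\in\cC}$ and the reference $\cN(\vec 0,\vec I_d)$ into the generic SQ lower bound of \cite{DKS17-sq}, with $\gamma:=\max_{\vec U\ne\vec U'}|\chi(P_{\vec U},P_{\vec U'})|$ and $B:=\max_{\vec U}\chi(P_{\vec U},P_{\vec U})=d^{o(1)}$, one concludes that any SQ algorithm distinguishing $\cN(\vec 0,\vec I_d)$ from a uniformly random $P_{\vec U}$ must make either $\Omega(|\cC|\gamma/B)=2^{d^{\Omega(1)}}$ queries or at least one query to $\mathrm{VSTAT}(\Omega(1/\gamma))$; since $1/\gamma\ge d^{2-4\delta-o(1)}\ge d^{1.99}$ for $\delta$ small and $d$ large, this is $\mathrm{VSTAT}(d^{1.99})$, and since each $P_{\vec U}$ lies in the target family the same dichotomy holds for the problem in the statement.

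I expect the crux to be the balancing of constraints that pins the exponent near $2$. Matching the second moment while keeping the separation at $\sqrt k/3$ forces the hidden dimension up to $m=\Omega(k)$, which forces $\|\mu_i\|_2^2=\Omega(k)$ and hence $\chi^2(A\|\cN(\vec 0,\vec I_m))=e^{\Theta(k)}$ --- this is exactly why the statement needs $k\ll\log d$ and why one gets $d^{1.99}$ rather than $d^{2}$. Conversely, one cannot also match a fourth moment: that would force the $k$ component means to be a degree-$4$ cubature formula for a Gaussian in $\R^m$, hence $k=\Omega(m^2)=\Omega(k^2)$, impossible. So three moments is the ceiling, the correlation bound must carry the factor $\rho^4\approx d^{-2}$, and the lower bound cannot be pushed beyond (essentially) $d^2$. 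Making these impossibility-type facts quantitative, together with the absorption of $e^{\Theta(k)}$ into $d^{o(1)}$ at the stated exponent, is the real work; the remaining steps are routine once the gadget is fixed.
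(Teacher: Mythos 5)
Your proposal is essentially the paper's proof: the gadget is a sign-symmetric mixture supported on $\pm$ standard basis directions in $\R^m$ with $m=k/2$, tuned to match exactly three moments of $\cN(\vec 0,\vec I_m)$; the correlation bound carries the factor $\rho^4$; the $\chi^2$ budget is $e^{\Theta(k)}$, absorbed into $d^{o(1)}$ using $k\ll\log d$; and the final step is a packing of near-orthogonal $m$-dimensional subspaces fed into the generic SQ-dimension machinery. The paper realizes the gadget by applying the Ornstein--Uhlenbeck operator to a discrete distribution on $\{\pm\sqrt{m/2}\,\vec e_i\}$ (Lemma~\ref{lem:first_three_moments}), whereas you write the mixture directly with component variance $\sigma^2=8/9$ and centers $\pm(\sqrt m/3)\vec e_i$; these are the same up to reparameterization, and both are valid.

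The one place you wave at a step the paper treats as genuinely load-bearing is the subspace packing. You invoke ``a standard probabilistic packing argument'' for a family of $2^{d^{\Omega(1)}}$ subspaces with pairwise principal-angle cosines $O(d^{-1/2+\delta})$. The generic fact in the literature (\Cref{fact:setofmatrices}) only gives $\|\vec U\vec V^\top\|_\fr = O(d^{-1/10})$, which would yield $\rho^4\approx d^{-2/5}$ and is far too weak to land at exponent $1.99$. The paper therefore proves \Cref{fact:setofmatrices-improved}, which upgrades this to $\|\vec U\vec V^\top\|_\op\lesssim d^{-1/2+2c}$ when $m\ll d^{c/5}$; the proof goes through grouping near-orthogonal unit vectors into $m\times d$ blocks, orthogonalizing via SVD, and controlling the perturbation with Gershgorin's disc theorem. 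Your claim is true and your intuition (random subspaces, tiny $m$) is the right one, but you should either supply the estimate or explicitly cite that this tighter bound is needed --- it is precisely the reason the exponent does not degrade to something like $2/5$. Your closing paragraph about why four moments cannot be matched is an accurate and helpful heuristic that the paper does not spell out, and it correctly explains why $\rho^4$ is the ceiling for this family.
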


\new{A more detailed formal version is provided in \Cref{thm:epsilon_half}. 
The natural interpretation of the above result is as follows:
any SQ algorithm for this class of instances either uses $\Omega(d^{1.99})$ many samples
or requires at least $2^{d^{\Omega(1)}}$ many statistical queries (time). On the other hand, without
computational constraints, $O(k d)$ samples information-theoretically suffice.}

Using different techniques,~\cite{DDW21} established a low-degree testing lower bound
for the $k=2$ case with constant separation, 
suggesting a sample complexity tradeoff of $\tilde{\Omega}(d^2)$.

\subsection{Overview of Techniques} \label{ssec:techniques}

The best comparison to our results \new{is the prior work} of~\cite{DKS17-sq}. 
Both works prove SQ lower bounds for learning mixtures of separated, common covariance Gaussians. 
The major difference is that the~\cite{DKS17-sq} 
result requires large separation relative 
to the {\em smallest} eigenvalue of the covariance 
(or, more accurately, relative to the quadratic form defined by the inverse covariance matrix), 
while our result requires large separation relative to the {\em largest} eigenvalue. 
As we will see, this seemingly small distinction leads to significant differences.

Underlying both SQ lower bound results is the hidden-direction 
non-Gaussian component analysis construction of~\cite{DKS17-sq} 
(or, in our case, the generalization to hidden {\em subspaces} 
given in~\cite{diakonikolas2021optimality}). 
The high-level idea is that if one can find a distribution $A$ 
(defined in a small number of dimensions)
that matches its first $t$ moments with the standard Gaussian, 
then distinguishing the standard Gaussian from a distribution $D$ 
that behaves like $A$ along a hidden subspace 
and is standard Gaussian in the orthogonal directions 
requires SQ complexity $d^{\Omega(t)}$. \new{This generic result has been
leveraged to establish SQ lower bounds for a wide range of high-dimensional statistical tasks, 
see, e.g.,~\cite{DKS17-sq, DKS19, DKZ20,  GoelGK20, DK20-Massart-hard, diakonikolas2021optimality, DKKTZ21-benign, DKS18-list, DKPPS21, DiakonikolasKKZ20, Chen0L22}. 
The main difficulty in each case is, of course, 
to construct the desired moment-matching distributions.}

\new{In our context,} this means that for either result one needs to exhibit 
a distribution $A$, which is a mixture of $k$ separated Gaussians, 
so that $A$ matches many moments with the standard Gaussian. 
By letting $A$ be a discrete distribution with support size $k$ convolved with a narrow Gaussian, 
it suffices to find a distribution $A'$ supported on $k$ pairwise separated points so that $A'$ 
matches $t$ moments with a standard Gaussian.

At this point, the difference in the underlying separation assumptions becomes critical. 
In the parallel pancakes construction of~\cite{DKS17-sq}, 
one only needs the points 
in the support of $A'$ to have some minimal separation 
so that after convolving with a very narrow Gaussian, 
the resulting components of $A$ are still well separated in total variation distance. 
This \new{fact} allows them to use Gaussian quadrature and construct a {\em one-dimensional} distribution $A'$ 
which matches its first $t = 2k$ moments with $\cN(0, 1)$. This \new{construction} leads 
to an SQ lower bound of $d^{\Omega(k)}$. It should be noted that each 
unknown GMM in this old construction consists of $k$ ``skinny'' Gaussians 
whose mean vectors all lie in the same direction. Moreover, each pair of components 
will have total variation distance very close to $1$ and their 
mean vectors are separated by $\Omega(1/\sqrt{k})$.

In our setting however, we require much stronger separation assumptions. 
In particular, we require that the elements in the support of $A'$ be separated 
by some relatively large separation $\new{\Delta}$ on the order of $k^\eps \gg \sqrt{\log(k)}$. 
Unfortunately, it is provably impossible to find a moment-matching construction 
with this kind of separation in one dimension. Intuitively, this holds because 
the standard Gaussian $G \sim \cN(0,1)$ is highly concentrated about the origin. 
If $A'$ behaves similarly to $G$, it must also have most of its mass near the origin; 
but this is clearly impossible if the points of its support are pairwise separated by $\new{\Delta}$. 
More rigorously, one can show that 
the indicator function of an interval can be reasonably well-approximated 
by a constant-degree polynomial with respect 
to the Gaussian distribution (see, e.g,~\cite{DGJ+:10}). 
This implies that any distribution over $\R$ that matches constantly many moments with $G$ 
must be relatively close to $G$ in Kolmogorov distance, 
which is impossible for any discrete distribution with a widely separated support. 

To circumvent this issue, we instead produce a distribution $A'$ over $\R^m$, 
for some $m$ on the order of $\new{\Delta}^2$ (\Cref{prop:hard_instance}). 
Intuitively, this makes sense because Gaussian random points on $\R^m$ 
have pairwise separation approximately $\sqrt{m}=\new{\Delta}$;
this motivates us to use points drawn from $\cN(\vec 0, \bI_m)$ 
to construct the support of $A'$ (see \Cref{lem:hard-inst}, we will describe the construction in more detail in the next paragraph).
Unfortunately, this choice comes with a tradeoff. 
As the dimension of the space of degree-$t$ polynomials on $\R^m$ is approximately $m^t$, 
we will need the support of $A'$ to be of size roughly $m^t$ 
in order to have enough degrees of freedom to be able to match all of these moments. 
In particular, this means that the parameter $k$ needs to be on the order 
of $\new{\Delta}^{2t}$, 
and since we are considering separation $\new{\Delta} = k^\eps$, 
we need to choose $t$ to be on the order of $1/\eps$. 
Thus, the resulting SQ lower bound will be on the order of 
$d^{\Omega(t)}=d^{\Omega(1/\eps)}$.
Note that we cannot hope to do better, as the algorithms of~\cite{HL18-sos, KSS18-sos}
can be formalized as SQ algorithms with similar complexity.

It remains to explain how to construct $A'$. 
We want a distribution over a small support 
that matches $t$ moments with the standard Gaussian over $\R^m$ 
and also has large pairwise separation of its support points. 
The simple idea behind our construction 
is that picking a uniformly random set of points as our support 
should both ensure the separation 
with high probability, and also produce a set that is well-representative of a Gaussian. 
We achieve this as follows: we pick
an appropriate number of i.i.d.\ Gaussian random points in $\R^m$ and, 
using linear programming duality, show that with high probability there exists 
a moment-matching distribution supported on these points (cf. \Cref{lem:hard-inst}). 

\new{For the case of $\eps=1/2$ (which corresponds to pairwise mean separation of $\sim \sqrt{k}$),  
the above analysis is suboptimal because it shows an SQ lower bound of $d^{\Omega(1/\eps)}$ 
with the constant inside the big-$\Omega$ being rather large. In order to obtain a quadratic SQ lower bound for that case, we instead provide an explicit distribution over $\R^m$ matching three moments with the standard Gaussian (cf. \Cref{sec:sqrt-separation}).}

\section{Preliminaries} \label{sec:prelim}

We record the preliminaries necessary for the main body of this paper. We provide additional background in  \Cref{sec:additional_prelim}.

\subsection{Notation and Hermite Analysis}

\paragraph{Basic Notation}
 We use $\Z_+$ for positive integers and $[n] \eqdef \{1,\ldots,n\}$, $\cS^{d-1}$ for the $d$-dimensional unit sphere, and $\|\bv\|_2$ for the $\ell_2$-norm of a vectors. 
We use $\bI_d$ to denote the $d \times d$ identity matrix. 
For a matrix $\vec A$, we use $\|\vec A\|_\fr$ and $\|\vec A\|_{\op}$ to denote the Frobenius and spectral (or operator) norms respectively. 
We use $\cN(\bm{\mu}, \vec \Sigma)$ to denote the Gaussian with mean $\bm \mu$ and covariance matrix $\vec \Sigma$. For a set $S$, we use $\cU(S)$ for the uniform distribution on $S$. We use $\phi_m(\bx)$ for the pdf of the standard Gaussian in $m$-dimensions $\cN(\vec 0, \bI_m)$, and  $\phi(x)$ the pdf of $\cN(0,1)$. Slightly abusing notation, we will use the same letter for a distribution and its pdf, e.g., we will denote by $P(\bx)$ the pdf of a distribution $P$.\looseness=-1

	\paragraph{Hermite Analysis}  We use $h_k$ for the normalized probabilist's Hermite polynomials, which comprise a complete orthogonal basis of all functions $f:\R \to \R$ with $\E_{x\sim \cN(0,1)}[f^2(x)]< \infty$. When using multi-indices $\vec a \in \Z^d$ as subscripts, we refer to the multivariate Hermite polynomials.

\paragraph{Ornstein-Uhlenbeck Operator}
For a $\rho > 0$, we define the \emph{Gaussian noise} (or \emph{Ornstein-Uhlenbeck}) operator $U_\rho$ as the operator that maps a distribution $F$ on $\R^m$ to the distribution of the random variable $\rho \bx + \sqrt{1-\rho^2}\bz$, where $\bx \sim F$ and $\bz \sim \cN(\vec 0,\vec I_m)$ independently of $\bx$. 
A standard property of the $U_\rho$ operator is that it operates diagonally with respect to Hermite polynomials, i.e., $\E_{\bx \sim U_\rho F}[h_{\vec a}(\bx)] = \rho^{|\vec a|} \E_{\vec x \sim F}[h_{\vec a}(\bx)]$, where $|\vec a|=\sum_{i} a_i$.

\subsection{Background on the Statistical Query Model}\label{sec:sq-background}

We record the definitions from the SQ framework of~\cite{FGR+13} that we will need: We define the \emph{decision problem over distributions} $\mathcal{B}(\D, D)$ to be the hypothesis testing problem of distinguishing between $D$ and a member of the family of distributions $\D$. We define the \emph{pairwise correlation} between two distributions as $\chi_{D}(D_1, D_2) = \int_{\R^d} D_1(\bx) D_2(\x)/D(\bx)\, \d\bx - 1$. We say that a set of $s$ distributions $\mathcal{D} = \{D_1, \ldots , D_s \}$
 is $(\gamma, \beta)$-correlated relative to a distribution $D$
if $|\chi_D(D_i, D_j)| \leq \gamma$ for all $i \neq j$,
and $|\chi_D(D_i, D_j)| \leq \beta$ for $i=j$.

\begin{definition}[Statistical Query Dimension] \label{def:sq-dim}
Let $\beta, \gamma > 0$. Consider a decision problem $\mathcal{B}(\D, D)$,
where $D$ is a fixed distribution and $\D$ is a family of distributions. Define $s$ to be the maximum integer such that there exists a finite set of distributions
$\mathcal{D}_D \subseteq \D$ such that
$\mathcal{D}_D$ is $(\gamma, \beta)$-correlated relative to $D$
and $|\mathcal{D}_D| \geq s.$ The {\em Statistical Query dimension}
with pairwise correlations $(\gamma, \beta)$ of $\mathcal{B}$ is defined as $s$ and denoted as $\mathrm{SD}(\mathcal{B},\gamma,\beta)$.
\end{definition}

\begin{lemma}[Corollary 3.12 in \cite{FGR+13}] \label{lem:sq-from-pairwise}
Let $\mathcal{B}(\D, D)$ be a decision problem. For $\gamma, \beta >0$,
let $s= \mathrm{SD}(\mathcal{B}, \gamma, \beta)$.
For any $\gamma' > 0,$ any SQ algorithm for $\mathcal{B}$ requires queries \new{at least one query to $\mathrm{VSTAT}(1/(3(\gamma+\gamma')))$} or makes at least
$s  \gamma' /(\beta - \gamma)$ queries.
\end{lemma}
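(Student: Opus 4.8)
The statement is precisely Corollary~3.12 of~\cite{FGR+13}, so the plan is simply to invoke that result; nothing new needs to be proved. For orientation — and because we will later instantiate it with concrete parameters — let me recall the shape of the argument behind it, which is the standard ``reference run'' reduction for SQ lower bounds.

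Suppose $\cA$ is an SQ algorithm for $\mathcal B(\D,D)$ using only a $\mathrm{VSTAT}(u)$ oracle with $u<1/(3(\gamma+\gamma'))$; I want to show $\cA$ must make at least $s\gamma'/(\beta-\gamma)$ queries, where $s=\mathrm{SD}(\mathcal B,\gamma,\beta)$. Fix a $(\gamma,\beta)$-correlated family $\mathcal D_D=\{D_1,\dots,D_s\}\subseteq\D$ witnessing this. Run $\cA$ and answer every query $q\colon\R^d\to[0,1]$ with the exact value $\E_D[q]$; this is a valid $\mathrm{VSTAT}(u)$ response whenever the underlying distribution is $D$, so $\cA$ must halt and correctly declare the input to be $D$. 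Let $q_1,\dots,q_Q$ be the queries asked along this run. For each $i\in[s]$, if the values $\E_D[q_1],\dots,\E_D[q_Q]$ were \emph{all} valid $\mathrm{VSTAT}(u)$ responses under input $D_i$ as well, then the oracle could return exactly these values on input $D_i$ and $\cA$ would again output ``$D$'', contradicting correctness since $D_i\in\D$. Hence every $D_i$ is ``exposed'' by some query $q_{j(i)}$, meaning $|\E_{D_i}[q_{j(i)}]-\E_D[q_{j(i)}]|>\tau_{i}$ with $\tau_i$ the $\mathrm{VSTAT}(u)$ tolerance of that query under $D_i$. Therefore $s\le\sum_{j=1}^Q N_j$, where $N_j$ counts the indices exposed by $q_j$, and it suffices to prove the per-query bound $N_j\le(\beta-\gamma)/\gamma'$.

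To bound $N_j$, fix a query $q$, let $p\deff\E_D[q]$, let $B$ be its set of exposed indices, and set $a_i\deff\E_{D_i}[q]-p$ and $\sigma_i\deff\sgn(a_i)$ for $i\in B$. Work in $L^2(D)$ with inner product $\langle f,g\rangle_D=\E_D[fg]$, and consider the witness $w=\sum_{i\in B}\sigma_i\bigl(D_i/D-1\bigr)$. Two computations control $w$: first, $\langle q-p,w\rangle_D=\sum_{i\in B}\sigma_i\bigl(\E_{D_i}[q]-p\bigr)=\sum_{i\in B}|a_i|$, which is large because each $|a_i|>\tau_i\ge 1/u>3(\gamma+\gamma')$; second, $\langle w,w\rangle_D=\sum_{i,i'\in B}\sigma_i\sigma_{i'}\chi_D(D_i,D_{i'})\le |B|\beta+|B|^2\gamma$ by the $(\gamma,\beta)$-correlation hypotheses, while $\langle q-p,q-p\rangle_D=\Var_D[q]$. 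Cauchy--Schwarz applied to $\langle q-p,w\rangle_D$ then yields an inequality of the form $|B|\cdot(\text{tolerance lower bound})\le\sqrt{\Var_D[q]}\cdot\sqrt{|B|\beta+|B|^2\gamma}$, and rearranging it (using $u<1/(3(\gamma+\gamma'))$) gives a bound of the shape $|B|<\beta/\bigl(1/u-\gamma\bigr)$; the constant $3$ in the threshold is calibrated precisely so that this right-hand side is at most $(\beta-\gamma)/\gamma'$. Combined with $s\le Q\cdot\max_j N_j$, this yields $Q\ge s\gamma'/(\beta-\gamma)$, establishing the dichotomy.

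The one genuinely delicate point — the step I expect to be the main obstacle in a fully self-contained proof — is that the $\mathrm{VSTAT}(u)$ tolerance is $\tau_i=\max\{1/u,\sqrt{\Var_{D_i}[q]/u}\}$ rather than the simpler $1/u$, so extracting the clean lower bound on $\sum_{i\in B}|a_i|$ (and the matching upper bound on $\Var_D[q]$) requires controlling $\Var_{D_i}[q]$ and $\Var_D[q]$ against the pairwise correlations, using $q^2\le q$ for $q\in[0,1]$ together with Cauchy--Schwarz against $D_i/D-1$. This bookkeeping is exactly what~\cite{FGR+13} carries out, and since we use the statement only as a black box, I would not reproduce it beyond the citation.
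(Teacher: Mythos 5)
Your proposal is correct and matches the paper exactly: the paper states this lemma purely as a citation to Corollary~3.12 of \cite{FGR+13} and gives no proof of its own, so invoking the reference as a black box is precisely the intended ``proof.'' The sketch of the underlying reference-run argument and the Cauchy--Schwarz bound on exposed indices that you include for orientation is a faithful outline of the argument in~\cite{FGR+13}, and you correctly identify the handling of the $\mathrm{VSTAT}$ tolerance $\tau_i=\max\{1/u,\sqrt{\Var_{D_i}[q]/u}\}$ as the nontrivial bookkeeping step.
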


Our construction will use distributions that coincide with a given distribution $A$ in some subspace, and are standard Gaussians in every orthogonal direction. We need the following result from \cite{diakonikolas2021optimality} that upper bounds the correlation between two such distributions.

\begin{lemma}[Corollary 2.4 in \cite{diakonikolas2021optimality}] \label{lemma:correlation-bound}
Let  $A$ be a distribution over $\R^m$ such that the first $t$ moments of $A$
match the corresponding moments of $\normal(\vec 0,\vec I_m)$.
Let $G(\x){=}A(\x)/\phi_m(\x)$ be the ratio of the corresponding probability density functions.
For matrices $\vec U,  \vec V \in \R^{m\times d}$ such that $\vec U \vec U^\top =  \vec V \vec V^\top = \vec I_m$,
define $P_{A, \vec U}$ and $P_{A, \vec V}$ to be distributions over $\R^d$ with probability density functions
$G(\vec U\x)\phi_d(\bx)$ and $G(\vec V\x)\phi_d(\bx)$, respectively. Then, the following holds: 
$|\chi_{\normal(\vec 0,\vec I_m)}(P_{A, \vec U},P_{A, \vec V})| \leq \|\vec U\vec V^\top\|_\op^{t+1} \chi^2(A,\normal(\vec 0,\vec I_m))$.
\end{lemma}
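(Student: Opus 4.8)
\textbf{Proof proposal for \Cref{lemma:correlation-bound}.}

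The plan is to reduce the bound to a Hermite-coefficient computation. Write $G(\bx) = A(\bx)/\phi_m(\bx)$ and expand $G$ in the (multivariate, normalized) Hermite basis with respect to $\cN(\vec 0,\vec I_m)$: $G = \sum_{\vec a} c_{\vec a} h_{\vec a}$, where $c_{\vec a} = \E_{\bx\sim\cN(\vec 0,\vec I_m)}[G(\bx) h_{\vec a}(\bx)] = \E_{\bx\sim A}[h_{\vec a}(\bx)]$. The moment-matching hypothesis says exactly that $c_{\vec a} = 0$ for all $\vec a$ with $1 \le |\vec a| \le t$; also $c_{\vec 0} = 1$ since $A$ is a probability distribution. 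By Parseval, $\chi^2(A,\cN(\vec 0,\vec I_m)) = \sum_{\vec a} c_{\vec a}^2 - 1 = \sum_{|\vec a| > t} c_{\vec a}^2$.

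Next I would compute the pairwise correlation directly from the definition. By construction the densities of $P_{A,\vec U}$ and $P_{A,\vec V}$ relative to the reference $\cN(\vec 0,\vec I_d)$ are $G(\vec U\bx)$ and $G(\vec V\bx)$, so
\[
\chi_{\cN(\vec 0,\vec I_d)}(P_{A,\vec U}, P_{A,\vec V}) + 1 = \E_{\bx\sim\cN(\vec 0,\vec I_d)}\!\big[ G(\vec U\bx)\, G(\vec V\bx)\big].
\]
Now substitute the Hermite expansions and use the key fact that if $\vec U\vec U^\top = \vec I_m$ then $\vec U\bx \sim \cN(\vec 0,\vec I_m)$ when $\bx\sim\cN(\vec 0,\vec I_d)$, and more generally the pair $(\vec U\bx, \vec V\bx)$ is jointly Gaussian with cross-covariance $\vec U\vec V^\top$. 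The standard multivariate analogue of the Mehler/Hermite product formula gives $\E_{\bx\sim\cN(\vec 0,\vec I_d)}[h_{\vec a}(\vec U\bx)\, h_{\vec b}(\vec V\bx)]$ expressed in terms of the entries of $\vec U\vec V^\top$; the crucial consequence I need is that this quantity vanishes unless $|\vec a| = |\vec b|$, and that when $|\vec a| = |\vec b| = \ell$ the matrix (indexed by multi-indices of weight $\ell$) implementing the map $h_{\vec a}(\vec U\,\cdot\,) \mapsto$ its correlations with $h_{\vec b}(\vec V\,\cdot\,)$ has operator norm at most $\|\vec U\vec V^\top\|_\op^{\ell}$ — this is just the fact that it is (a symmetrization of) the $\ell$-th tensor power of $\vec U\vec V^\top$. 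Granting this, and writing $c^{(\ell)}$ for the vector of coefficients $(c_{\vec a})_{|\vec a|=\ell}$,
\[
\chi_{\cN(\vec 0,\vec I_d)}(P_{A,\vec U}, P_{A,\vec V}) = \sum_{\ell \ge 1} \langle c^{(\ell)}, M_\ell\, c^{(\ell)}\rangle,
\]
with $\|M_\ell\|_\op \le \|\vec U\vec V^\top\|_\op^{\ell}$. Since $c^{(\ell)} = 0$ for $1 \le \ell \le t$, only terms with $\ell \ge t+1$ survive, so $|\chi| \le \sum_{\ell\ge t+1} \|\vec U\vec V^\top\|_\op^{\ell}\,\|c^{(\ell)}\|_2^2 \le \|\vec U\vec V^\top\|_\op^{t+1}\sum_{\ell\ge t+1}\|c^{(\ell)}\|_2^2 = \|\vec U\vec V^\top\|_\op^{t+1}\chi^2(A,\cN(\vec 0,\vec I_m))$, using $\|\vec U\vec V^\top\|_\op \le 1$ (it is a product of matrices with orthonormal rows, hence has singular values in $[0,1]$) to bound $\|\vec U\vec V^\top\|_\op^{\ell} \le \|\vec U\vec V^\top\|_\op^{t+1}$ for $\ell \ge t+1$.

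The main obstacle is the middle step: establishing cleanly that the bilinear form $\E_{\bx}[h_{\vec a}(\vec U\bx) h_{\vec b}(\vec V\bx)]$ is block-diagonal by Hermite degree and that the degree-$\ell$ block is a contraction scaled by $\|\vec U\vec V^\top\|_\op^{\ell}$. The degree-homogeneity follows because $h_{\vec a}(\vec U\bx)$ is a degree-$|\vec a|$ polynomial whose projection onto Hermite-degree $|\vec b|$ components can only be nonzero when $|\vec b| \le |\vec a|$, and a symmetric argument (expanding $h_{\vec b}(\vec V\bx)$) forces equality. The operator-norm bound is morally "Hermite tensors transform by tensor powers of the covariance map," which one can make precise either via the generating-function identity for Hermite polynomials or by appealing to the contractivity of the Ornstein–Uhlenbeck-type operator $f \mapsto \E[f(\vec U\,\cdot\,)\mid \vec V\,\cdot\,]$ together with the fact stated in the preliminaries that such operators act diagonally on Hermite polynomials with eigenvalue $\rho^{|\vec a|}$ when the cross-covariance is $\rho\vec I$ (and by a singular-value decomposition of $\vec U\vec V^\top$ one reduces the general case to this diagonal one). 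Since the statement is quoted verbatim as Corollary 2.4 of \cite{diakonikolas2021optimality}, I would either cite that computation or reproduce this OU-operator argument in an appendix.
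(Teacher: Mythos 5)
This lemma is not proved in the paper --- it is quoted verbatim as Corollary~2.4 of \cite{diakonikolas2021optimality} and used as a black box --- so there is no in-paper proof to compare against. Your reconstruction is correct and is the standard argument: expand $G=A/\phi_m$ in the Hermite basis, observe that moment matching kills the coefficients of degree $1,\dots,t$, note that each $h_{\vec a}(\vec U\bx)$ lies in the degree-$|\vec a|$ Wiener chaos (so cross-terms with $|\vec a|\neq|\vec b|$ vanish by orthogonality of chaoses), and use the SVD of $\vec U\vec V^\top$ together with the Ornstein--Uhlenbeck eigenvalue fact to show the degree-$\ell$ block of the bilinear form has operator norm at most $\|\vec U\vec V^\top\|_\op^{\ell}$, after which the bound follows from $\|\vec U\vec V^\top\|_\op\le 1$ and Parseval. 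One small cosmetic point: the base distribution in the pairwise correlation should be $\cN(\vec 0,\vec I_d)$ rather than $\cN(\vec 0,\vec I_m)$ since $P_{A,\vec U},P_{A,\vec V}$ live on $\R^d$; you use the correct $\vec I_d$, and the paper's $\vec I_m$ is a typo carried over from the source.
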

Note that in the statement above,  $P_{A, \vec V}$ can be rewritten in the following form:
\begin{align}
    P_{A, \vec  V}(\bx) = A(\vec V \bx) \frac{\phi_d(\bx)  }{\phi_m(\vec V \bx)}
    = A(\vec V \bx) (2\pi)^{-\frac{(d-m)}{2}}  e^{-\frac{1}{2}\|\bx - \vec V^\top \vec V \bx\|_2^2}    = A(\vec V \bx) \phi_{d-m}\left( \mathrm{Proj}_{ \mathcal{V}^\perp}(\bx) \right) \;, \raisetag{2.5\baselineskip} \label{eq:hiden-dir}
\end{align}
where $ \mathrm{Proj}_{ \mathcal{V}^\perp}(\bx)=\bx - \vec V^\top \vec V \bx$ is the projection of $\bx$ to the subspace  that is perpendicular to the subspace $\mathcal{V}$ spanned by the rows of $\vec V$. Therefore,  \Cref{eq:hiden-dir} demonstrates that $P_{A, \vec V}$ coincides with the distribution $A$ in the subspace spanned by the rows of $\vec V$ and is standard Gaussian in the orthogonal complement.

\section{\new{Main Result:} Proof of \Cref{thm:main-informal}} \label{sec:main-proof}

In this section, we prove the following more detailed version of our main result (\Cref{thm:main-informal}).
Before moving to the proof, we state the implications of the above to the hardness 
of the corresponding density estimation problem in \Cref{cor:density-estimation-hard}. 

\begin{theorem}[SQ Lower Bound: Hypothesis Testing Hardness] \label{thm:main}
Let  $d,k \in \Z_+,\eps>0$ and $C$ be a sufficiently large absolute constant. 
\new{Assume that $k>(C/\eps)^{1/\eps}$, $d>k^{C\eps}$, and $k^{\eps} > C\sqrt{\log k}$}. 
\new{Consider} the following hypothesis testing problem regarding a distribution $P$ on $\R^d$: 
    \begin{itemize}[leftmargin=*]
        \item(Null Hypothesis) $P = \cN(\vec 0, \bI_d)$.
        \item(Alternative Hypothesis)  $P$ belongs to a family $\cP$, every member of which is a mixture 
        of Gaussians $\sum_{i=1}^k w_i \cN(\bm \mu_i, \vec \Sigma)$ for unknown \new{weights $w_i>0.99/k$},  
        mean vectors with pairwise separation $\|\bm \mu_i-\bm \mu_j\|_2 \geq k^{\eps}$ for all $i \neq j \in [k]$, 
        and common covariance matrix $\vec \Sigma \preceq \bI_d$. 
        \new{Moreover, $\dtv(P,\cN(\vec 0,\bI_d))>0.99$ and $\dtv(P,P')>0.99$ for all distinct $P,P' \in \cP$.}
    \end{itemize}
     Any algorithm with statistical query access to $P$ that distinguishes correctly between the two cases 
     does one of the following:  it performs $2^{d^{\Omega(1)}}$ statistical queries or uses at least 
     \new{one statistical query to $\mathrm{VSTAT}(d^{\Omega(1/\eps)} e^{-O(k^{2\eps})})$.}
    \end{theorem}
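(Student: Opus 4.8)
We follow the hidden‑subspace non‑Gaussian component analysis paradigm, so the heart of the argument is to build a single distribution $A$ on $\R^m$ with $m=\Theta(k^{2\eps})$ such that: (i) $A=\sum_{i=1}^k w_i\,\cN(\bm\mu_i,\vec\Sigma)$ is a $k$‑component Gaussian mixture with $w_i>0.99/k$, common covariance $\vec\Sigma=(1-\rho^2)\bI_m\preceq\bI_m$ for a fixed constant $\rho\in(0,1)$, and pairwise mean separation $\|\bm\mu_i-\bm\mu_j\|_2\ge k^\eps$; (ii) the first $t:=\Theta(1/\eps)$ moments of $A$ agree with those of $\cN(\vec 0,\bI_m)$; and (iii) $\chi^2(A,\cN(\vec 0,\bI_m))\le e^{O(k^{2\eps})}$. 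Granting such an $A$, the theorem follows from standard machinery: fix a family of $s=2^{d^{\Omega(1)}}$ matrices $\vec U_1,\dots,\vec U_s\in\R^{m\times d}$ with $\vec U_i\vec U_i^\top=\bI_m$ and $\|\vec U_i\vec U_j^\top\|_\op\le d^{-1/4}$ for $i\ne j$ (such a near‑orthogonal packing exists since $m=\Theta(k^{2\eps})\le d^{o(1)}$ by $d>k^{C\eps}$), and let $\cP:=\{P_{A,\vec U_i}\}_{i\in[s]}$ with $P_{A,\vec U}$ as in \Cref{lemma:correlation-bound}. By \Cref{eq:hiden-dir} each $P_{A,\vec U_i}$ agrees with $A$ along the rowspan of $\vec U_i$ and is standard Gaussian in the orthogonal complement, hence is a $k$‑GMM on $\R^d$ with weights $w_i>0.99/k$, common covariance $\bI_d-(1-\rho^2)\vec U_i^\top\vec U_i\preceq\bI_d$ (two eigenvalues, $1$ and $\rho^2$), and mean separation $\rho\|\bv_i-\bv_j\|_2\ge k^\eps$; so $\cP$ lies in the alternative‑hypothesis family (the total‑variation clauses are checked below). \Cref{lemma:correlation-bound} gives $|\chi_{\cN(\vec 0,\bI_d)}(P_{A,\vec U_i},P_{A,\vec U_j})|\le d^{-(t+1)/4}\chi^2(A,\cN(\vec 0,\bI_m))=d^{-\Omega(1/\eps)}e^{O(k^{2\eps})}$ for $i\ne j$, and with $\vec U_i=\vec U_j$ it gives $\chi^2(P_{A,\vec U_i},\cN(\vec 0,\bI_d))\le\chi^2(A,\cN(\vec 0,\bI_m))\le e^{O(k^{2\eps})}$. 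Thus $\cP$ is $(\gamma,\beta)$‑correlated relative to $\cN(\vec 0,\bI_d)$ with $\gamma=d^{-\Omega(1/\eps)}e^{O(k^{2\eps})}$ and $\beta=e^{O(k^{2\eps})}$, whence $\mathrm{SD}\ge s=2^{d^{\Omega(1)}}$ (\Cref{def:sq-dim}); applying \Cref{lem:sq-from-pairwise} with $\gamma'=\gamma$ yields the dichotomy, with the query‑count branch surviving because $d^{\Omega(1)}\gg k^{2\eps}$ when $d>k^{C\eps}$ and $C$ is large.

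\textbf{Constructing $A$.} We first produce a finitely supported $A'=\sum_{i=1}^k w_i\delta_{\bv_i}$ matching $t$ moments with $\cN(\vec 0,\bI_m)$, and then set $A:=U_\rho A'=\sum_i w_i\,\cN(\rho\bv_i,(1-\rho^2)\bI_m)$; since $U_\rho$ acts diagonally on Hermite polynomials and fixes $\cN(\vec 0,\bI_m)$, $A$ inherits the moment‑matching property. For the support, draw $\bv_1,\dots,\bv_k$ i.i.d.\ from $\cN(\vec 0,\bI_m)$; concentration gives, with high probability, $\|\bv_i\|_2^2\le 2m$ and $\|\bv_i-\bv_j\|_2^2\ge m$ for all $i\ne j$ (using $m\gg\log k$, guaranteed by $k^\eps\gg\sqrt{\log k}$), so $\|\rho\bv_i-\rho\bv_j\|_2\ge\rho\sqrt m\ge k^\eps$ for an appropriate $m=\Theta(k^{2\eps})$. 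It remains to find weights in the simplex with $\sum_i w_i\Phi(\bv_i)=\vec 0$ and $w_i>0.99/k$, where $\Phi(\bv):=(h_{\vec a}(\bv))_{1\le|\vec a|\le t}\in\R^{D_t}$ and $D_t=\binom{m+t}{t}-1=m^{\Theta(t)}$. We claim that with high probability the convex hull of $\{\Phi(\bv_i)\}_i$ contains a ball $B(\vec 0,r)$ with $r=r(t)>0$: otherwise some unit $\bm\lambda\in\R^{D_t}$ satisfies $\max_i q_{\bm\lambda}(\bv_i)<r$ for $q_{\bm\lambda}:=\langle\bm\lambda,\Phi\rangle$, a mean‑zero variance‑one degree‑$\le t$ polynomial under $\cN(\vec 0,\bI_m)$; anti‑concentration/hypercontractivity gives $\Pr_{\bv}[q_{\bm\lambda}(\bv)\ge 2r]\ge c(t)>0$ for small enough $r$, so a fixed $\bm\lambda$ is bad with probability $\le(1-c(t))^k$, and a union bound over a net of the sphere at scale $r/\max_i\|\Phi(\bv_i)\|_2$ closes the argument provided $k\gtrsim D_t\cdot\poly(t,\log m)=m^{\Theta(t)}$ — which, with $t=\Theta(1/\eps)$ and $m=\Theta(k^{2\eps})$, is exactly what $k>(C/\eps)^{1/\eps}$ buys. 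Finally $\bar\Phi:=\tfrac1k\sum_i\Phi(\bv_i)$ has $\E\bar\Phi=\vec 0$ and $\mathrm{Cov}(\bar\Phi)=\tfrac1k\bI_{D_t}$, so w.h.p.\ $\|\bar\Phi\|_2\le r/99$ (again using $k\gtrsim D_t$); writing $-99\bar\Phi=\sum_i u_i\Phi(\bv_i)$ with $\bu$ in the simplex (possible since $99\|\bar\Phi\|_2\le r$) and setting $w_i:=0.99\cdot\tfrac1k+0.01\,u_i$ gives a moment‑matching distribution with $w_i\ge 0.99/k$.

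\textbf{The $\chi^2$ bound and the total‑variation clauses.} Since $A$ is a mixture of Gaussians, $\chi^2(A,\cN(\vec 0,\bI_m))+1=\int A(\bx)^2/\phi_m(\bx)\,\d\bx=\sum_{i,j}w_iw_j\int\cN(\rho\bv_i,(1-\rho^2)\bI_m)(\bx)\,\cN(\rho\bv_j,(1-\rho^2)\bI_m)(\bx)/\phi_m(\bx)\,\d\bx$, and each summand is an elementary Gaussian integral equal to $e^{\Theta(m)}e^{O(\|\bv_i\|_2^2+\|\bv_j\|_2^2)}$ (the $e^{\Theta(m)}$ coming from the normalizing constants, using $\rho$ bounded away from $0$ and $1$); since $\|\bv_i\|_2^2\le 2m=\Theta(k^{2\eps})$ this yields $\chi^2(A,\cN(\vec 0,\bI_m))\le e^{O(k^{2\eps})}$. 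For the total‑variation requirements on $\cP$: the $k$ components of $A$ are pairwise TV‑separated (the mean separation $\rho\sqrt m$ dominates the coordinate standard deviation $\sqrt{1-\rho^2}$), so $A$ puts almost all of its mass on a union of $k$ small balls carrying $\cN(\vec 0,\bI_m)$‑mass $o(1)$; this event is visible in the rowspan marginal of $P_{A,\vec U_i}$, while near‑orthogonality of the $\vec U_j$'s makes the same marginal of $P_{A,\vec U_j}$ ($j\ne i$) close to $\cN(\vec 0,\bI_m)$, giving $\dtv>0.99$ between any member of $\cP$ and $\cN(\vec 0,\bI_d)$ and between distinct members of $\cP$.

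\textbf{Main obstacle.} The technical crux is the construction of $A'$: exhibiting a $k$‑point, pairwise well‑separated, nearly uniformly weighted support on which some measure simultaneously matches $\Theta(1/\eps)$ moments of a $\Theta(k^{2\eps})$‑dimensional Gaussian. This is where the random choice of support, the anti‑concentration‑plus‑net proof that moment‑matching weights exist, the rebalancing step enforcing $w_i>0.99/k$, and the bookkeeping tying $m$, $t$, $k$ to the hypotheses $k>(C/\eps)^{1/\eps}$ and $k^\eps>C\sqrt{\log k}$ all have to fit together. The $\chi^2$ estimate is a routine (if slightly lengthy) Gaussian computation, and everything downstream of $A$ is an assembly of the black boxes \Cref{lemma:correlation-bound}, \Cref{def:sq-dim} and \Cref{lem:sq-from-pairwise} together with a standard near‑orthogonal subspace packing.
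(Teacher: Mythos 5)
Your proof follows the same overall architecture as the paper: construct a low-dimensional $k$-GMM $A$ on $\R^m$ with $m=\Theta(k^{2\eps})$ matching $t=\Theta(1/\eps)$ moments with $\cN(\vec 0,\bI_m)$, smooth a discrete distribution on random Gaussian support points by the Ornstein--Uhlenbeck operator, embed via the hidden-subspace NGCA construction (\Cref{eq:hiden-dir}, \Cref{lemma:correlation-bound}, \Cref{fact:setofmatrices}), bound $\chi^2(A,\cN(\vec 0,\bI_m))$, and apply \Cref{lem:sq-from-pairwise}. The parameter choices and the final bookkeeping match the paper's.

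The one genuinely different ingredient is how you prove existence of moment-matching weights with the $w_i\ge 0.99/k$ floor. The paper (\Cref{lem:hard-inst}) phrases this as LP feasibility, reduces by duality to ruling out a nonnegative-on-the-sample polynomial certificate with small Gaussian expectation, and contradicts that certificate using an empirical moment-concentration lemma (\Cref{clm:empirical-moments}) together with hypercontractivity. You instead argue \emph{primally}: you show that the convex hull of the Hermite feature vectors $\{\Phi(\bv_i)\}_i\subset\R^{D_t}$ contains a ball of radius $r(t)$ around the origin, via a separating-hyperplane / anti-concentration / $\epsilon$-net argument, and then rebalance the uniform distribution against the (concentrating) empirical centroid $\bar\Phi$ to enforce $w_i\ge 0.99/k$. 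The two arguments are Farkas duals of one another, so the quantitative bottleneck is essentially the same: both ultimately require $k\gtrsim D_t\cdot \poly(t,\log m)$, which $k>(C/\eps)^{1/\eps}$ supplies once the implicit constant in $t=\Theta(1/\eps)$ is small enough. One modest advantage of the paper's route is that its union bound is over the $\le m^{O(t)}$ Hermite basis monomials, which is cheaper than your net over the full $(D_t-1)$-sphere of size roughly $m^{O(tD_t)}$; both succeed, but this is why the paper's exponent ($N=m^{13t}$) is a clean fixed constant while your version has a bit more slack to track. Two small presentational points worth tightening in a full write-up: you should say $1-\rho^2$ is a \emph{sufficiently small} constant (not merely $\rho$ bounded away from $0$ and $1$), since the total-variation bound $\dtv(P_{A,\bU},P_{A,\bV})>0.99$ requires the component variance to be small relative to the mean separation; and your cited packing bound $\|\bU_i\bU_j^\top\|_\op\le d^{-1/4}$ is stronger than what \Cref{fact:setofmatrices} directly gives ($O(d^{-1/10})$ in Frobenius norm), though any $d^{-\Omega(1)}$ bound suffices for the stated $d^{\Omega(1/\eps)}$ conclusion.
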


    \begin{corollary}[SQ Lower Bound: Density Estimation Hardness]\label{cor:density-estimation-hard}
        Under the assumptions of \Cref{thm:main} and the additional assumption $k^{\eps}<\sqrt{\log(d)/(C\eps)}$, 
        let $\cA$ be an SQ algorithm that given access to a mixture of Gaussians 
        $P=\sum_{i=1}^k w_i \cN(\bm \mu_i, \vec \Sigma)$ for some unknown weights $w_i>0.99/k$,  
        mean vectors $\bm \mu_i \in \R^d$ for $i\in [k]$ with pairwise separation 
        $\|\bm \mu_i-\bm \mu_j\|_2 \geq k^{\eps}$ and common covariance matrix $\vec \Sigma \preceq \bI_d$, 
        finds a distribution $Q$ with $\dtv(P,Q)<1/4$. Then $\cA$ necessarily does one of the following: 
        it performs $2^{d^{\Omega(1)}}$ statistical queries or uses at least \new{one statistical query to 
        $\mathrm{VSTAT}(d^{\Omega(1/\eps)} e^{-O(k^{2\eps})})$}.
    \end{corollary}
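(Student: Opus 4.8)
The plan is to reduce the hypothesis-testing problem of \Cref{thm:main} to density estimation, so that an efficient SQ density estimator would yield an efficient SQ tester for that problem, contradicting \Cref{thm:main}. Suppose $\cA$ is an SQ algorithm meeting the hypotheses of the corollary: on any GMM from the stated class (separated means, common covariance $\preceq\bI_d$, weights $>0.99/k$) it outputs a hypothesis $Q$ with $\dtv(P,Q)<1/4$, using $N$ statistical queries each answerable by $\mathrm{VSTAT}(u)$. Note that every member of the family $\cP$ from \Cref{thm:main} belongs to this class, so $\cA$ is guaranteed to produce a good density estimate whenever its input lies in $\cP$. I would build a tester $\cT$ for the ``$\cN(\vec 0,\bI_d)$ versus $\cP$'' problem as follows: given $\mathrm{VSTAT}$ access to the unknown $P$, run $\cA$ on $P$ (forwarding each of $\cA$'s queries to the oracle) and let $Q$ be its output.

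The next step is a single extra statistical query testing whether $Q$ is far from the standard Gaussian. I would let $A^\star\subseteq\R^d$ be a Borel set nearly maximizing $Q(A^\star)-\phi_d(A^\star)$, say with $Q(A^\star)-\phi_d(A^\star)\geq \dtv(Q,\cN(\vec 0,\bI_d))-1/100$, and set $c\deff\phi_d(A^\star)$; since $A^\star$ and $c$ are determined by $Q$ and the \emph{known} distribution $\cN(\vec 0,\bI_d)$, they require no oracle access (it suffices to know $c$ up to additive $1/100$, which is pure computation). The tester then queries $q=\Ind_{A^\star}:\R^d\to\{0,1\}$ at $\mathrm{VSTAT}(O(1))$, receives a value $v$, and outputs ``alternative'' iff $v>c+1/5$. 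Correctness is a short triangle-inequality check. In the null case $\E_P[q]=P(A^\star)=\phi_d(A^\star)=c$ regardless of what $Q$ happened to be --- this is the crucial point that makes the reduction insensitive to $\cA$'s otherwise unspecified behaviour on the out-of-family input $\cN(\vec 0,\bI_d)$ --- and since $q$ is $\{0,1\}$-valued the returned $v$ is within $O(1)$-tolerance of $c$, below the threshold. In the alternative case, $\dtv(Q,\cN(\vec 0,\bI_d))\geq\dtv(P,\cN(\vec 0,\bI_d))-\dtv(P,Q)>0.99-1/4$, so $Q(A^\star)\geq c+0.7$ and hence $\E_P[q]=P(A^\star)\geq Q(A^\star)-\dtv(P,Q)>c+0.45$, putting $v$ safely above $c+1/5$.

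Finally, I would translate the query complexity back: $\cT$ solves the testing problem of \Cref{thm:main} with $N+1$ queries, all answerable by $\mathrm{VSTAT}(\max\{u,O(1)\})$, so by \Cref{thm:main} it must make $2^{d^{\Omega(1)}}$ queries or issue at least one query to $\mathrm{VSTAT}(d^{\Omega(1/\eps)}e^{-O(k^{2\eps})})$; since the single auxiliary query uses only $\mathrm{VSTAT}(O(1))$, the burden falls on $\cA$ itself, giving the claimed lower bound. The only step needing genuine care is this last one: one must verify that the constant-tolerance auxiliary query is dominated by the bound of \Cref{thm:main}, i.e.\ that $d^{\Omega(1/\eps)}e^{-O(k^{2\eps})}$ exceeds the $O(1)$ tolerance parameter --- and this is exactly where the extra hypothesis $k^{\eps}<\sqrt{\log(d)/(C\eps)}$ is used, since it turns $e^{-O(k^{2\eps})}$ into $d^{-O(1/\eps)}$ and hence keeps $d^{\Omega(1/\eps)}e^{-O(k^{2\eps})}=d^{\Omega(1/\eps)}\to\infty$. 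Everything else is the routine Scheffé/triangle-inequality computation outlined above.
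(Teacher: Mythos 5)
Your proof is correct and takes essentially the same route as the paper: the paper simply invokes the standard Scheff\'e-style reduction from hypothesis testing to density estimation (citing Lemma 8.5 of \cite{diakonikolas2020robust}) and checks the condition $\dtv(P,\cN(\vec 0,\bI_d)) > 0.99 > 2(\tau+1/4)$ using the extra hypothesis $k^{\eps}<\sqrt{\log(d)/(C\eps)}$ to control the tolerance, which is exactly the argument you have reconstructed explicitly, including the single auxiliary indicator query on a Scheff\'e set and the observation that its null-case value $\phi_d(A^\star)$ does not depend on what $\cA$ outputs on the out-of-family input $\cN(\vec 0,\bI_d)$.
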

    \begin{proof} 
        The reduction from the hypothesis testing problem of \Cref{thm:main} to the corresponding learning problem is fairly standard, 
        see, e.g., Lemma 8.5 in \cite{diakonikolas2020robust}. To check the applicability of that lemma, 
        we note that $\dtv(P,\cN(\vec 0, \bI_d)) > 0.99 > 2(\tau + 1/4)$, 
        where the inequality uses the assumption $k^{\eps}<\sqrt{\log(d)/(C\eps)}$ 
        for bounding the query tolerance $\tau$ by a constant. \looseness=-1
    \end{proof}

The main ingredient towards proving \Cref{thm:main} is \Cref{prop:hard_instance}, 
which establishes the existence of a low-dimensional spherical $k$-GMM with well-separated means, 
that matches its first $\Omega(1/\eps)$ moments with the standard Gaussian. 
We prove this result in \Cref{sec:hard-instance}. 
In this section, we show how \Cref{thm:main} follows from \Cref{prop:hard_instance}.  
    
\begin{proposition} \label{prop:hard_instance}
Let $\eps > 0$, $d,k \in \Z_+$, $c>0$ be a sufficiently small constant and $C$ be a sufficiently large constant. 
If \new{$k>(C/\eps)^{1/\eps}$, $d>k^{C\eps}$, and $k^{\eps} > C\sqrt{\log k}$}, 
there exists a distribution $A$ over $\R^{m}$ with $m:=k^{2\eps}$ that satisfies the following:
\begin{enumerate}[leftmargin=*, label = (\roman*)]
    \item $A$ is a mixture of $k$ spherical Gaussians in $\R^m$ with variance \new{$\delta = c k^{-2.5/m}$} 
    \blue{in every direction} \new{and minimum mixing weight at least $0.99/k$}. \label{it:GMMness}
    \item $A$ matches its first \new{$t=\Theta(1/\eps)$} moments with $\cN(\mathbf{0},\vec I_m)$.\label{it:moment-matching}
    \item The means $\bm \mu_i,\bm \mu_j$ of any two distinct components have separation $\| \bm \mu_i - \bm \mu_j \|_2  \geq k^\eps$. \label{it:separation}
    \item For every $\bU,\bV \in \R^{m \times d}$ with $\bU\bU^\top = \bV\bV^\top= \bI_d$  and $\|\vec U \vec V^\top \|_\fr = O(d^{-\frac{1}{10}})$, it holds $\dtv(P_{A,\bU}, P_{A,\bV}) > 0.99$. 
    Moreover, for all  $\bV\in \R^{m \times d}$ it holds $\dtv(P_{A,\bV}, \cN(0,\bI_d)) > 0.99$.\looseness=-1\label{it:dtv-separation-or}
    \item $\chi^2(A,\cN(\vec 0,\vec I_m) \leq \delta^{-m/2}e^{O(m)}$. \label{it:chi-square}
\end{enumerate}
\end{proposition}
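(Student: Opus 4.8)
The plan is to obtain $A$ by \emph{Gaussian smoothing} of a discrete distribution. First I would establish the key lemma (\Cref{lem:hard-inst}): for $m=k^{2\eps}$ there exist points $\bv_1,\dots,\bv_k\in\R^m$ and weights $w_1,\dots,w_k>0.99/k$ with $\sum_i w_i=1$ such that $\|\bv_i-\bv_j\|_2\ge 1.1\,k^{\eps}$ and $\|\bv_i\|_2=O(\sqrt m)$ for all $i\neq j$, and the discrete measure $A'=\sum_i w_i\delta_{\bv_i}$ matches the first $t=\Theta(1/\eps)$ moments of $\cN(\vec 0,\bI_m)$ \emph{exactly}. Given this, I set $\delta=c\,k^{-2.5/m}$ for a sufficiently small absolute constant $c$ and define $A$ as the image of $A'$ under the Ornstein--Uhlenbeck operator $U_{\sqrt{1-\delta}}$, i.e.\ $A=\sum_{i=1}^k w_i\,\cN\big(\sqrt{1-\delta}\,\bv_i,\ \delta\bI_m\big)$. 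Items \ref{it:GMMness} and \ref{it:separation} are then immediate ($A$ is a mixture of $k$ spherical Gaussians of variance $\delta$ with minimum weight $>0.99/k$, and its means $\sqrt{1-\delta}\,\bv_i$ are pairwise separated by $\sqrt{1-\delta}\cdot 1.1\,k^{\eps}\ge k^{\eps}$), while item \ref{it:moment-matching} follows since $U_\rho$ acts diagonally on Hermite polynomials, $\E_{U_\rho F}[h_{\vec a}]=\rho^{|\vec a|}\E_F[h_{\vec a}]$: thus for $1\le|\vec a|\le t$ we get $\E_A[h_{\vec a}]=(1-\delta)^{|\vec a|/2}\E_{A'}[h_{\vec a}]=0=\E_{\cN(\vec 0,\bI_m)}[h_{\vec a}]$.

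For \Cref{lem:hard-inst} I would take the $\bv_i$ i.i.d.\ from $\cN(\vec 0,\bI_m)$. The geometric conditions hold with probability $1-k^2e^{-\Omega(m)}=1-o(1)$ by Gaussian norm concentration and a union bound over the $\binom k2$ pairs, using $m=k^{2\eps}\gg\log k$. For the weights, write $w_i=0.99/k+u_i$; the moment-matching requirement becomes the (underdetermined, since $k\gg\binom{m+t}{t}$) linear system $\sum_i u_i=0.01$, $\ \sum_i u_i h_{\vec a}(\bv_i)=-0.99\,\overline h_{\vec a}$ for $1\le|\vec a|\le t$, where $\overline h_{\vec a}=\tfrac1k\sum_i h_{\vec a}(\bv_i)$, and it suffices that its minimum-$\ell_2$-norm solution have all coordinates $\ge 0$. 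Two facts, each w.h.p., drive this: (a) the empirical Hermite Gram matrix $\tfrac1k\sum_i\psi(\bv_i)\psi(\bv_i)^\top$ with $\psi(\bx)=(h_{\vec a}(\bx))_{1\le|\vec a|\le t}$ is $o(1)$-close to $\bI$ in operator norm, via matrix concentration together with the hypercontractive tail bound $\|\psi(\bv_i)\|_2^2\le m^t(\log k)^{O(t)}$; and (b) $\|\overline h\|_2\le m^{t/2}(\log k)^{O(t)}/\sqrt k$. Plugging these into the closed form $u=\tilde M^\top(\tilde M\tilde M^\top)^{-1}b$ for the minimum-norm solution gives $u_i=\tfrac{0.01}{k}\,(1+o(1))>0$, hence $w_i>0.99/k$, provided $k\gg m^{2t}(\log k)^{O(t)}$; with $t=\Theta(1/\eps)$ (small enough implied constant) and $m=k^{2\eps}$ this is guaranteed by $k>(C/\eps)^{1/\eps}$ and $k^{\eps}>C\sqrt{\log k}$. (Equivalently, one can phrase this via LP/Farkas duality: infeasibility would require a degree-$\le t$ polynomial nonnegative at every $\bv_i$ whose empirical average exceeds $\tfrac{100}{99}$ times its Gaussian mean, which (a), (b) and the anti-concentration fact that a degree-$\le t$ polynomial lies below its Gaussian mean with probability $\Omega_t(1)$ rule out.)

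Item \ref{it:chi-square} is then routine: by convexity of $x\mapsto x^2$, $\chi^2(A,\cN(\vec 0,\bI_m))+1\le\sum_i w_i\big(\chi^2(\cN(\sqrt{1-\delta}\,\bv_i,\delta\bI_m),\cN(\vec 0,\bI_m))+1\big)$, and the Gaussian--Gaussian $\chi^2$ has the closed form $\le\delta^{-m/2}e^{O(\|\bv_i\|_2^2)}=\delta^{-m/2}e^{O(m)}$. For item \ref{it:dtv-separation-or}, first $A$ places $\ge0.999$ of its mass on the union $B$ of the $k$ balls of radius $r=O(\sqrt{\delta m})$ about $\sqrt{1-\delta}\,\bv_i$, so $\cN(\vec 0,\bI_m)(B)\le k\,(er^2/m)^{m/2}=(O(c))^{m/2}k^{-1/4}=o(1)$ for $c$ small — this is exactly where the exponent $2.5$ and the small $c$ in $\delta$ are used. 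Hence $\dtv(A,\cN(\vec 0,\bI_m))>0.99$, and since $P_{A,\bV}$ equals $A$ on the row span $\mathcal V$ of $\bV$ and is standard Gaussian on $\mathcal V^\perp$ (cf.\ \Cref{eq:hiden-dir}), also $\dtv(P_{A,\bV},\cN(\vec 0,\bI_d))=\dtv(A,\cN(\vec 0,\bI_m))>0.99$. For the two-subspace bound, $P_{A,\bV}$ puts $\ge0.999$ mass on the event $\{\bV\bx\in B\}$, whereas under $P_{A,\bU}$ this event has probability $o(1)$: one shows the $\mathcal V$-marginal of $P_{A,\bU}$ is within total variation $O(\|\bU\bV^\top\|_{\op}\,m)=o(1)$ of $\cN(\vec 0,\bI_m)$ by a short computation using that, conditioned on its $\mathcal V$-component, $P_{A,\bU}$ is Gaussian on $\mathcal U^\perp$, together with $\|\bU\bV^\top\|_{\fr}=O(d^{-1/10})$ and $d>k^{C\eps}$. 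Thus $\dtv(P_{A,\bU},P_{A,\bV})\ge0.999-o(1)>0.99$.

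The main obstacle is \Cref{lem:hard-inst}: certifying that a random point configuration supports an \emph{exact} moment-matching distribution with every weight $\ge0.99/k$. The subtlety is that perturbing the uniform weights to satisfy the $\binom{m+t}{t}$ moment constraints moves too much mass unless the correction is quantifiably spread out, which forces uniform control — over the whole space of degree-$\le t$ polynomials — of both the concentration of empirical Hermite moments and the anti-concentration of such polynomials under $\cN(\vec 0,\bI_m)$ (so that a polynomial nonnegative on the sample cannot be very negative in expectation); this is where the hypotheses $k>(C/\eps)^{1/\eps}$, $d>k^{C\eps}$, $k^{\eps}>C\sqrt{\log k}$ and the choice $t=\Theta(1/\eps)$ get consumed. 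A clean alternative that sidesteps the weight issue is to keep uniform weights and instead \emph{perturb the points} $\bv_i$ by $o(1)$ via a quantitative Newton / implicit-function argument, using $\tfrac1k\sum_i\langle\nabla h_{\vec a}(\bv_i),\nabla h_{\vec b}(\bv_i)\rangle\approx|\vec a|\,\1[\vec a=\vec b]$; this yields literally uniform weights, as also needed for \Cref{thm:epsilon_half}.
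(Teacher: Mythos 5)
Your proposal is correct and follows the paper's overall blueprint (sample $k$ random points from $\cN(\vec 0,\bI_m)$, find a moment-matching distribution supported on them, then smooth with $U_{\sqrt{1-\delta}}$ with $\delta=ck^{-2.5/m}$), but it differs from the paper's proof in two substantive places. For the existence of the moment-matching weights (the paper's \Cref{lem:hard-inst}), the paper argues by LP duality: it shows that a Farkas certificate would be a degree-$\le t$ polynomial $p$ that is nonnegative on the sample yet has small empirical mean, and rules this out by combining empirical moment concentration (\Cref{clm:empirical-moments}) with the inequality $\E[p]\ge\E[p^2]^{3/2}/\sqrt{\E[p^4]}$ for a nonnegative random variable together with Gaussian hypercontractivity. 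Your primary route instead constructs the weights explicitly as the minimum-$\ell_2$-norm solution $u=\tilde M^\top(\tilde M\tilde M^\top)^{-1}b$ of the perturbation system and verifies $u_i>0$ via operator-norm concentration of the empirical Hermite Gram matrix $\tfrac1k\sum_i\psi(\bv_i)\psi(\bv_i)^\top\approx\bI$ and a bound on $\|\overline h\|_2$; this is morally a quantitative Farkas argument packaged constructively, and it works with the stated parameter constraints (one can even get the Gram-matrix bound by a Frobenius-norm estimate rather than matrix concentration). Your parenthetical Farkas/anti-concentration remark is closer in spirit to the paper's route, though the paper uses the Hölder trick rather than polynomial anti-concentration. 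The second real difference is in item~(iv): the paper's proof rotates coordinates, performs a change of variables whose Jacobian it controls via the determinant bound of \Cref{cl:bound_determinant}, and splits the overlap integral by hand; your argument is cleaner, directly comparing the $\mathcal V$-marginal of $P_{A,\bU}$ to $\cN(\vec 0,\bI_m)$ and using that $P_{A,\bV}$ concentrates on $\{\bV\bx\in B\}$ while $P_{A,\bU}$ gives that event $o(1)$ mass. Your treatment of items (i)--(iii), (v) matches the paper (noting that asking $\|\bv_i-\bv_j\|_2\ge 1.1\,k^\eps$ from the discrete points is a sensible fix to a small slack the paper leaves, where $\rho\,k^\eps>k^\eps/2$ falls a factor of $2$ short of the stated bound). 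Your closing ``clean alternative'' of perturbing the points instead of the weights to get exactly uniform weights is a genuinely different construction that the paper does not use; the paper instead handles the $\eps=1/2$ case by a separate explicit construction (\Cref{lem:first_three_moments}).
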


To prove \Cref{thm:main}, we create a family of distributions of the form of \Cref{eq:hiden-dir} by embedding the $k$-GMM onto many nearly orthogonal subspaces. The resulting distributions in $\R^d$ will be the $k$-GMMs  described in our main theorem's statement. We then use the properties established in \Cref{prop:hard_instance} to argue that this family has a large SQ dimension, making it hard to learn.\looseness=-1

\begin{proof}[Proof of \Cref{thm:main}]
Recall the definition of a \emph{decision problem} over distributions (\Cref{def:decision}). 
Consider the decision problem $\mathcal{B}(\D, D)$, 
where $D = \cN(\vec 0, \bI_d)$ and $\D$ is defined to be the set of distributions 
of the form $P_{A, \bV}$ as in \Cref{eq:hiden-dir}.     
We bound from below the SQ dimension (\Cref{def:sq-dim}) of $\mathcal{B}(\D, D)$. 
Let $S$ be the set from the fact below.
\begin{fact}[See, e.g., Lemma 17 in \cite{diakonikolas2021optimality} ] \label{fact:setofmatrices}
Let $m,d \in \N$ with $m<d^{1/10}$. There exists a set $S$ of $2^{d^{\Omega(1)}}$ matrices 
in $\R^{m \times d}$ such that every $\bU \in S$ satisfies $\bU \bU^\top = \bI_m$ 
and every pair $\bU,\bV \in S$ with $\bU \neq \bV$ satisfies $\|\bU \bV^\top \|_\fr \leq O(d^{-1/10})$.
\end{fact}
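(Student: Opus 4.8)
\textbf{Proof plan for Fact~\ref{fact:setofmatrices}.}
The plan is a standard probabilistic packing argument: sample $N = 2^{d^{\Omega(1)}}$ matrices independently, each uniformly (Haar) distributed on the set of $m\times d$ matrices with orthonormal rows --- concretely, take $\bU^{(i)}$ to be the output of Gram--Schmidt applied to a matrix with i.i.d.\ $\cN(0,1)$ entries --- and show that with positive probability no two of them are ``too correlated.'' Each sampled matrix satisfies $\bU^{(i)}(\bU^{(i)})^\top = \bI_m$ by construction, so the only thing to control is the pairwise Frobenius correlation; once all pairwise bounds hold, distinctness of the $\bU^{(i)}$ is automatic (if $\bU=\bV$ then $\|\bU\bV^\top\|_\fr = \|\bI_m\|_\fr = \sqrt m > d^{-1/10}$), so $|S|=N$.

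First I would reduce the pairwise event to a single inner product. Fix $i\neq j$, write $\bU=\bU^{(i)}$, $\bV=\bU^{(j)}$ with rows $u_1,\dots,u_m$ and $v_1,\dots,v_m$, and note $\|\bU\bV^\top\|_\fr^2 = \sum_{a,b=1}^m \langle u_a,v_b\rangle^2$. Since $m<d^{1/10}$, this is a sum of $m^2<d^{1/5}$ nonnegative terms, so $\|\bU\bV^\top\|_\fr^2 > d^{-1/5}$ forces some term to exceed $d^{-1/5}/m^2 \ge d^{-2/5}$. Hence, by a union bound over the $m^2$ index pairs,
\[
\pr\!\left[\|\bU\bV^\top\|_\fr > d^{-1/10}\right] \le m^2 \max_{a,b}\pr\!\left[\langle u_a,v_b\rangle^2 > d^{-2/5}\right].
\]

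Next I would bound a single inner-product tail. Conditioning on $\bU$ fixes $u_a$ as a unit vector, and by rotational invariance of the Haar measure on the Stiefel manifold the row $v_b$ is marginally uniform on $\cS^{d-1}$; equivalently $\langle u_a,v_b\rangle \stackrel{d}{=} \langle u_a,g\rangle/\|g\|_2$ with $g\sim\cN(\vec 0,\bI_d)$. Using $\langle u_a,g\rangle\sim\cN(0,1)$ together with $\|g\|_2 \ge \sqrt{d/2}$ except with probability $e^{-\Omega(d)}$, one gets the standard spherical concentration bound $\pr[|\langle u_a,v_b\rangle| \ge s] \le 2e^{-\Omega(s^2 d)}$. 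Taking $s^2 = d^{-2/5}$ gives a tail of $e^{-\Omega(d^{3/5})}$, so $\pr[\|\bU\bV^\top\|_\fr > d^{-1/10}] \le m^2 e^{-\Omega(d^{3/5})} \le e^{-d^{\Omega(1)}}$.

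Finally I would choose $N = 2^{\lfloor \sqrt d\rfloor}$ (any $N=2^{d^\alpha}$ with $\alpha<3/5$ works) and union-bound over the $\binom{N}{2}<N^2$ pairs: the total failure probability is at most $N^2 e^{-\Omega(d^{3/5})} = 2^{2\sqrt d - \Omega(d^{3/5})} < 1$ for $d$ large. Thus there is a realization in which all pairwise products satisfy $\|\bU^{(i)}(\bU^{(j)})^\top\|_\fr \le d^{-1/10}$; letting $S$ be this set of matrices completes the proof, with $|S| = N = 2^{d^{\Omega(1)}}$. The only mildly delicate points are checking that the exponents $1/10$ (bound on $m$), $1/10$ (target correlation), and $3/5$ (failure exponent) fit together --- which they do with room to spare --- and invoking the right spherical/Gaussian tail inequality; both are routine, and one could equally run the argument directly with Gaussian matrices via $\bU = (\bG\bG^\top)^{-1/2}\bG$ and Wishart tail bounds.
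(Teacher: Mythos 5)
Your argument is correct: Haar-random Stiefel matrices, the expansion $\|\bU\bV^\top\|_\fr^2=\sum_{a,b}\langle u_a,v_b\rangle^2$ with at most $m^2<d^{1/5}$ terms, the spherical tail bound $e^{-\Omega(s^2d)}$ at $s^2=d^{-2/5}$, and the union bound over $2^{2\sqrt d}$ pairs all fit together with room to spare, and the marginal uniformity of each row of an independent Haar Stiefel matrix is exactly what is needed to reduce to a single sphere--vector inner product. Note, however, that the paper does not prove \Cref{fact:setofmatrices} at all --- it imports it from prior work --- so the only internal point of comparison is the sharpened variant \Cref{fact:setofmatrices-improved} proved in \Cref{sec:appendix-linear-algebra}, and there the route is genuinely different: the authors start from a packing of nearly-orthogonal \emph{unit vectors}, batch them $m$ at a time into matrices $\bB_i$, re-orthonormalize by resetting all singular values to $1$ in the SVD, and control both the perturbation $\|\bA_i-\bB_i\|_\fr$ and the cross terms via Gershgorin's disc theorem (\Cref{fact:Gershgorin}). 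Your direct probabilistic argument is shorter and entirely adequate for the Frobenius bound $O(d^{-1/10})$ asserted in \Cref{fact:setofmatrices}; the batching-plus-SVD construction is what the authors need when they want the near-optimal \emph{operator}-norm bound $\lesssim d^{-1/2+2c}$, which drives the quadratic lower bound of \Cref{thm:epsilon_half} and which a crude $m^2$-term union bound at this exponent would not deliver. In short: same conclusion, different and complementary techniques, with yours being the standard proof of the fact as stated and the paper's appendix argument being the refinement needed elsewhere.
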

Let $\mathcal{D}_D := \{ P_{A, \bV} \}_{\bV \in S}$.  
Using \Cref{fact:setofmatrices} and \Cref{lemma:correlation-bound}, 
we have that for any distinct $\vec V, \vec U \in S$ 
\begin{align}
        |\chi_{\normal(\vec 0,\vec I_m)}(P_{A, \vec U},P_{A, \vec V})| 
        &\leq \left\|\vec U\vec V^\top  \right\|_\op^{t+1} \chi^2(A,\normal(\vec 0,\vec I_m))
        \leq \Omega(d)^{-(t+1)/10} \chi^2(A,\normal(\vec 0,\vec I_m)) \;, \label{eq:sub-optimal}
    \end{align}
    where we used that $\| \vec A \|_\op \leq \|\vec A \|_\fr$ for any matrix $\vec A$.
    On the other hand, when $\vec V = \vec U$, we have that 
    $ |\chi_{\normal(\vec 0,\vec I_m)}(P_{A, \vec U},P_{A, \vec V})| \leq \chi^2(A,\normal(\vec 0,\vec I_m)) $. 
    Thus, the family $\mathcal{D}_D$ is 
    $(\gamma,\beta)$-correlated with $\gamma = \Omega(d)^{-(t+1)/10} \chi^2(A,\normal(\vec 0,\vec I_m)) $ 
    and $\beta = \chi^2(A,\normal(\vec 0,\vec I_m)) $ with respect to $D=\cN(\vec 0, \bI_m)$.
    This means that $\mathrm{SD}(\mathcal{B}(\D, D),\gamma,\beta)\geq \exp({d^{\Omega(1)}})$. 
 
    Recall that $t=\Theta(1/\eps)$. Applying \Cref{lem:sq-from-pairwise} with 
    $\gamma' := \gamma = \Omega(d)^{-(t+1)/10} \chi^2(A,\normal(\vec 0,\vec I_m))$, 
    we obtain that any SQ algorithm for $\mathcal{Z}$ requires at least 
    $\exp(d^{\Omega(1)}) d^{-O(t)} = \exp(d^{\Omega(1)}) d^{-O(1/\eps)} $ calls to 
    \begin{align*}
        \mathrm{VSTAT}\left( d^{\Omega(1/\eps)}/ \chi^2(A,\normal(\vec 0,\vec I_m)) \right) \;.
    \end{align*}   
    Finally, using \Cref{prop:hard_instance}, 
    $\chi^2(A,\normal(\vec 0,\vec I_m)) \leq k^{O(1)}\exp(O(m)) =  k^{O(1)}\exp(O(k^{2\eps})) \leq \exp(O(k^{2\eps}))$, where we also used our assumption that $k^{\eps}$ is much bigger than $\sqrt{\log k}$. 
The number of calls $\exp(d^{\Omega(1)}) d^{-O(1/\eps)}$ mentioned before can be bounded below 
by $\exp(d^{\Omega(1)})$, 
using our assumptions that $d>k^{C\eps}> (C/\eps)^C$. 
This completes the proof of \Cref{thm:main}. 
\end{proof}

\subsection{Moment Matching: Proof of \Cref{prop:hard_instance}}\label{sec:hard-instance}

In \Cref{sec:duality}, we provide the basis for \Cref{prop:hard_instance}, 
which shows the existence of a low-dimensional \emph{discrete} distribution using an LP-duality argument. 
Then, in \Cref{sec:wrap-up}, we complete the proof of \Cref{prop:hard_instance}.

\subsubsection{LP Duality Argument} \label{sec:duality}

We establish the following:

\begin{proposition}\label{lem:hard-inst}
Let $C$ be a sufficiently large absolute constant. 
\new{For any  $m,t \in \Z_+$  with $m>Ct^2$,}  
there exists a discrete distribution $D$ on $\R^m$ with support $\supp(D)$ satisfying the following: 
\begin{enumerate}[label = (\roman*)]
    \item \new{$|\supp(D)| = m^{13t}$},  \label{it:sample-complx}
    \item \new{$D$ gives mass at least $0.99/|\supp(D)|$ to every point in its support,} \label{it:uniformity}
        \item $D$ matches its first $t$ moments with $\cN(\vec 0,\vec I_m)$, 
        i.e., $\E_{\bx \sim D}[p(\bx)] = \E_{\bx \sim \cN(\vec 0, \bI)}[p(\bx)]$, 
        for every polynomial $p: \R^{\new{m}} \to \R$ of degree at most $t$,  \label{it:duality}
        \item $0.9 \sqrt{m} \leq  \|\x\|_2 \leq 1.1 \sqrt{m}$ for all $\x  \in \supp(D)$. \label{it:norm}
    \item for any distinct $\x,\vec y \in \supp(D)$ it holds $\|\vec x - \vec y \|_2 \geq  \sqrt{m}$. \label{it:pairwise-dist}
\end{enumerate}
\end{proposition}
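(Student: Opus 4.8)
The plan is to prove \Cref{lem:hard-inst} via linear programming duality, following the strategy sketched in \Cref{ssec:techniques}: sample many i.i.d.\ points from $\cN(\vec 0, \vec I_m)$, condition on good geometric events (norms concentrated, pairwise separation), and then argue that with high probability there is a moment-matching distribution supported on (a large subset of) these points. Concretely, fix $N := m^{13t}$ and draw $\vec x^{(1)}, \dots, \vec x^{(N)}$ i.i.d.\ from $\cN(\vec 0, \vec I_m)$.

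\textbf{Step 1: Geometric events.} First I would show that with probability at least $1 - o(1)$, all sampled points satisfy $0.9\sqrt m \le \|\vec x^{(i)}\|_2 \le 1.1\sqrt m$ and all pairs satisfy $\|\vec x^{(i)} - \vec x^{(j)}\|_2 \ge \sqrt m$. The norm bound is a standard Gaussian concentration estimate ($\|\vec x\|_2^2$ is a $\chi^2_m$ random variable, concentrated to within $(1\pm o(1))m$ with exponentially small failure probability in $m$, and $m > Ct^2$ is large enough to beat the union bound over $N = m^{13t}$ points once $C$ is large). For the pairwise separation, $\|\vec x^{(i)} - \vec x^{(j)}\|_2^2$ is $2\chi^2_m$-distributed, so it is at least $m$ except with probability $e^{-\Omega(m)}$; a union bound over $\binom N2$ pairs again succeeds since $m \gg t^2 \log m$. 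This gives properties \ref{it:norm} and \ref{it:pairwise-dist} deterministically for the realized point set.

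\textbf{Step 2: Moment matching via LP duality.} This is the main step. I want a probability distribution $D = \sum_i p_i \delta_{\vec x^{(i)}}$ with $p_i \ge 0.99/N$ and $\E_{\vec x \sim D}[h_{\vec a}(\vec x)] = 0$ for every nonzero multi-index $\vec a$ with $|\vec a| \le t$ (matching moments with $\cN(\vec 0, \vec I_m)$ is equivalent to killing all Hermite polynomials of degree $1$ through $t$, of which there are $M := \binom{m+t}{t} - 1 \le m^t$). Writing $q_i := p_i - 1/N$, this asks for $q_i \ge -0.01/N$ (so $p_i \ge 0.99/N$), $\sum_i q_i = 0$, and $\sum_i q_i h_{\vec a}(\vec x^{(i)}) = -\frac1N\sum_i h_{\vec a}(\vec x^{(i)})$ for each $\vec a$. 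By LP duality (Farkas / Gordan), such $q$ fails to exist only if there is a linear functional $\vec\lambda = (\lambda_{\vec a})$ and scalar $\mu$ such that the polynomial $P(\vec x) := \mu + \sum_{\vec a} \lambda_{\vec a} h_{\vec a}(\vec x)$ (degree $\le t$, mean-zero part $\sum_{\vec a}\lambda_{\vec a} h_{\vec a}$) satisfies $P(\vec x^{(i)}) \ge 0$ for all $i$ but $\frac1N\sum_i P(\vec x^{(i)}) < -\frac{0.01}{N}\sum_i |P(\vec x^{(i)})| \cdot(\text{something})$ — more precisely, the obstruction is a $P$ of degree $\le t$ with $\E_{\cN}[P] = \mu$, $P \ge 0$ on the sample, yet empirical average of $P$ bounded away from $\mu$ in the appropriate direction. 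Normalizing, this means: there is a degree-$\le t$ polynomial $P$ with $\E_{\vec x\sim\cN(\vec 0,\vec I_m)}[P(\vec x)] = 1$ (say) such that the empirical mean $\frac1N\sum_i P(\vec x^{(i)})$ deviates from $1$ by more than, roughly, a constant factor, even though $P \ge 0$ on every sampled point. I would rule this out by a uniform (net) concentration argument over the class of degree-$\le t$ polynomials: $\E_{\cN}[P] = 1$ together with hypercontractivity controls $\E_{\cN}[P^2]$ (degree-$t$ polynomials of a Gaussian satisfy $\|P\|_4 \le 3^t \|P\|_2$ etc.), so each fixed such $P$ has empirical mean within $o(1)$ of $1$ except with probability $e^{-N^{\Omega(1)}}$ or at least $e^{-\text{poly}(m^t)}$; taking a $\gamma$-net of the $M+1 \le 2m^t$-dimensional coefficient space (in, say, the $L^2(\cN)$ norm) of size $(\text{poly}/\gamma)^{m^t}$ and using $N = m^{13t} \gg m^t$ makes the union bound go through. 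The key quantitative point is the exponent $13t$: it is chosen so that $N$ dominates both the net cardinality $\exp(O(m^t \log m))$ and the per-point failure probabilities, which is exactly why $|\supp(D)| = m^{13t}$ appears in \ref{it:sample-complx}.

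\textbf{Step 3: Assembling.} Conditioning on the intersection of the event in Step 1 and the event that the LP in Step 2 is feasible (both holding with probability $1 - o(1)$, so their intersection is nonempty), we extract a realized point set and a distribution $D$ on it with $p_i \ge 0.99/N$. Property \ref{it:uniformity} is $p_i \ge 0.99/|\supp(D)|$ — note $|\supp(D)| \le N$ and we may discard any zero-mass points, or simply keep all $N$ points since each has mass $\ge 0.99/N > 0$. Property \ref{it:duality} is the moment matching. Properties \ref{it:norm} and \ref{it:pairwise-dist} come from Step 1. This completes the proof. The main obstacle I anticipate is getting the quantitative bookkeeping in Step 2 exactly right: balancing the net granularity $\gamma$, the hypercontractive moment bounds (which contribute factors like $C^t$), the dimension $M \approx m^t$ of the polynomial space, and the per-point tail bounds, all against the budget $N = m^{13t}$ while only using $m > Ct^2$. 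One has to be careful that the relevant polynomials $P$, being nonnegative on the sample and having bounded $L^2(\cN)$ norm, cannot be wildly large in sup-norm on a typical Gaussian point — this is where the norm-concentration from Step 1 can be fed back in, restricting attention to the ball of radius $1.1\sqrt m$, on which a degree-$t$ polynomial with unit $L^2(\cN)$ norm is bounded by roughly $m^{O(t)}$, keeping the union bound honest.
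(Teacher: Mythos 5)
Your Steps~1 and~3 and the general LP-duality framing of Step~2 match the paper's strategy. The gap is in the second half of Step~2, and it is genuine on two counts.

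First, the normalization. You propose to normalize the dual certificate by $\E_{\bx\sim\cN(\vec 0,\vec I_m)}[P(\bx)]=1$ and then claim that this ``together with hypercontractivity controls $\E_{\cN}[P^2]$.'' It does not: a degree-$t$ polynomial can have $\E_{\cN}[P]=1$ and arbitrarily large $\E_{\cN}[P^2]$ (e.g.\ $P = 1 + Th_1$ for huge $T$). Moreover, the dual certificate is only required to be nonnegative \emph{on the sample}, not on all of $\R^m$, so its Gaussian mean could be tiny or even negative, and dividing by it is not legitimate. The paper instead normalizes by $\E_{\cN}[p^2]=1$, and then the contradiction comes from the H\"older-type inequality $\E_{\cU(S)}[p]\geq \E_{\cU(S)}[p^2]^{3/2}/\sqrt{\E_{\cU(S)}[p^4]}$ (valid precisely because $p\geq 0$ on $S$), with $\E_{\cU(S)}[p^4]$ in turn controlled by transferring Gaussian hypercontractivity to the sample. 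Without that Paley--Zygmund-style step, there is no way to convert ``$p\geq 0$ on $S$, $\E_{\cN}[p^2]=1$'' into a positive lower bound on $\E_{\cU(S)}[p]$, which is the heart of the contradiction.

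Second, the uniformity mechanism. You propose a $\gamma$-net over the $\binom{m+t}{t}$-dimensional space of degree-$\leq t$ polynomials, of cardinality $\exp(\Theta(m^t\log(1/\gamma)))$, and union-bound a per-polynomial concentration event. The issue is quantitative: for a fixed degree-$t$ polynomial normalized to unit $L^2(\cN)$ norm, the empirical-mean concentration you can get from hypercontractive tails is only of sub-Weibull order $2/t$, roughly $\exp\bigl(-c\,(N\eta)^{2/t}\bigr)$. With $N=m^{13t}$ and $\eta$ polynomially small this is $\exp(-\text{poly}(m))$ with exponent \emph{independent of $t$}, which cannot beat a net of size $\exp(\Theta(m^t\log m))$ once $t$ is moderately large --- and the proposition must hold for all $t$ with $m>Ct^2$. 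Saying ``$N=m^{13t}\gg m^t$'' compares the wrong quantities: you need the failure probability, not the number of samples, to dominate the net size. The paper sidesteps the net entirely by working in the Hermite basis: it applies Chebyshev to each of the at most $m^{4t}$ Hermite polynomials $h_{\vec a}$ with $|\vec a|\leq 4t$ (a union over \emph{polynomially} many events, needing only $N$ polynomially larger than $m^{4t}$, which $m^{13t}$ comfortably is), and then transfers uniformly to all degree-$\leq t$ polynomials by expanding in the Hermite basis and bounding coefficients via Parseval ($|a_{\vec J}|\leq\sigma$ when $\E_{\cN}[p^2]=\sigma^2$). This linear-algebraic route is both simpler and what actually makes the parameter budget work.

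So: same high-level plan (random points, LP duality, concentration), but the two substantive ingredients that make the dual infeasibility argument go through --- the $\E_{\cN}[p^2]$-normalization with the H\"older/hypercontractivity contradiction, and the Hermite-basis (rather than net) uniformity --- are both missing, and the net substitute as stated does not survive the quantitative bookkeeping for general $t$.
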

\begin{proof}
Let  a set $S=\{\x_1,\ldots,\x_N\}$ of $N = m^{13t}$ points drawn from $\cN(\vec 0,\vec I_m)$. 
We will show that with non-trivial probability, taking $D$ to be the uniform distribution over $S$ 
satisfies the desired properties.
The proof is based on an LP duality argument.
Proving \Cref{it:uniformity,it:duality} is equivalent to proving that the linear program below (with unknowns $\{ \mu_i\}_{i \in [N]}$) admits a solution. \new{  Let $\alpha:=0.99/N$, the desired lower bound for all weights. The LP is the following:}
\begin{alignat}{2}
    & \text{Find:}& \quad &\mu_1,\ldots,\mu_N\notag\\
   & \text{s.t.: }& \quad & \begin{aligned}[t]
                   \sum_{i \in [n]} \mu_i p(\bx_i)  &= \E_{\bx \sim \cN(\vec 0,\vec I_m)}[p(\bx)] ,& \text{ for any at most $t$-degree polynomial}\; p \\[1ex]   
\mu_i  &\geq \alpha, \quad \text{ for all }i\in [N]
                \end{aligned}\label[lp]{lp:primal}
\end{alignat}
Note that the first constraint for $p$ being the constant polynomial $p = 1$ means that the $\mu_i$'s form a valid distribution.
By standard LP duality, the above is feasible unless there exists a linear combination of constraints that produces the contradiction $0<-1$. 
Concretely, we start by introducing multipliers, also known as dual variables, for each constraint. For the final constraint, these will be some variables  $\beta_i \geq 0$ for $i \in [N]$. Regarding the first constraint,  a multiplier from $\R$ is assigned to every polynomial with a degree of at most $t$. However, since the first constraint applies to all such polynomials and the set is closed under multiplication, these dual variables can be absorbed into the polynomials and will not be explicitly written. After multiplying and summing the constraints, we obtain \looseness=-1 
\begin{align} \label{eq:adding_them_up}
    \sum_{i \in [N]}\mu_i\left( - \beta_i + p(\x_i)  \right) \leq  \E_{\bx \sim \cN(\vec 0,\vec I_m)}\left[p(\x) \right] - \alpha \sum_{i \in [N]} \beta_i  \;.
\end{align}
To derive the dual LP, we set the coefficients of $\mu_i$ equal to zero and ask for the right-hand side of \Cref{eq:adding_them_up} to be negative. This means that the primal  \Cref{lp:primal} is feasible unless \Cref{lp:dual} on the left part below has a solution, where \Cref{lp:dual} is further equivalent to \Cref{lp:dual2} on the right part:\looseness=-1
\vspace{-10pt}
\hspace{-\leftmargin}
\begin{minipage}[t]{0.45\textwidth}
\begin{alignat}{2}
    & \text{Find:}& \quad &\beta_1,\ldots,\beta_N\in \R_+, \notag\\
    &             &       &p\;  \text{at most $t$-degree polynomial}\notag\\
   & \text{s.t.: }& \quad & \hspace{-20pt}\begin{aligned}[t]
                  - \beta_i  + p(\x_i) &= 0, \; \forall i\in[N] \\[1ex]
                 \E_{\bx \sim \cN(\vec 0,\vec I_m)}\left[ p(\bx)\right] &<  \alpha \sum_{i \in [N]} \beta_i
                \end{aligned}\label[lp]{lp:dual}
\end{alignat}
\end{minipage} \hspace{40pt}
\begin{minipage}[t]{0.45\textwidth}
\begin{alignat}{2}
    & \text{Find:}& \quad &p\;  \text{at most $t$-degree polynomial}\notag\\
    &             &       & \notag\\
   & \text{s.t.: }& \quad & \hspace{-30pt}\begin{aligned}[t]
                p(\x_i) &\geq 0,  \; \forall i\in[N]  \\[1ex]
                 \E_{\bx \sim \cN(\vec 0,\vec I_m)}\left[ p(\bx)\right] &< \alpha  \hspace{-1pt} \cdot N  \hspace{-3pt} \cdot \hspace{-10pt} \E_{\bx \sim \cU(S)}[p(\bx)]
                \end{aligned}\label[lp]{lp:dual2}
\end{alignat}
\end{minipage}
\vspace{7pt}

For verifying the equivalence of the two LPs it suffices to note that $\sum_{i \in [N]} \beta_i = \sum_{i \in [N]} p(\bx_i) = N \E_{\bx \sim \cU(S)}[p(\bx)]$. By scaling (homogeneity), we can assume in the above that $\E_{\bx \sim \cN(\vec 0,\vec I_m)}[p^2(\bx)]=1$.
Recall that the points $\bx_1,\ldots,\bx_N$ are samples from $\cN(\vec 0, \vec I_m)$. Since we are proving the proposition via a probabilistic argument, it remains to show that with non-trivial probability 
these points will be such that \Cref{lp:dual2} is infeasible (and thus \Cref{lp:primal} is feasible). 
We prove this by contradiction: Assume that \Cref{lp:dual2} is feasible. Let $\cU(S)$ be the uniform distribution over $S$. We show that, in fact, $\cU(S)$ approximates the first four moments of any polynomial with a degree at most $t$ (the proof is given in \Cref{sec:concentration}). Formally, we show the following:

\begin{restatable}{claim}{MOMENTS}\label{clm:empirical-moments} 
Let a set $S=\{\bx_1,\ldots,\bx_N \}$ of i.i.d.\ samples $\bx_i \sim \cN(\vec 0, \bI_m)$. 
If \new{$N>10 m^{12t}/\eta^2$}, then with probability at least $0.6$, 
for any polynomial $p:\R^m \to\R$ of degree at most $t$ it holds \looseness=-1
    \begin{enumerate}[label = (\roman*)]
    \item $\E_{\x\sim \cU(S)}[p(\bx)] \leq \E_{\bx \sim \cN(\vec 0,\vec I_m)}[p(\bx)] + \eta$,
    \item $\E_{\x\sim \cU(S)}[p^2(\bx)]  \geq \E_{\bx \sim \cN(\vec 0,\vec I_m)}[p^2(\bx)] -  \eta$, and
    \item $\E_{\x\sim \cU(S)}[p^4(\bx)]  \leq \E_{\bx \sim \cN(\vec 0,\vec I_m)}[p^4(\bx)] +  \eta$.
\end{enumerate}
\end{restatable}

For our case, we assumed that \Cref{lp:dual2} is feasible, 
thus $\E_{\bx \sim \cN(\vec 0,\vec I_m)}[p(\bx)] <a N  \E_{\bx \sim \cU(S)}[p(\bx)] $. 
We use \Cref{clm:empirical-moments} with accuracy $\eta= 3^{-t }/200$, 
so the sample complexity from that claim becomes $40000 \cdot 9^t m^{12t}$. 
Since we assumed that $m>6000$, the number of samples that we use 
is $N=m^{13t} > 40000 \cdot 9^t m^{12t}$ 
and thus satisfies the requirement of the claim. 
The claim thus yields 
\begin{align*}
    \E_{\bx \sim \cU(S)}[p(\bx)] \leq  \E_{\bx \sim \cN(\vec 0,\vec I_m)}[p(\bx)] + \eta
    <a N  \E_{\bx \sim \cU(S)}[p(\bx)] + \eta \;,
\end{align*}
which means that
\begin{align}\label{eq:ononehand}
    \E_{\bx \sim \cU(S)}[p(\bx)] <\frac{\eta}{1-a N} \leq \frac{3^{-t }/200}{1-0.99}=  \frac{3^{-t}}{2} \;.
\end{align}
On the other hand, for every $t\geq 1$ we have that
\begin{align}
   \E_{\bx \sim \cU(S)}[p(\bx)] 
   \geq \frac{\E_{\bx \sim \cU(S)}[p^2(\bx)]^{3/2} }{\sqrt{\E_{\bx \sim \cU(S)}[p^4(\bx)]} } 
   \geq \frac{(1-\eta)^{3/2}}{\sqrt{\E_{\bx \sim \cN(\vec 0,\vec I_m)}[p^4(\bx)] + \eta}}
    \geq     \frac{0.7}{\sqrt{3^{2t} + 3^{-t }/2}} \geq \frac{3^{-t}}{2}   \;, \label{eq:oneotherhand}
\end{align}
where the penultimate inequality uses Gaussian hypercontractivity (\Cref{lem:hypercontractivity}). 
Comparing \Cref{eq:ononehand,eq:oneotherhand}  we have obtained a contradiction.

We now show the lower bound of \Cref{it:norm}. 
Using the concentration of the norm of a Gaussian vector (\Cref{lem:norm-conc} with $\beta=\sqrt{m}/10$), 
we have that
\begin{align}\label{eq:badevent1}
\pr_{\bx_1, \ldots, \bx_N \sim \cN(\vec 0, \vec I_m)}[ \exists i : \lvert   \|\bx_i\|_2  - \sqrt{m} \rvert  < 0.1 \sqrt{m}] 
\leq 2Ne^{-m/1600} = 2 m^{13t} e^{-m/1600} <0.1 \;,
\end{align}
where we used that $t< \sqrt{m}/16000 < \frac{m/1600-\ln(20)}{13\ln m}$ for $m>30000$.

Regarding \Cref{it:pairwise-dist}, it is a standard property of the Gaussian all pairs 
of points are nearly-orthogonal with high probability (\Cref{fact:orthogonality} with $\alpha=0.1$),  
\begin{align} \label{eq:badevent2}
    \pr_{\bx_1, \ldots, \bx_N \sim \cN(\vec 0, \vec I_m)}[ \exists i\neq j :  |\langle \bx_i,\bx_j \rangle| > m^{-0.1}] \leq N^2 e^{-m^{0.8}/5} 
    \leq m^{26t} e^{-m^{0.8}/5} < 0.1 \;,
\end{align}
where the last inequality uses that $t< \sqrt{m}/16000 < \frac{m^{0.8}/5-\ln (10) }{26 \ln m}$ for $m>30000$.
Conditioning on the two bad events of \Cref{eq:badevent1,eq:badevent2} not happening, we have that for any distinct $i,j \in [N]$, it holds $\|\bx_i - \bx_j \|_2^2 = \|\bx_i\|_2^2 + \|\bx_j\|_2^2 - 2 \langle \bx_i,\bx_j \rangle \geq 1.62m - 2m^{-0.1} \geq m$, for $m>2$.
\end{proof}

\subsubsection{Proof of \Cref{prop:hard_instance}} \label{sec:wrap-up}
We use the following throughout the proof: 
Let $D$ be the distribution from \Cref{lem:hard-inst} with parameters $m=k^{2\eps}$, 
and $t=1/(26\eps)$ \new{(note that because of our assumption  $k>(C/\eps)^{1/\eps}$ 
the requirement of \Cref{lem:hard-inst} is satisfied and thus the proposition is applicable). }
Let $A = U_\rho(D)$, where $U_\rho$ denotes the Ornstein-Uhlenbeck operator. 
We choose \new{$\rho = \sqrt{1-\delta}$} and $\delta = c k^{-2.5/m}$ 
for a sufficiently small positive constant $c$.
We prove each part of \Cref{prop:hard_instance} separately.

\medskip

\noindent\textbf{Proof of \Cref{it:GMMness}}
The fact that $A$ is a mixture of Gaussians with each component having variance $\delta$ \new{in each direction}
follows immediately by the definition of $A$ as the distribution $D$ after Gaussian smoothing 
via the Ornstein-Uhlenbeck operator with parameter $\rho = \sqrt{1-\delta}$. 
We can also check that the number of components is $k$: 
by \Cref{lem:hard-inst} we have that the number of components is $m^{13 t}$. 
Recall that we have further selected $m=k^{2\eps}$. Thus, the number of components 
is $m^{13 t}=k^{26\eps t}$. This is equal to $k$ by our choice of $t = 1/(26\eps)$. 
The fact that we have mass $0.99/k$ for each Gaussian component 
follows from \Cref{it:uniformity} of \Cref{lem:hard-inst}.

\medskip

\noindent \textbf{Proof of \Cref{it:moment-matching}}
For any $\vec a \in \N^m$ with $|\vec a|\leq t$, we have
\begin{align*}
    \E_{\bx \sim U_\rho (D)}[h_{\vec a}(\bx)] = \rho^{|\vec a|} \E_{\bx \sim D}[h_{\vec a}(\bx)] = \rho^{|\vec a|}\E_{\bx \sim \cN(\vec 0, \vec I_m)}[h_{\vec a}(\bx)]= \E_{\bx \sim \cN(\vec 0, \vec I_m)}[h_{\vec a}(\bx)] \;,
\end{align*}
where the first equality uses \Cref{fact:eigenfunction}, 
the next one uses \Cref{it:duality} of \Cref{lem:hard-inst}, 
and the last one is due to the property of Hermite polynomials 
$\E_{\x \sim \cN(\vec 0, \vec I)}[h_{\vec a}(\x)] = 1$ if $|\vec a|=0$ and zero otherwise.

\medskip

\noindent\textbf{Proof of \Cref{it:separation}}
Using \Cref{it:pairwise-dist}  of \Cref{lem:hard-inst} combined with our choice $m=k^{2\eps}$ 
and the fact that the Ornstein-Uhlenbeck operator scales all the means 
by a factor of $\rho=\sqrt{1-\delta}>1/2$, we will have that the pairwise means separation 
in our construction is at least $\rho k^{\eps} >  k^{\eps}/2$.

\medskip

\noindent\textbf{Proof of \Cref{it:dtv-separation-or}}
We start with some notation. 
Denote by $\mathcal{V}$ the subspace spanned by $\{\bv_1,\ldots,\bv_m \}$, 
and $\mathcal{U} = \mathrm{span}\{\bu_1,\ldots,\bu_m \}$. 
Extend the set $\bv_1,\ldots,\bv_m$ to an orthonormal basis  $\bv_1,\ldots,\bv_m, \bv_{m+1},\ldots,\bv_{2m}$ 
of the vector space spanned by the vectors $\{ \bv_1,\ldots,\bv_m, \bu_1, \ldots, \bu_m\}$. 
Furthermore, let the vectors $\bv_1,\ldots,\bv_{2m}\ldots,\bv_{d}$ be the extension 
to an orthonormal basis of the entire $\R^d$. Consider the matrices 
$\vec R_{\vec V_1}=[\bv_1 \ldots \bv_m]^\top$ 
(note that $\vec R_{\vec V_1}$ coincides with $\bV$ in this notation), 
$\vec R_{\vec V_2}=[\bv_{m+1} \ldots \bv_{2m}]^\top$, and 
$\vec R_{\vec V_3}=[\bv_{2m+1} \ldots \bv_d]^\top$. 
Let $\vec R_{\vec V}=[\vec R_{\vec V_1}^\top \vec R_{\vec V_2}^\top \vec R_{\vec V_3}^\top ]^\top$. 

We also define a similar notation regarding $\vec U$. 
Namely, extend the set $\bu_1,\ldots,\bu_m$ to an orthonormal basis 
$\bu_1,\ldots,\bu_m, \bu_{m+1},\ldots,\bu_{2m}$ of the vector space 
spanned by the vectors $\{ \bv_1,\ldots,\bv_m$, $\bu_1, \ldots, \bu_m\}$. 
Let $\bu_1,\ldots,\bu_{2m}\ldots,\bu_{d}$ be its extension to an orthonormal basis 
of the entire $\R^d$. Define the matrices 
$\vec R_{\vec U_1}=[\bu_1 \ldots \bu_m]^\top$, 
$\vec R_{\vec U_2}=[\bu_{m+1} \ldots \bu_{2m}]^\top$, 
and $\vec R_{\vec U_3}=[\bu_{2m+1} \ldots \bu_d]^\top$. 
Let $\vec R_{\vec U}=[\vec R_{\vec U_1}^\top \vec R_{\vec U_2}^\top \vec R_{\vec U_3}^\top ]^\top$. 
Since  $\vec R_{\vec U_3}$ and $\vec R_{\vec V_3}$ are meant to be orthonormal bases of the same space, 
we pick  $\vec R_{\vec U_3} = \vec R_{\vec V_3}$.

We now focus on our integral:
\begin{align} \label{eq:integral}
    \mathcal{I}_{\vec V,\vec U} &\eqdef\int_{\bz\in \R^{d}} \min \{ P_{A,\vec V}(\bz),  P_{A,\vec U}(\bz)\} \d \bz \;,
\end{align}
where $P_{A,\bV}$ and $P_{A,\bU}$ are defined as in \Cref{eq:hiden-dir} (where recall that $\phi_k$ denotes the pdf of the $k$-dimensional standard Gaussian). Using that definition for $P_{A,\bV}$ and the notation that we introduced earlier, we write 
\begin{align*}
    P_{A,\bV}(\bz) &= A(\vec V\bz) \phi_{d-m}\left( \mathrm{Proj}_{\mathcal{V}^\perp}(\bz) \right) \\
    &= A(\vec V\bz) \phi_{d-m}\left([ \vec R_{\vec V_2}^\top \vec R_{\vec V_3}^\top ]^\top \bz \right) \\
    &= A(\vec V\bz) \phi_{m}\left( \vec R_{\vec V_2}  \bz \right) \phi_{d-2m}\left(  \vec R_{\vec V_3}  \bz  \right),
\end{align*}
where in the last equality we separated the standard Gaussian into two components. Using a similar rewriting for $ P_{A,\bU}(\bz)$ along with $\vec R_{\vec U_3} = \vec R_{\vec V_3}$ (see first paragraphs), our integral becomes
\begin{align*}
    \mathcal{I}_{\vec V,\vec U} = \int_{\bz\in \R^{d}} \min \{ 
    A(\vec V\bz) \phi_{m}\left( \vec R_{\vec V_2}  \bz \right) \phi_{d-2m}\left(  \vec R_{\vec V_3}  \bz  \right),
    A(\vec U\bz) \phi_{m}\left( \vec R_{\vec U_2}  \bz \right) \phi_{d-2m}\left(  \vec R_{\vec V_3}  \bz  \right)
    \} \d \bz \;.
\end{align*}

We rotate the space using the unitary matrix $\vec R_{\vec V}^\top$.  Hence,  \Cref{eq:integral} becomes
\begin{align} 
    \mathcal{I}_{\vec V,\vec U} = \int_{\bz\in \R^{d}} \min \{ &A(\vec V\vec R_{\vec V}^\top \bz) \phi_{m}(\vec R_{\vec V_2} \vec R_{\vec V}^\top \bz) \phi_{d-2m}( \vec R_{\vec V_2} \vec R_{\vec V}^\top \bz) , \notag \\
    &A(\vec U \vec R_{\vec V}^\top \bz) \phi_{m}\left( \vec R_{\vec U_2} \vec R_{\vec V}^\top \bz \right) \phi_{d-2m}\left(  \vec R_{\vec V_3}  \vec R_{\vec V}^\top \bz  \right) \} \d \bz \;. \label{eq:after-rot}
\end{align}
By definition of these matrices, it holds that $\vec V\vec R_{\vec V}^\top = [\bI_{m \times m} \; \vec 0_{m \times (d-m)}]$. Similarly it holds $\vec R_{\vec V_2} \vec R_{\vec V}^\top = [\vec 0_{m \times m}\; \bI_{m \times m} \; \vec 0_{m \times (d-2m)}]$, and $\vec R_{\vec V_3} \vec R_{\vec V}^\top = [\vec 0_{(d-2m) \times 2m} \; \bI_{(d-2m) \; \times (d-2m)}]$. Using the notation $\bx_{1\ldots k} = (x_1,\ldots, x_k)$ to denote the first $k$ coordinates of a vector $\bx \in \R^{d}$ with $d\geq k$, we have that  $\vec V\vec R_{\vec V}^\top \bz = \bz_{1\ldots m}$, and similarly $\vec R_{\vec V_2} \vec R_{\vec V}^\top \bz = \bz_{m+1\ldots 2m}$, $\vec R_{\vec V_3}  \vec R_{\vec V}^\top \bz = \bz_{2m+1\ldots d}$. Using that simplification and renaming $\bx = \bz_{1\ldots m}$, $\by = \bz_{m+1 \ldots 2m}$, $\bw = \bz_{2m+1\ldots d}$, the first part of the min operator in \Cref{eq:after-rot} can be rewritten as  $A(\vec V\vec R_{\vec V}^\top \bz) \phi_{m}(\vec R_{\vec V_2} \vec R_{\vec V}^\top \bz) \phi_{d-2m}( \vec R_{\vec V_2} \vec R_{\vec V}^\top \bz) = A(\bx) \phi_m(\by) \phi_{d-2m}(\bw)$. Using similar reasoning for the second part of the min, we have that
\begin{align} 
    \mathcal{I}_{\vec V,\vec U} &= \int \min \{ A(\bx) \phi_m(\by) \phi_{d-2m}(\bw), \notag \\
   &\quad \quad   A(\vec U \vec R_{\vec V_1}^\top \bx + \vec U \vec R_{\vec V_2}^\top \by) \phi_m(\vec R_{\vec U_2} \vec R_{\vec V_1}^\top \bx + \vec R_{\vec U_2} \vec R_{\vec V_2}^\top \by) \phi_{d-2m}(\bw)
    \} \d \bx \d \by \d \bw  \notag \\
    &= \int_{\bz\in \R^{d}} \min \{ A(\bx) \phi_m(\by),
    A(\vec U \vec R_{\vec V_1}^\top \bx + \vec U \vec R_{\vec V_2}^\top \by) \phi_m(\vec R_{\vec U_2} \vec R_{\vec V_1}^\top \bx + \vec R_{\vec U_2} \vec R_{\vec V_2}^\top \by)
    \} \d \bx \d \by\;, \label{eq:integral2}
\end{align} 
where the last line takes $\phi_{d-2m}(\bw)$ as a common factor and uses that its integral with respect to $\bw$ equals to one.
We now do the following change of integration variables:
\begin{align*}
    \begin{bmatrix}
\bx \\
\bx'\\
\end{bmatrix}
=
    \begin{bmatrix}
\bI & \mathbf{0} \\
\vec U\vec R_{\vec V_1}^\top  &\vec U\vec R_{\vec V_2}^\top  \\
\end{bmatrix}
    \begin{bmatrix}
\bx \\
\by\\
\end{bmatrix}\;.
\end{align*}
The Jacobian of the inverse transformation is $1/\det(\vec U\vec {R_{\vec V}}_2^\top)$, 
where we used the fact that $\det(\vec A^{-1}) = 1/\det(\vec A)$ as well as the fact that 
(due to the identity block of the matrix) the determinant ends up being only that of the bottom right block.

Performing this change of variables in \Cref{eq:integral2} 
and using the pointwise upper bound $\phi_m(\cdot) \leq (2\pi)^{-m/2}\leq 1$, 
we obtain
\begin{align} \label{eq:changed_int}
   \mathcal{I}_{\vec V,\vec U}
    &\leq  \frac{1}{\det(\vec U{\vec R_{\vec V}}_2^\top)} \int_{\bx\in \R^{m}} \int_{\bx'\in \R^{m}}\min \{ A(\bx) , A(\bx')  \} \d \bx \d \bx'\;.
\end{align}
We now claim that this determinant is close to one because $\bV$ and $\bU$ 
are nearly-orthogonal, and thus the singular values of the matrix $\vec U{{\vec R_\vec V}}_2^\top$ 
are all close to one.  The requirement $d> m^C$ below holds by assumption.

\begin{restatable}{claim}{DETERMINANT} \label{cl:bound_determinant}
\new{If $d> m^C$ for a sufficiently large absolute constant $C$, then}
$\det(\vec U{{\vec R_\vec V}}_2^\top) \geq 1/2$.
\end{restatable}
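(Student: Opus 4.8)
The plan is to recognize the $m\times m$ matrix $\vec B:=\vec U\vec R_{\vec V_2}^\top$ appearing in the claim (its $(i,j)$ entry is $\langle\bu_i,\bv_{m+j}\rangle$) as a large, nearly-orthonormal block of an honest orthogonal matrix, and to control $\det\vec B$ through $\det(\vec B\vec B^\top)$. The structural fact I would establish first is the identity $\vec B\vec B^\top=\vec I_m-\vec A\vec A^\top$, where $\vec A:=\vec U\vec V^\top=\vec U\vec R_{\vec V_1}^\top$ has $(i,j)$ entry $\langle\bu_i,\bv_j\rangle$. Indeed, by construction $\bv_1,\dots,\bv_{2m}$ is an orthonormal basis of $W:=\mathrm{span}\{\bv_1,\dots,\bv_m,\bu_1,\dots,\bu_m\}$: the $2m$ vectors are linearly independent because the Gram matrix of $\bv_1,\dots,\bv_m,\bu_1,\dots,\bu_m$ has diagonal blocks equal to $\vec I_m$ and off-diagonal block $\vec U\vec V^\top$, hence is positive definite whenever $\|\vec U\vec V^\top\|_\op<1$. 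So each row $\bu_i$ of $\vec U$ lies in $W$, and Parseval in the orthonormal basis $\{\bv_l\}_{l\le 2m}$ gives $\langle\bu_i,\bu_{i'}\rangle=\sum_{j\le m}\langle\bu_i,\bv_j\rangle\langle\bu_{i'},\bv_j\rangle+\sum_{j\le m}\langle\bu_i,\bv_{m+j}\rangle\langle\bu_{i'},\bv_{m+j}\rangle$. Since $\vec U\vec U^\top=\vec I_m$, the left side equals $(\vec I_m)_{ii'}$ and the right side equals $(\vec A\vec A^\top+\vec B\vec B^\top)_{ii'}$, which is the claimed identity.

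From there it is spectral bookkeeping. Using the hypothesis $\|\vec U\vec V^\top\|_\fr=O(d^{-1/10})$, I would bound $\|\vec A\vec A^\top\|_\op\le\|\vec A\|_\op^2\le\|\vec A\|_\fr^2=O(d^{-1/5})$, so every eigenvalue of $\vec B\vec B^\top=\vec I_m-\vec A\vec A^\top$ lies in $[1-O(d^{-1/5}),1]$, and in particular is positive for $d$ large. Hence $(\det\vec B)^2=\det(\vec B\vec B^\top)\ge(1-O(d^{-1/5}))^m\ge 1-O(m\,d^{-1/5})$ by Bernoulli's inequality. Choosing $C\ge 10$ in the hypothesis $d>m^C$ forces $m\le d^{1/10}$, so $m\,d^{-1/5}=O(d^{-1/10})$, which is below $1/4$ for $d$ sufficiently large (and $d$ is large here, since within the proof of \Cref{prop:hard_instance} we have $d>k^{C\eps}$ with $k$ large). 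Thus $(\det\vec B)^2\ge 3/4$, so $|\det\vec B|\ge 1/2$. Since a change of variables contributes the absolute value of the Jacobian, only $|\det(\vec U\vec R_{\vec V_2}^\top)|$ is relevant in \eqref{eq:changed_int}, so the sign is immaterial (equivalently, one is free to orient $\bv_{m+1},\dots,\bv_{2m}$ so that $\det\vec B\ge 1/2$, which does not affect any other part of the proof, as $\vec R_{\vec V_2}$ otherwise enters only through the rotation-invariant factor $\phi_m(\vec R_{\vec V_2}\bz)$).

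I do not expect a substantive obstacle. The only real content is the identity $\vec B\vec B^\top=\vec I_m-\vec A\vec A^\top$, and the sole point needing a moment's care is that the rows of $\vec U$ lie in $\mathrm{span}\{\bv_1,\dots,\bv_{2m}\}$, which is immediate from the construction of that basis. If one preferred to sidestep even the linear-independence remark, one could work with $r:=\dim W\le 2m$ basis vectors $\bv_1,\dots,\bv_r$ and pad arbitrarily to length $2m$; nothing in the determinant estimate changes.
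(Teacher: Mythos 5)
Your proof is correct and follows essentially the same route as the paper: both establish that the singular values of $\vec U\vec R_{\vec V_2}^\top$ are close to one by exploiting $\vec U\vec U^\top=\vec I_m$ together with the smallness of $\|\vec U\vec V^\top\|_\fr$. Where the paper carries this out as a scalar Frobenius-norm computation ($m=\|\vec U\vec R_{\vec V_1}^\top\|_\fr^2+\|\vec U\vec R_{\vec V_2}^\top\|_\fr^2$, then lower-bounding $\sigma_{\min}$), you package the same information as the matrix identity $\vec B\vec B^\top=\vec I_m-\vec A\vec A^\top$ and read off eigenvalues directly; the two are equivalent, and your version is if anything slightly cleaner. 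You also correctly flag a small gap in the paper's bookkeeping: the product of the singular values only controls $|\det\vec B|$, and since the change of variables in \eqref{eq:changed_int} involves $1/|\det|$ rather than $1/\det$, either absolute values should be inserted or (as you note) one may orient $\bv_{m+1},\dots,\bv_{2m}$ so the determinant is positive; this is a harmless but genuine clarification.
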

\begin{proof}
To prove this claim, we show that \new{the singular values} 
of the matrix $\vec U{{\vec R_\vec V}}_2^\top$ are close to $1$. 
Recall that we have assumed that $\bU\bU^\top = \bV\bV^\top= \bI_d$ 
and $\|\vec U\vec V^\top\|_\fr\lesssim  d^{-1/10}$.
    We have that
    \begin{align*}
        m&=\|\vec U\|_\fr^2 =\|\vec U{{\vec R_\vec V}}^\top\|_\fr^2
        \leq \|\vec U{{\vec R_\vec V}}_1^\top\|_\fr^2 +\|\vec U{{\vec R_\vec V}}_2^\top\|_\fr^2
        \\&= \|\vec U\vec V^\top\|_\fr^2 +\|\vec U{{\vec R_\vec V}}_2^\top\|_\fr^2
         \leq C d^{-1/5}+\|\vec U{{\vec R_\vec V}}_2^\top\|_\fr^2\;,
    \end{align*}
    where $C$ is some absolute positive constant. Hence, we have that 
    $\|\vec U{{\vec R_\vec V}}_2^\top\|_\fr^2 \geq m- C d^{-1/5}$. Moreover, we also have that 
    $\|\vec U{{\vec R_\vec V}}_2^\top\|_\op\leq 1$, \new{which means that the maximum singular 
    value of $\vec U{{\vec R_\vec V}}_2^\top$ is at most $1$}. Assume that the minimum singular value 
    of $\vec U{{\vec R_\vec V}}_2^\top$ is $\sigma_{\mathrm{min}}$. Then, we have that
    \[
m-1+ \sigma_{\mathrm{min}}^2 \geq\|\vec U{{\vec R_\vec V}}_2^\top\|_\fr^2 \geq m- C d^{-1/5}\;.
    \]
    Hence, $\sigma_{\mathrm{min}}^2 \geq 1-Cd^{-1/5}$, and therefore
    all the singular values of $\vec U{{\vec R_\vec V}}_2^\top$ are at least $(1-Cd^{-1/5})^{1/2}$.  
    Therefore, we have
    $\det(\vec U{{\vec R_\vec V}}_2^\top) \geq (1-C  d^{-1/5})^{m/2}\geq 1-C (m/2)d^{-1/5}\geq 1/2$ 
    \new{for $d>(Cm)^5$} (which is true by assumption). 
    This completes the proof of \Cref{cl:bound_determinant}.
\end{proof}

 We are now ready to further bound our integral $\mathcal{I}_{\vec V,\vec U}$. First, by writing the distribution $A$ as a mixture $\sum_{ i \in [k]}\lambda_i A_i(\bx)$, we can break $\mathcal{I}_{\vec V,\vec U}$ into contributions from every pair of components. We have the following series of inequalities (see below for step-by-step explanations):
\begin{align}
     \mathcal{I}_{\vec V,\vec U}
    &\lesssim \iint_{\bx, \bx' \in \R^m} \min \{ A(\bx) , A(\bx') \} \d \bx \d \bx' \notag \\
    &=\iint_{\bx, \bx' \in \R^m} \min \left\{ \sum_{i \in [k]} \lambda_i A_i(\bx) , \sum_{j \in [k]} \lambda_j A_j(\bx') \right\} \d \bx \d \bx' \notag \\
    &\leq \sum_{i,j \in [k]} \iint_{\bx, \bx' \in \R^m} \min \{ \lambda_i A_i(\bx) ,  \lambda_j A_j(\bx') \} \d \bx \d \bx'  \label{eq:stepp1} \\
    &\leq \sum_{i,j \in [k]} \iint_{\bx, \bx' \in \R^m} \max\{\lambda_i,\lambda_j \} \min \{  A_i(\bx) ,  A_j(\bx') \} \d \bx \d \bx' \notag \\
    &\leq \sum_{i,j \in [k]} \iint_{\bx, \bx' \in \R^m}\hspace{-30pt} \lambda_i \min \{  A_i(\bx) ,  A_j(\bx') \} \d \bx \d \bx' + \hspace{-5pt} \sum_{i,j \in [k]} \iint_{\bx, \bx' \in \R^m}\hspace{-30pt} \lambda_j \min \{  A_i(\bx) ,  A_j(\bx') \} \d \bx \d \bx'  \label{eq:stepp2} \\
    &= k \hspace{-5pt} \sum_{i,j \in [k]} \iint_{\bx, \bx' \in \R^m} \hspace{-30pt}(\lambda_i/k) \min \{  A_i(\bx) ,  A_j(\bx') \} \d \bx \d \bx'\hspace{-4pt} + \hspace{-7pt}\sum_{i,j \in [k]} \iint_{\bx, \bx' \in \R^m}  \hspace{-30pt}(\lambda_j/k) \min \{  A_i(\bx) ,  A_j(\bx') \} \d \bx \d \bx'  \notag \\
    &\leq 2 k \max_{i,j \in [k]} \iint_{\bx, \bx' \in \R^m} \min \{  A_i(\bx) ,  A_j(\bx') \}\d \bx \d \bx' \;, \label{eq:interm1}
 \end{align} 
 where \Cref{eq:stepp1} uses that $\min(a+b,c) \leq \min(a,c) + \min(b,c)$, \Cref{eq:stepp2} 
 uses that $\max(a,b) \leq a + b$, and 
 for the last step one can view the double summation in the first term 
 of the penultimate line as an expectation over the random choice of the indices $i,j$ 
 according to the distribution that selects $j$ uniformly at random from $[k]$ 
 and makes $i$ equal to $\ell$ with probability $\lambda_\ell$. 
 A similar argument can be used for the second term of the penultimate line. 
 Since the expectation is always smaller than the maximum value, the last line follows.

Recall that each component $A_i$ of the mixture distribution $A$ 
is by definition Gaussian with variance \new{$\delta := c k^{-2.5/m}$} in all directions. 
Let \new{$R := C' \sqrt{\delta m \log(1/\delta)}$} for a sufficiently large constant $C'$ 
so that: $\Pr_{\bz \sim \cN(0,2\delta \bI_m)}[\|\bz\|_2 > R] \leq \delta$. 
This can be seen as follows:
 \begin{align}
    \Pr_{\bz \sim \cN(0,2\delta \bI_m)}[\|\bz\|_2 > R] 
    &= \Pr_{\bz \sim \cN(0,2\delta \bI_m)}[\|\bz\|_2 > C' \sqrt{ \delta m \log(1/\delta)}] \notag \\
&\leq \Pr_{\bz \sim \cN(0,2\delta \bI_m)}[\|\bz\|_2- \sqrt{\delta m} > (C'/2)\sqrt{ \delta \log(1/\delta)}] \label{eq:step2} \\
    &\leq 2\exp\left( - \frac{(C'/2)^2 \delta \log(1/\delta)}{ 32\delta}  \right) 
     \leq \delta\;,  \label{eq:gaussian-conc}
\end{align}
where \Cref{eq:step2} uses the fact that 
$C' \sqrt{ \delta m \log(1/\delta)} -\sqrt{\delta m} = \sqrt{\delta m}(C' \sqrt{\log(1/\delta)}-1) \geq (C'/2)\sqrt{ \delta m \log(1/\delta)} \geq (C'/2)\sqrt{ \delta \log(1/\delta)}$ 
with the penultimate step being true because $C'$ large enough and $\delta<0.1$.  
The last step in \Cref{eq:gaussian-conc} uses \Cref{lem:norm-conc} 
with \new{$\beta = (C'/2) \sqrt{ \delta \log(1/\delta)}$.}

We can thus break the integral appearing in \Cref{eq:interm1} into parts 
based on whether $\bx$ and $\bx'$ fall within or outside a ball of radius $R$
around the mean of the component (recall that $R$ is the radius used in \Cref{eq:gaussian-conc}). 
For each individual integral, we will use \Cref{eq:gaussian-conc} to bound the mass 
of the distribution outside of the ball and bound the mass inside the ball by the volume of that ball. 
Then, by bounding above that volume and after some algebra, 
we can bound all terms by the following (the calculations are deferred to \Cref{sec:appendix_dtv}):

 \begin{restatable}{claim}{CASEANALYSIS}
 $\mathcal{I}_{\vec V,\vec U} \leq C^{m} k \delta^{0.4 m}$ for a sufficiently large absolute constant $C$.
 \end{restatable}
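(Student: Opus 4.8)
The plan is to reduce to a single pairwise Gaussian overlap and then bound it in essentially one line. \Cref{eq:interm1} already gives $\mathcal{I}_{\vec V,\vec U}\lesssim k\,\max_{i,j\in[k]}I_{ij}$, where $I_{ij}\eqdef\iint_{\bx,\bx'\in\R^m}\min\{A_i(\bx),A_j(\bx')\}\,\d\bx\,\d\bx'$ and $A_1,\dots,A_k$ are the components of $A$. Recall that each $A_i=\cN(\bm\mu_i,\delta\bI_m)$ with $\delta=ck^{-2.5/m}$, so all components share the covariance $\delta\bI_m$; hence $\sup_{\bx}A_i(\bx)=(2\pi\delta)^{-m/2}$, and by a translation-invariant Gaussian computation $\int_{\R^m}\sqrt{A_i(\bx)}\,\d\bx=(8\pi\delta)^{m/4}$ for every $i$. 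It then remains to bound $I_{ij}$ by $C^m\delta^{0.4m}$ uniformly in $i,j$.

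The cleanest finish uses the elementary inequality $\min\{a,b\}\le\sqrt{ab}$ for $a,b\ge0$: it gives $\min\{A_i(\bx),A_j(\bx')\}\le\sqrt{A_i(\bx)}\,\sqrt{A_j(\bx')}$, so by Fubini the double integral factorizes, $I_{ij}\le\bigl(\int_{\R^m}\sqrt{A_i}\bigr)\bigl(\int_{\R^m}\sqrt{A_j}\bigr)=(8\pi\delta)^{m/2}$. Since $\delta<1$ we have $(8\pi\delta)^{m/2}=(8\pi)^{m/2}\delta^{m/2}\le(8\pi)^{m/2}\delta^{0.4m}$, and plugging into $\mathcal{I}_{\vec V,\vec U}\lesssim k\max_{i,j}I_{ij}$ yields $\mathcal{I}_{\vec V,\vec U}\le C^m k\,\delta^{0.4m}$ for a sufficiently large absolute constant $C$. (The only reason to state the exponent as $0.4$ rather than $1/2$ is to absorb slack; this argument in fact gives the stronger $\delta^{m/2}$.)

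For completeness, and to match the region decomposition sketched just before the claim, one can instead split $I_{ij}$ into four pieces according to whether $\bx\in B_R(\bm\mu_i)$ and whether $\bx'\in B_R(\bm\mu_j)$, with $R=C'\sqrt{\delta m\log(1/\delta)}$ the radius of \Cref{eq:gaussian-conc}. On the piece where both lie inside their balls, bound the integrand by $\sup A_i=(2\pi\delta)^{-m/2}$ and the region by $V_m(R)^2$, using $V_m(R)\le(2\pi e R^2/m)^{m/2}=(2\pi e(C')^2\delta\log(1/\delta))^{m/2}$ (Stirling). On a piece where exactly one variable, say $\bx'$, is outside its ball, bound $\min\{A_i(\bx),A_j(\bx')\}\le A_j(\bx')$ and use $\int_{\|\bx'-\bm\mu_j\|_2>R}A_j(\bx')\,\d\bx'\le\delta$ from \Cref{eq:gaussian-conc} (the component covariance satisfies $\delta\bI_m\preceq 2\delta\bI_m$, so concentration only improves), leaving the factor $V_m(R)$ from $\bx$. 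On the piece where both are outside, revert to $\min\le\sqrt{A_iA_j}$ and bound each tail $\int_{\|\bx-\bm\mu_i\|_2>R}\sqrt{A_i}\,\d\bx$ via \Cref{eq:gaussian-conc}. Each of the four terms is then at most $C_1^m\delta^{m/2}$ for an absolute constant $C_1$, and summing gives the claim.

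I do not expect a genuine obstacle: the statement is a routine estimate. The one point requiring care in the region-splitting route is that the ball-volume bound carries a factor $(\log(1/\delta))^{O(m)}$, which must be absorbed into an absolute constant to the power $m$; this is legitimate because $\delta=ck^{-2.5/m}=ck^{-2.5/k^{2\eps}}$ and the hypothesis $k^{\eps}\gg\sqrt{\log k}$ forces $2.5(\log k)/k^{2\eps}\to0$, so $\delta\in[c/2,c]$ is pinned to a small absolute constant and $\log(1/\delta)=\Theta(\log(1/c))=O(1)$. The Cauchy–Schwarz route avoids even this, which is why I would present it as the main argument.
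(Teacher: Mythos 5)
Your proposal is correct, and your primary argument is both simpler and stronger than the paper's. The paper's proof of this claim splits $I_{ij}:=\iint\min\{A_i(\bx),A_j(\bx')\}\,\d\bx\,\d\bx'$ into four pieces $\cI_1,\dots,\cI_4$ depending on whether $\bx,\bx'$ fall inside or outside a ball of radius $R=C'\sqrt{\delta m\log(1/\delta)}$ around the respective component means, and handles each piece separately (using $\min\le\sqrt{ab}$ locally, Gaussian tail bounds, and the volume of the $m$-ball). The price of that route is a $(\log(1/\delta))^{m}$ factor coming from the ball volume, which forces the final exponent down from $m/2$ to $0.4m$. Your global Cauchy--Schwarz argument sidesteps all of this: $\min\{a,b\}\le\sqrt{ab}$ applied to the whole double integral immediately factorizes it, $I_{ij}\le\bigl(\int\sqrt{A_i}\bigr)\bigl(\int\sqrt{A_j}\bigr)=(8\pi\delta)^{m/2}$, a closed-form Gaussian computation with no region decomposition and no $\log(1/\delta)$ slack. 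Combined with \Cref{eq:interm1} and $\delta<1$ this gives $\mathcal{I}_{\vec V,\vec U}\lesssim k(8\pi)^{m/2}\delta^{m/2}\le C^m k\delta^{0.4m}$, strictly improving on the paper's bound. Your secondary region-splitting variant is also sound (bounding $\min\{A_i,A_j\}\le A_j$ on the one-outside pieces, rather than the paper's use of $\min\le\sqrt{ab}$ there, is a legitimate alternative since the component variance $\delta$ is smaller than the $2\delta$ in \Cref{eq:gaussian-conc}), but it is unnecessary given that the one-line argument already closes the claim. Your observation that $\delta$ is pinned to an absolute constant under the hypotheses of \Cref{prop:hard_instance} is accurate, though again the Cauchy--Schwarz route does not rely on it.
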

The total variation distance is thus 
$\dtv(P_{A,\bU}, P_{A,\bV}) =  1 - \int_{z \in \R^d} \min \{ P_{A,\vec V}(\bz),  P_{A,\vec V}(\bz)\} \d \bz \geq 1-C^m k \delta^{0.4 m} \geq \new{0.99}$,
where the last step uses that $\delta = ck^{-2.5/m}$ for an appropriately small constant $c>0$. 
The remaining part of the claim that $\dtv(P_{A,\bV}, \cN(\vec 0,\bI_d)) > \new{0.99}$ 
can be handled with similar arguments, and is deferred to \Cref{claim:tv-vound} in the Appendix. 

\medskip

\noindent\textbf{Proof of \Cref{it:chi-square}}
We first focus on a single component $A_i$, which is a spherical Gaussian with mean 
$\bm \mu_i = (\mu_{i,1}, \ldots, \mu_{i,m})$ and variance $\delta$ \new{in each direction}. 
Because both $A_i$ and the standard Gaussian are product distributions in $m$ dimensions, 
the integral in the definition of the $\chi^2(A_i,\cN(\vec 0,\bI_m)$ is separable and we can use  \Cref{fact:chi_square_for_gaussians3} for each coordinate. Concretely, let $\phi$ denote the pdf of $\cN( 0,1)$:
\begin{align*}
        1+\chi^2(A_i,\cN(0,\bI_m)) &= \int_{\bx \in \R^m} \frac{A_i^2(\bx)}{\phi(x_1) \cdots \phi(x_m)} \d \bx 
        =  \prod_{j=1}^m \int_{x_j \in \R} \frac{  \frac{1}{2\pi \delta} \exp\left({-\frac{(x_j - \mu_{i,j})^2}{\delta}}\right)      }{\phi(x_j)} \d x_j\\
        &= \prod_{j=1}^m  (1+\chi^2(\cN(\mu_{i,j},\delta),\cN(0,1))
        = \frac{1}{(\delta(2-\delta))^{m/2}}\exp\left( \frac{\| \bm \mu_i  \|_2^2}{2-\delta} \right) \\
        &\leq \delta^{-m/2} e^{1.21m} \;,
    \end{align*}
    where the last line uses that $\delta<1$ and $\| \bm \mu_i  \|_2 \leq 1.1\sqrt{m}$  by \Cref{it:norm} of \Cref{prop:hard_instance}.
    Denote by $w_i$ the weights in the mixture $A = \sum_{i=1}^k w_i A_i$. Also, by using $\phi_m(\bx)$ to denote the pdf of $\cN(\vec 0,\bI_m)$ we have that
\begin{align*}
        1+\chi^2(A,\cN(\vec 0,\bI_m)) &= \sum_{i=1}^k \sum_{j=1}^k w_i w_j \int_{\bx \in \R^m} \frac{A_i(\bx)A_j(\bx)}{\phi_m(\bx)} \d\bx\\
        &\leq \sum_{i=1}^k \sum_{j=1}^k w_i w_j \sqrt{\int_{\bx \in \R^m} \frac{A_i(\bx)^2 }{\phi(\bx)} \d\bx \int_{\bx \in \R^m} \frac{A_j(\bx)^2 }{\phi_m(\bx)} \d\bx  } \\
        &= \sum_{i=1}^k \sum_{j=1}^k w_i w_j \sqrt{\left(1+\chi^2(A_i,\cN(0,\bI_m)) \right)\left(1+\chi^2(A_j,\cN(0,\bI_m)) \right)} \\
        &\leq \delta^{-m/2} e^{1.21m} \sum_{i=1}^k \sum_{j=1}^k w_i w_j =  \delta^{-m/2} e^{1.21m}  \;,
    \end{align*}
    where the second line uses the Cauchy-Schwartz inequality, and the last line uses the upper bound for $1+\chi^2(A_i,\cN(\vec 0,\bI_m)) $ that we showed in the beginning.
    This completes the proof.

\section{\new{Beating Separation of $\Omega(\sqrt{k})$ : Proof of Theorem~\ref{thm:sqrt-k-informal}}}\label{sec:sqrt-separation}

\new{In this section, we prove the following result which is the formal version of 
Theorem~\ref{thm:sqrt-k-informal}.}

\begin{theorem}[\new{Quadratic SQ Lower Bound for Separation $\sim k^{1/2}$}]\label{thm:epsilon_half}
Let $C>0$ be a sufficiently large absolute constant. 
Let  $d,k \in \Z_+$ and $c\in (0,2/9)$ with \new{$d>(1/c)^{C/c}$}, \new{$2 \leq k  \leq (c/C)\log d$ }. 
\new{Consider} the following hypothesis testing problem regarding a distribution $P$ on $\R^d$: 
    \begin{itemize}[leftmargin=*]
        \item(Null Hypothesis) $P = \cN(\vec 0, \bI_d)$.
        \item(Alternative Hypothesis)  $P$ belongs to a family $\cP$, every member of which is a mixture 
        of Gaussians $\sum_{i=1}^k w_i \cN(\bm \mu_i, \vec \Sigma)$ \new{ with uniform weights $w_i=1/k$},  
        mean vectors with pairwise separation $\|\bm \mu_i-\bm \mu_j\|_2 \geq \new{\sqrt{k}/3}$
         for all $i \neq j \in [k]$, and common covariance matrix $\vec \Sigma \preceq \bI_d$. 
         Moreover, $\dtv(P,\cN(\vec 0,\bI_d))>0.99$ and $\dtv(P,P')>0.99$ for all distinct $P,P' \in \cP$.
    \end{itemize}
     Any algorithm with statistical query access to $P$ that distinguishes correctly between the two cases, does one of the following: it performs $2^{\Omega(d^{2c})}$ statistical queries, 
     or it uses at least \new{one statistical query to $\mathrm{VSTAT}(\Omega(d^{2-9c}))$.}
\end{theorem}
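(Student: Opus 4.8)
The plan is to follow the same architecture as the proof of \Cref{thm:main} in \Cref{sec:main-proof} — build a low-dimensional hard instance, embed it into many nearly-orthogonal hidden subspaces of $\R^d$, lower bound the SQ dimension of the resulting testing problem, and invoke \Cref{lem:sq-from-pairwise} — but with two changes tuned to the regime $\eps = 1/2$. First, instead of the LP-duality construction of \Cref{lem:hard-inst}, I would use an \emph{explicit} distribution in dimension $m = \Theta(k)$ that matches only its first $t = 3$ moments with $\cN(\vec 0,\vec I_m)$; this is the minimal amount of moment matching that still makes the hidden-direction argument go through with a $\|\vec U\vec V^\top\|_{\op}^{t+1}=\|\vec U\vec V^\top\|_{\op}^{4}$ factor. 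Second, and crucially for the \emph{quantitative} conclusion, I would replace \Cref{fact:setofmatrices} (which only controls the correlation at scale $d^{-1/10}$) by a sharper family of orthonormal $m$-frames whose pairwise operator-norm correlations are as small as $\approx d^{-(1/2-c)}$, at the cost of having only $2^{\Omega(d^{2c})}$ of them.

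For the explicit instance, assume $k$ even for simplicity (the odd case is handled by also placing one component at the origin) and set $m=k/2$. Let $D$ be the uniform distribution on the $2m$ scaled cross-polytope vertices $\{\pm\sqrt{m}\,\vec e_i : i\in[m]\}\subseteq\R^m$. Central symmetry kills all odd moments, and a one-line computation gives $\E_{\bx\sim D}[\bx\bx^\top]=\vec I_m$, so $D$ matches the first three moments of $\cN(\vec 0,\vec I_m)$; its support points are pairwise at distance at least $\sqrt{2m}=\sqrt{k}$. Now set $A=U_\rho(D)$ with $\rho=\sqrt{1-\delta}$ for a sufficiently small \emph{absolute} constant $\delta$, so $A$ is a uniform-weight mixture of $k$ spherical Gaussians of variance $\delta$ with pairwise mean separation $\rho\sqrt{2m}\ge\sqrt{k}/3$. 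Matching of the first three moments is preserved by $U_\rho$ via Hermite diagonalization exactly as in the proof of \Cref{prop:hard_instance}; the $\chi^2$-divergence bound is the same computation as there, giving $\chi^2(A,\cN(\vec 0,\vec I_m))\le \delta^{-m/2}e^{O(m)}=e^{O(m)}=e^{O(k)}$, which is $d^{O(c)}$ (and can be made $d^{\kappa c}$ for any small $\kappa$ by taking $C$ large) because $k\le (c/C)\log d$. Finally, the pairwise total-variation separations — among the $P_{A,\vec V}$'s and from $\cN(\vec 0,\vec I_d)$ — follow from the same ball-covering/case-analysis argument as in the proof of \Cref{prop:hard_instance}, using that for $\delta$ a small enough constant the component Gaussians are essentially disjoint and their union has negligible mass under $\cN(\vec 0,\vec I_d)$.

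For the embedding, take a set $S$ of $2^{\Omega(d^{2c})}$ matrices $\vec V\in\R^{m\times d}$ with $\vec V\vec V^\top=\vec I_m$ and $\|\vec U\vec V^\top\|_{\op}\le O(d^{c-1/2})$ for all distinct $\vec U,\vec V\in S$; such a family exists by a standard probabilistic argument (random orthonormal $m$-frames satisfy $\|\vec U\vec V^\top\|_{\op}\lesssim\sqrt{m/d}$ with failure probability $e^{-\Omega(d\,\cdot\,d^{2c-1})}=e^{-\Omega(d^{2c})}$, and $\sqrt{m/d}=O(d^{c-1/2})$ since $m=O(\log d)$, so a union bound over $\binom{|S|}{2}$ pairs leaves $|S|=2^{\Theta(d^{2c})}$). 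Let $\mathcal D_D=\{P_{A,\vec V}\}_{\vec V\in S}$. By \Cref{lemma:correlation-bound} with $t=3$, for distinct $\vec U,\vec V\in S$,
\begin{align*}
  |\chi_{\cN(\vec 0,\vec I_m)}(P_{A,\vec U},P_{A,\vec V})|\le \|\vec U\vec V^\top\|_{\op}^{4}\,\chi^2(A,\cN(\vec 0,\vec I_m))\le d^{4(c-1/2)}\cdot d^{O(c)}=d^{-2+O(c)}\,,
\end{align*}
while $|\chi_{\cN(\vec 0,\vec I_m)}(P_{A,\vec V},P_{A,\vec V})|\le \chi^2(A,\cN(\vec 0,\vec I_m))\le d^{O(c)}$. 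Hence $\mathcal D_D$ is $(\gamma,\beta)$-correlated with $\gamma\le d^{-2+O(c)}\le d^{-(2-9c)}$ (choosing $C$ large so the $O(c)$ slack is below $5c$) and $\beta\le d^{O(c)}$, so $\mathrm{SD}(\mathcal B(\D,D),\gamma,\beta)\ge|S|=2^{\Omega(d^{2c})}$. Applying \Cref{lem:sq-from-pairwise} with $\gamma'=\gamma$ gives that any SQ algorithm either makes $|S|\gamma'/(\beta-\gamma)=2^{\Omega(d^{2c})}\cdot d^{4c-2}=2^{\Omega(d^{2c})}$ queries, or makes one query to $\mathrm{VSTAT}(1/(6\gamma))\ge\mathrm{VSTAT}(\Omega(d^{2-9c}))$; combined with the total-variation separations this yields \Cref{thm:epsilon_half}.

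The main obstacle I anticipate is the sharp matrix family together with the exponent bookkeeping it forces: \Cref{fact:setofmatrices} only gives Frobenius correlation $O(d^{-1/10})$, which through \Cref{lemma:correlation-bound} would yield a VSTAT bound of the form $d^{O(1)}$ with an uncontrolled constant rather than the near-quadratic $d^{2-9c}$, so one must prove a dedicated packing lemma at correlation scale $d^{c-1/2}$ and then verify that every $O(c)$-in-the-exponent loss (from $\chi^2=e^{O(k)}$, from the number of frames, from $1/(3(\gamma+\gamma'))$ versus $1/(6\gamma)$, and from the $\rho$-rescaling of the means) stays within the $9c$ budget — this is exactly why the hypotheses pin $k\le(c/C)\log d$ and $d>(1/c)^{C/c}$, with $c<2/9$ ensuring $2-9c>0$. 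A secondary, more routine issue is checking the total-variation conditions uniformly over $k\ge2$: for small $k$ (hence small $m$) one needs $\delta$ to be a sufficiently small absolute constant so that even $\cN(\vec 0,\vec I_m)$ assigns tiny mass to the footprint of $A$, which is compatible with the $\chi^2$ bound since that only requires $\delta=e^{-O(1)}$.
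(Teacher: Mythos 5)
Your proposal matches the paper's proof in every essential respect: the same explicit cross-polytope construction for $D$ (with scaling $\sqrt m\,\vec e_i$, which is the correct normalization for $\E[\bx\bx^\top]=\vec I_m$), the same Ornstein–Uhlenbeck smoothing with a constant-order $\delta$, the same nearly-orthogonal-frame packing at scale $d^{c-1/2}$ leading to $\|\vec U\vec V^\top\|_{\op}^4=d^{-2+O(c)}$, and the same application of the SQ-dimension machinery to obtain the $\Omega(d^{2-9c})$ tolerance. The only superficial difference is that you sketch the frame-packing lemma via a direct probabilistic argument on random $m$-frames, whereas the paper proves a slightly weaker version ($d^{-1/2+2c}$ rather than $d^{-1/2+c}$) by partitioning a nearly-orthogonal vector family into $m$-blocks and orthonormalizing via SVD, but both variants comfortably stay within the $9c$ exponent budget.
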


\new{We start with a brief overview of the new ideas required for the proof.}

\new{First, it is instructive to explain why \Cref{thm:main} and its proof do not suffice for our purposes.}
In particular, to use \Cref{thm:main} in order to obtain an SQ lower bound 
of $2^{d^{\Omega(1)}}$ queries vs a query to $\mathrm{VSTAT}(d^2)$, 
we need to set the parameter $\eps$ \new{(where the separation is $\Delta = k^{\eps}$)} sufficiently small. 
This is because in that theorem, $\eps$ appears inside a big-$\Omega$ notation in the query tolerance 
and a closer examination of our proofs reveals that the hidden constant in that big-$\Omega$ is rather large 
(in the order of hundreds). Thus, \Cref{thm:main} cannot yield a super-linear SQ lower bound 
for the $\eps=1/2$ case, which corresponds to pairwise separation of $\sim \sqrt{k}$.

\new{In more detail,} the constant factor in front of $\eps$ in \Cref{thm:main} is large for two reasons: 
(i) The number of Gaussian components in our construction (c.f.\ \Cref{prop:hard_instance}) 
was $k^{26 \eps t}$, meaning that we had to match $t=1/(26\eps)$ many moments 
in order to end-up with  $k$ components, 
and (ii) the fact about random matrices being nearly orthogonal (\Cref{fact:setofmatrices}) 
that we used was suboptimal. In particular, while the corresponding fact 
for vectors states that any pair of random unit vectors has inner product very close to $O(d^{-1/2})$, 
the generalization of that to matrices by \Cref{fact:setofmatrices} stated 
that the pairs of random matrices $\bU,\bV$ have $\|\bU \bV^\top \|_\fr \leq O(d^{-1/10})$. 
The constant in the exponent is crucial \new{here} because it also appears in front of $\eps$ 
in the final SQ lower bound.

In this section, we \new{overcome both} of these issues 
by providing a tighter \new{construction and} analysis for the $\eps=1/2$ case. 
In particular, we replace the existential LP-duality argument of  \Cref{prop:hard_instance}  
by a simpler constructive proof (cf.\ \Cref{lem:first_three_moments});  
\Cref{lem:first_three_moments} provides a discrete distribution matching the first three moments 
with the standard Gaussian. Moreover, we provide a tight version of \Cref{fact:setofmatrices} 
via an improved analysis (\Cref{fact:setofmatrices-improved}).

\new{With these tools,} we are able to show that any SQ algorithm 
for distinguishing between $\cN(\vec 0, \bI)$ and a $k$-GMM with unknown bounded covariance 
and mean separation of the order of $\sqrt{k}$ has nearly quadratic complexity.

We now establish the existence of a simple discrete distribution 
that matches its first three moments with the standard Gaussian.

\begin{lemma}[Moment Matching]\label{lem:first_three_moments}
There exists a discrete distribution $D$ on $\R^m$ such that: 
(i) $D$ is supported on $2m$ points, (ii) $D$  matches the first three moments with $\cN(\vec 0 ,\vec I_m)$, 
and (iii) for every pair of distinct points $\bx,\by$ in the support of $D$, it holds $\|\bx-\by\|_2 \geq \sqrt{m}$.
\end{lemma}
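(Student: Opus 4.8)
The plan is to dispense with the LP-duality machinery entirely and simply write the distribution down. Take $D$ to be the uniform distribution on the $2m$ points
\[
\supp(D) \;=\; \bigl\{\, \sqrt{m}\,\vec e_i : i\in[m]\,\bigr\}\;\cup\;\bigl\{\, -\sqrt{m}\,\vec e_i : i\in[m]\,\bigr\}\;,
\]
where $\vec e_1,\dots,\vec e_m$ is the standard basis of $\R^m$. Property (i) is then immediate, since these $2m$ points are pairwise distinct, and the scale $\sqrt m$ is exactly what the second-moment calculation will force.

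For property (ii) I would verify the moments of degrees $0,1,2,3$ in turn. Degree $0$ holds because $D$ is a probability distribution. For the odd degrees $1$ and $3$, observe that $D$ is invariant under $\vec x\mapsto -\vec x$, so every odd moment of $D$ vanishes, which matches the odd moments of $\cN(\vec 0,\vec I_m)$. The only genuine computation is the degree-$2$ moment: for any $j,l\in[m]$,
\[
\E_{\bx\sim D}[x_j x_l] \;=\; \frac{1}{2m}\sum_{i\in[m]}\Bigl((\sqrt m\,\vec e_i)_j(\sqrt m\,\vec e_i)_l + (-\sqrt m\,\vec e_i)_j(-\sqrt m\,\vec e_i)_l\Bigr) \;=\; \frac{1}{2m}\cdot 2m\cdot \Ind[j=l] \;=\; \Ind[j=l]\;,
\]
which is precisely the second moment of $\cN(\vec 0,\vec I_m)$. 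Hence $D$ matches the first three moments.

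For property (iii), any two distinct support points have the form $\pm\sqrt m\,\vec e_i$ and $\pm\sqrt m\,\vec e_j$: if $i=j$ the signs differ and the distance is $2\sqrt m$; if $i\neq j$ the distance is $\sqrt m\,\|\vec e_i\mp\vec e_j\|_2=\sqrt{2m}$. In either case it is at least $\sqrt m$, as required.

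There is no real obstacle here — the content of the lemma is the \emph{choice} of construction, not its verification. The one conceptual point worth flagging is that antipodal symmetry automatically kills all odd moments, so the only constraint that actually bites is the degree-$2$ one, and that single scalar equation pins down the radius $\sqrt m$. It is also worth recording (for the subsequent $\chi^2$ and total-variation estimates in \Cref{sec:sqrt-separation}) that every point of $\supp(D)$ has norm exactly $\sqrt m$, so the analogues of the norm bounds in \Cref{lem:hard-inst}\ref{it:norm} hold here trivially.
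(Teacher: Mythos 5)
Your construction is essentially the same as the paper's — uniform distribution on $\pm\alpha\,\vec e_i$ for $i\in[m]$, with odd moments killed by antipodal symmetry and the second moment pinning down $\alpha$ — except for the choice of scale: the paper takes $\alpha=\sqrt{m/2}$ while you take $\alpha=\sqrt{m}$. Your scaling is in fact the correct one: a direct count gives $\E_{\bx\sim\cU(S)}[x_1^2]=\frac{1}{2m}\bigl(\alpha^2+\alpha^2\bigr)=\alpha^2/m$, which equals $1$ precisely when $\alpha=\sqrt{m}$, so the paper's stated scale would yield second moment $1/2$ rather than $1$; fortunately the separation bound $\geq\sqrt{m}$ and the downstream separation claim $\rho\sqrt{k/2}\geq\sqrt{k}/3$ in the proof of Theorem~\ref{thm:epsilon_half} both remain valid (indeed improve slightly) after the correction, so this is a harmless slip in the paper that you have implicitly fixed.
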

\begin{proof}
Let $\vec e_i$ for $i \in [m]$ denote the $i$-th vector of the standard basis of $\R^m$, 
i.e., the vector having $1$ in the $i$-th coordinate and zero everywhere else.
Let the set of vectors $S = \{\vec x_1,\ldots, \vec x_{2m} \}$ defined as $\vec x_i = \sqrt{m/2}\, \vec e_i$ 
for $i=1,\ldots,m$, and $\vec x_i = -\sqrt{m/2}\, \vec e_{i-m}$ for $i=m+1,\ldots,2m$.
    
It is easy to verify that $D = \cU(S)$, the uniform distribution on these points, 
matches the first three moments with $\cN(\vec 0 ,\vec I_m)$: 
Let $p$ be a polynomial of degree at most $3$, i.e., $p(x_1,\ldots,x_m)=x_1^a x_2^b x_3^c$, 
with $a+b+c \leq 3$ (without loss of generality, we assumed that the coordinates from $[m]$ 
with non-zero power are the first three). If either of $a,b,c$ is equal to $1$ or $3$, 
then $\E_{\bx \sim \cU(S)}[p(\bx)] = 0$, because we made $S$ symmetric about the origin. 
This only leaves the case $p(x_1,\ldots,x_m) = x_1^2$, 
where we have $\E_{\bx \sim \cU(S)}[p(\bx)] = 1$, 
because the first coordinate is equal to $\sqrt{m/2}$ and $-\sqrt{m/2}$ only for two points in $S$ 
and zero for every other one. This completes the proof. 
\end{proof}

We provide the tightening of \Cref{fact:setofmatrices} in the lemma below. 
The proof is deferred to \Cref{sec:appendix-linear-algebra}.

\begin{restatable}{lemma}{MATRICESORTH} \label{fact:setofmatrices-improved}
Let $C$ be a sufficiently large absolute constant. 
Let $c \in (0,1/4)$ and $m,d \in \N$ with \new{$d>(1/c)^{C/c}$ and $m<d^{c/5}/C$}. 
There exists a set $S$ of $2^{\Omega(d^{2c})}$ matrices in $\R^{m \times d}$ such that 
every $\bA \in S$ satisfies $\bA \bA^\top = \bI_m$ and every pair $\vec A,\vec A' \in S$ 
with $\vec A \neq \vec A'$ satisfies $\|\bA' \bA^\top \|_\op \lesssim d^{-1/2+2c}$.
\end{restatable}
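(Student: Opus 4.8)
The plan is to follow the same high-level strategy as \Cref{fact:setofmatrices} (Lemma~17 of \cite{diakonikolas2021optimality}) --- take many independent random orthonormal frames and prune the rare bad pairs --- but to carry out the near-orthogonality estimate in \emph{operator} norm instead of Frobenius norm, which is precisely what buys the sharper exponent $d^{-1/2+2c}$. Concretely, I would draw $\bA_1,\ldots,\bA_N$ i.i.d., each Haar-uniform on the Stiefel manifold $\{\bM\in\R^{m\times d}:\bM\bM^\top=\bI_m\}$ (equivalently, $\bA_i$ consists of the first $m$ rows of a Haar-random matrix in $O(d)$). This choice makes $\bA_i\bA_i^\top=\bI_m$ hold automatically for every $i$, so the only event that must be controlled is the pairwise operator norm $\|\bA_j\bA_i^\top\|_\op$ over the $\binom{N}{2}$ pairs.

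The crux is the following tail bound for two independent such matrices $\bA,\bA'$:
\[
\pr\left[\, \|\bA'\bA^\top\|_\op > d^{-1/2+2c} \,\right] \;\le\; 2\exp\!\left(-\Omega(d^{4c})\right).
\]
To establish it I would use the invariance of the Haar measure: conditioning on $\bA$ and choosing $\bQ\in O(d)$ with $\bA\bQ^\top=[\bI_m\mid \vec 0]$, the product $\bA'\bA^\top=(\bA'\bQ^\top)(\bA\bQ^\top)^\top$ has the same law as the top-left $m\times m$ block $\bB$ of a Haar-random matrix in $O(d)$, independently of $\bA$. Each entry of $\bB$ has variance $1/d$, so $\E\|\bB\|_\op^2\le\E\|\bB\|_\fr^2=m^2/d$ and hence $\E\|\bB\|_\op\le m/\sqrt d$; since $m<d^{c/5}/C$ with $C$ large, this is at most $\tfrac{1}{2} d^{-1/2+2c}$. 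Moreover, the map sending an orthogonal matrix to the operator norm of its top-left $m\times m$ block is $1$-Lipschitz with respect to the Frobenius metric on $O(d)$ (extracting a submatrix is $1$-Lipschitz and $\|\cdot\|_\op\le\|\cdot\|_\fr$), so standard concentration of measure on the orthogonal group gives $\pr[\|\bB\|_\op>\E\|\bB\|_\op+t]\le 2\exp(-\Omega(dt^2))$; taking $t=\tfrac{1}{2} d^{-1/2+2c}$ yields the displayed estimate. (If one wishes to avoid invoking concentration on $O(d)$, the same bound follows by generating each $\bA_i$ from an i.i.d.\ Gaussian matrix $\widetilde{\bA}_i$ via $\bA_i=(\widetilde{\bA}_i\widetilde{\bA}_i^\top)^{-1/2}\widetilde{\bA}_i$ and using Gaussian operator-norm tail bounds, with the constants unchanged.)

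With the per-pair bound in hand, I would finish by a first-moment and deletion argument. Set $N=2^{c_1 d^{2c}}$ for a small absolute constant $c_1>0$; then the expected number of pairs $\{i,j\}$ with $\|\bA_j\bA_i^\top\|_\op>d^{-1/2+2c}$ is at most $\binom{N}{2}\cdot 2\exp(-\Omega(d^{4c}))$, which is $o(N)$ once $d$ is large relative to $c$ (this is where the hypothesis $d>(1/c)^{C/c}$ enters, to make $d^{4c}$ dominate $d^{2c}$ and all absolute constants). Hence with positive probability there are fewer than $N/2$ bad pairs; deleting one matrix from each surviving bad pair leaves a set of more than $N/2=2^{\Omega(d^{2c})}$ matrices, each with orthonormal rows and every remaining pair satisfying $\|\bA'\bA^\top\|_\op\lesssim d^{-1/2+2c}$, as required. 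The one genuinely load-bearing point --- and the only reason the argument succeeds where the Frobenius-norm version of \Cref{fact:setofmatrices} gives a worse exponent --- is that working in operator norm makes the per-pair failure probability $2^{-\Omega(d^{4c})}$ rather than $2^{-\Omega(d^{2c})}$, so it comfortably beats the $2^{\Theta(d^{2c})}$ pairs; tracking the comparisons among $m$, $\sqrt m$, $d^{c/5}$, $d^{2c}$, and $d^{4c}$ under $m<d^{c/5}/C$ and $d>(1/c)^{C/c}$ is the (routine) bookkeeping that makes everything fit.
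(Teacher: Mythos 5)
Your proof is correct, and it takes a genuinely different route from the paper's. The paper derives the result from a set of $2^{\Omega(d^{2c})}$ pairwise nearly-orthogonal \emph{unit vectors} (its Fact on near-orthogonal vectors): it partitions these into groups of $m$ to form matrices $\bB_i$, observes via Gershgorin's disc theorem that each $\bB_i$ is already close to a frame (all singular values near $1$), snaps $\bB_i$ to an exact frame $\bA_i$ by replacing singular values with $1$ in the SVD, and then propagates the perturbation bounds $\|\bA_i-\bB_i\|_\fr\lesssim m^{2.5}d^{-1/2+c}$ and $\|\bB_i\bB_j^\top\|_\op\lesssim m^2 d^{-1/2+c}$ through a triangle inequality. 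Your argument bypasses the vector-level fact entirely: you draw Haar-random frames directly (so $\bA\bA^\top=\bI_m$ is automatic, no SVD repair step), reduce a pair $(\bA,\bA')$ to the top-left $m\times m$ block of a single Haar-random element of $O(d)$ by invariance, and apply Lipschitz concentration of measure on $O(d)$ to get a per-pair tail bound $2\exp(-\Omega(d^{4c}))$ for $\|\bA'\bA^\top\|_\op>d^{-1/2+2c}$, which comfortably beats a union bound over $\binom{N}{2}$ pairs with $N=2^{\Theta(d^{2c})}$; a first-moment deletion finishes. The trade-off: the paper's route is elementary (Gershgorin, SVD, Frobenius/operator norm inequalities) and piggybacks on a known vector-level construction, while yours is shorter and conceptually cleaner but leans on concentration on the orthogonal group, a heavier external tool. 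Both correctly exploit the same key observation you highlight --- controlling the operator norm rather than the Frobenius norm of $\bA'\bA^\top$ is what improves the exponent from the $d^{-1/10}$ of Fact~\ref{fact:setofmatrices} to $d^{-1/2+2c}$. Your proof also uses the hypothesis $m<d^{c/5}/C$ more weakly (only $m\ll d^{2c}$ is actually needed to pin down the expectation), whereas the paper's perturbation bookkeeping genuinely exercises the stronger hypothesis.
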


\new{We can now give the proof of the main result of this section.}

\begin{proof}[Proof of \Cref{thm:epsilon_half}]
Let $C$ be a sufficiently large constant.
Let $D$ be the distribution from \Cref{lem:first_three_moments} with $m:=k/2$ 
and $A = U_\rho D$ for $\delta=k^{-2.5/m}/C$, where $U_\rho$ denotes 
the Ornstein-Uhlenbeck operator with parameter $\rho$. 
We choose $\rho = \sqrt{1-\delta}$.

The above means that $A$ is a mixture of $k$ equally weighted spherical Gaussians in $\R^m$, 
each with variance $\delta$ in every direction. 
\new{By \Cref{lem:first_three_moments}, the mean separation is 
$\rho\cdot \sqrt{k/2} = \sqrt{1-k^{-2.5/k}/C}\sqrt{k/2} \geq \sqrt{k}/3$ for any $k\geq 2$.}

The following can be shown by repeating mutatis-mutandis 
the same steps we followed while proving \Cref{prop:hard_instance}: 
\begin{enumerate}[leftmargin=*]
    \item The first $3$ moments of $A$ match with those of $\cN(\vec 0, \bI_m)$. 
    \item For every $\bU,\bV \in \R^{m \times d}$ with $\bU\bU^\top = \bV\bV^\top= \bI_d$  
    and $\|\vec U \vec V^\top \|_\fr = O(d^{-1/2+2c})$, it holds $\dtv(P_{A,\bU}, P_{A,\bV}) > 0.99$. 
    Moreover, for all  $\bV\in \R^{m \times d}$ it holds $\dtv(P_{A,\bV}, \cN(0,\bI_d)) > 0.99$.\looseness=-1\label{it:dtv-separation}
    \item $\chi^2(A,\cN(\vec 0,\vec I_m) \leq e^{O(k)}$.
\end{enumerate}
Now by also following \new{the same} steps \new{as} in the proof of \Cref{thm:main}, 
but replacing \Cref{fact:setofmatrices} by \Cref{fact:setofmatrices-improved}, 
we obtain that every SQ algorithm for solving our hypothesis testing problem, 
either needs $2^{\Omega(d^{2c})}$ queries or at least one query to
\begin{align*}
   \mathrm{VSTAT}( \Omega(d^{2-8c})/ \chi^2(A,\normal(\vec 0,\vec I_m)) ) \;.
\end{align*}
We note that $ \Omega(d^{2-8c}) /\chi^2(A,\normal(\vec 0,\vec I_m))  \geq  \Omega(d^{2-8c})e^{-O(k)} \geq \Omega(d^{2-9c})$, 
where the last inequality uses our assumption \new{$k<(c/C)\log d$}. 
Also note that \Cref{fact:setofmatrices-improved} was indeed applicable, 
since its \new{requirement $m<d^{c/5}/C$ is satisfied 
because we have  $m:=k/2 < 0.5 (c/C)\log d < d^{c/5}/C$, 
where the first inequality is one of our assumptions and the second follows by our other assumption $d>(1/c)^{C/c}$.}
This completes the proof of \Cref{thm:epsilon_half}.
\end{proof}

\clearpage

\bibliographystyle{alpha}
\bibliography{allrefs}

\newpage

\appendix

\section{Additional Preliminaries} \label{sec:additional_prelim}

\subsection{Additional Notation}
 We use $\Z$ for the set of integers and $\Z_+$ for positive integers. For $n \in \Z_+$, we denote $[n] \eqdef \{1,\ldots,n\}$ and use $\cS^{d-1}$ for the $d$-dimensional unit sphere. We use $\mathcal{S}_{d-1}(R)$  to denote the $d$ dimensional sphere with radius $R$ and center the origin.
For a vector $\bv$, we let $\|\bv\|_2$ denote its $\ell_2$-norm. 
We use $\bI_d$ to denote the $d \times d$ identity matrix. We will drop the subscript when it is clear from the context.
For a matrix $\vec A$, we use $\|\vec A\|_\fr$ and $\|\vec A\|_{\op}$ to denote the Frobenius and spectral (or operator) norms respectively. If $\vec a=(a_1,\ldots, a_m) \in \Z_+^m$ is a multi-index, we denote  $|\vec a | = \sum_{i=1}^m a_i$

We use $a\lesssim b$ to denote that there exists an absolute universal constant $C>0$ (independent of the variables or parameters on which $a$ and $b$ depend) such that $a \leq C  b$.
  
We use the notation $x \sim D$ to denote that a random variable $x$ is distributed according to the distribution $D$. For a random variable $x$, we use $\E[x]$ for its expectation.	
	 We use $\cN(\bm{\mu}, \vec \Sigma)$ to denote the Gaussian distribution with mean $\bm \mu$ and covariance matrix $\vec \Sigma$. For a set $S$, we use $\cU(S)$ to denote the uniform distribution on $S$ and use $x \sim S$ as a shortcut for $x \sim \cU(S)$. We denote by $\phi_m(\bx)$ the probability density function (pdf) of the standard Gaussian in $m$-dimensions $\cN(\vec 0, \bI_m)$, and by $\phi(x)$ the pdf of the univariate standard Gaussian $\cN(0,1)$. We slightly abuse notation by using the same letter for a distribution and its pdf, e.g., we will denote by $P(\bx)$ the pdf of a distribution $P$. We use $\dtv(P,Q)$ for the total variation distance between two distributions $P,Q$.

We will prefer to use capital letters for constants that are assumed to be sufficiently large and small letters for constants that need to be sufficiently small.

\subsection{Hermite Analysis}  
Hermite polynomials form a complete orthogonal basis of the vector space $L_2(\R,\cN(0,1))$ of all functions $f:\R \to \R$ such that $\E_{x\sim \cN(0,1)}[f^2(x)]< \infty$. There are two commonly used types of Hermite polynomials. The \emph{physicist's } Hermite polynomials, denoted by $H_k$ for $k\in \Z$ satisfy the following orthogonality property with respect to the weight function $e^{-x^2}$: for all $k,m \in \Z$, $\int_\R H_k(x) H_m(x) e^{-x^2} \d x = \sqrt{\pi} 2^k k! \mathbf{1}(k=m)$. The \emph{probabilist's} Hermite polynomials $H_{e_k}$ for $k\in \Z$ satisfy $\int_\R H_{e_k}(x) H_{e_m}(x) e^{-x^2/2} \d x = k! \sqrt{2\pi}  \mathbf{1}(k=m)$ and are related to the physicist's polynomials through $H_{e_k}(x)=2^{-k/2}H_k(x/\sqrt{2})$. 
	We will mostly use the \emph{normalized probabilist's} Hermite polynomials $h_k(x) = H_{e_k}(x)/\sqrt{k!}$, $k\in \Z$ for which $\int_\R h_k(x) h_{m}(x) e^{-x^2/2} \d x = \sqrt{2\pi} \mathbf{1}(k=m)$.
	These polynomials are the ones obtained by Gram-Schmidt orthonormalization of the basis $\{1,x,x^2,\ldots\}$ with respect to the inner product $\langle{f},{g}\rangle_{\cN(0,1)}=\E_{x \sim \cN(0,1)}[f(x)g(x)]$. Every function $f \in L_2(\R,\cN(0,1))$ can be uniquely written as $f(x) = \sum_{i \in \Z} a_i h_i(x)$ and we have $\lim_{n \rightarrow \infty}\E_{x \sim \cN(0,1)}[(f(x)- \sum_{i =0}^n a_i h_i(x))^2] = 0$ (see, e.g., \cite{andrews_askey_roy_1999}).
Moreover, we have the following explicit expression of $h_i(\cdot)$ (see, for example, \cite{andrews_askey_roy_1999,Szego:39}):
\begin{align}
    h_i(x) &= \sqrt{i!} \sum_{j=0}^{\lfloor i/2 \rfloor } \frac{(-1)^j}{j!(i - 2j)!} \frac{x^{i-2j}}{2^j} \;. \label{eq:explicit-hermite}
\end{align}	
Extending the normalized probabilist's Hermite polynomials to higher dimensions, an orthonormal basis of $L_2(\R^d,\cN(\vec 0,\vec I_d))$ (with respect to the inner product 
$\langle f, g\rangle = \E_{\bx \sim \cN(\vec 0, \vec I_d)}[f(\bx)g(\bx)]$)  
can be formed by all the products of one-dimensional Hermite polynomials, i.e., $h_{\vec a}(\bx) = \prod_{i=1}^d h_{a_i}(x_i)$, 
for all multi-indices $\vec a \in \Z^d$ (we are now slightly overloading notation by using multi-indices as subscripts). The total degree of $h_\vec a$ is $|\vec a|=\sum_{i=1}^d a_i$.

\paragraph{Ornstein-Uhlenbeck Operator}
For a $\rho > 0$, we define the \emph{Gaussian noise} (or \emph{Ornstein-Uhlenbeck}) operator $U_\rho$ as the operator that maps a distribution $F$ on $\R^m$ to the distribution of the random variable $\rho \bx + \sqrt{1-\rho^2}\bz$, where $\bx \sim F$ and $\bz \sim \cN(\vec 0,\vec I_m)$ independently of $\bx$. 
A standard property of the $U_\rho$ operator is that it operates diagonally with respect to Hermite polynomials:\looseness=-1

\begin{fact}[see, e.g., Proposition 11.37 in~\cite{AoBF14}] \label{fact:eigenfunction} 
    For any multivariate Hermite polynomial $h_{\vec a}$, any  $F$ on $\R$, and $\rho\in(0,1)$, that $\E_{\bx \sim U_\rho F}[h_{\vec a}(\bx)] = \rho^{|\vec a|} \E_{\vec x \sim F}[h_{\vec a}(\bx)]$, where $|\vec a|=\sum_{i} a_i$.
\end{fact}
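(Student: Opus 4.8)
The plan is to reduce Fact~\ref{fact:eigenfunction} to a one-dimensional statement using the product structure of the multivariate Hermite polynomials, and then prove that one-dimensional statement by comparing exponential generating functions.

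First I would fix $\bx \in \R^m$ and condition on it. Writing $\bw = \rho \bx + \sqrt{1-\rho^2}\,\bz$ with $\bz = (z_1,\dots,z_m) \sim \cN(\vec 0,\vec I_m)$, the coordinates $z_i$ are independent, so $h_{\vec a}(\bw) = \prod_{i=1}^m h_{a_i}(\rho x_i + \sqrt{1-\rho^2}\,z_i)$ and the conditional expectation factorizes as
\[
\E_{\bz \sim \cN(\vec 0,\vec I_m)}\!\big[h_{\vec a}(\rho\bx + \sqrt{1-\rho^2}\,\bz)\big] = \prod_{i=1}^m \E_{z\sim \cN(0,1)}\!\big[h_{a_i}(\rho x_i + \sqrt{1-\rho^2}\,z)\big].
\]
Thus it suffices to establish the scalar identity $\E_{z\sim\cN(0,1)}[h_n(\rho y + \sqrt{1-\rho^2}\,z)] = \rho^n h_n(y)$ for every integer $n\ge 0$ and every fixed $y\in\R$: granting it, the right-hand side above equals $\prod_{i=1}^m \rho^{a_i} h_{a_i}(x_i) = \rho^{|\vec a|} h_{\vec a}(\bx)$, and taking expectation over $\bx\sim F$ (legitimate whenever $\E_{\bx\sim F}|h_{\vec a}(\bx)|<\infty$, which holds in all our applications since there $F$ has bounded support) gives the claim.

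For the scalar identity I would invoke the exponential generating function $\sum_{n\ge 0}\frac{t^n}{\sqrt{n!}}h_n(w) = e^{tw - t^2/2}$, valid for all $t,w\in\R$ (it is immediate from the classical generating function $\sum_n \frac{t^n}{n!}H_{e_n}(w) = e^{tw-t^2/2}$ together with $h_n = H_{e_n}/\sqrt{n!}$). Set $w = \rho y + \sqrt{1-\rho^2}\,z$ and take $\E_z$ of both sides. On the one hand, a direct Gaussian computation gives
\[
\E_{z\sim\cN(0,1)}\!\big[e^{t(\rho y + \sqrt{1-\rho^2}\,z) - t^2/2}\big] = e^{t\rho y - t^2/2}\cdot e^{t^2(1-\rho^2)/2} = e^{t\rho y - (t\rho)^2/2} = \sum_{n\ge 0}\frac{t^n}{\sqrt{n!}}\,\rho^n h_n(y).
\]
On the other hand, interchanging the expectation with the sum gives $\sum_{n\ge 0}\frac{t^n}{\sqrt{n!}}\E_z[h_n(\rho y + \sqrt{1-\rho^2}\,z)]$. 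Equating the coefficients of $t^n$ in these two convergent power series yields exactly $\E_z[h_n(\rho y + \sqrt{1-\rho^2}\,z)] = \rho^n h_n(y)$, and unwinding the reduction above finishes the proof.

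The only step requiring care, and hence the main (minor) obstacle, is the interchange of expectation and infinite summation. I would justify it by Fubini--Tonelli: using a standard pointwise bound of Cram\'er type, $|h_n(w)| \lesssim e^{w^2/4}$ uniformly in $n$ and $w$, one gets $\sum_{n\ge 0}\frac{|t|^n}{\sqrt{n!}}|h_n(w)| \lesssim e^{w^2/4}\sum_{n\ge 0}\frac{|t|^n}{\sqrt{n!}} < \infty$, and since the quadratic coefficient $(1-\rho^2)/4 < 1/2$, the Gaussian integral $\E_{z}\big[e^{(\rho y + \sqrt{1-\rho^2}z)^2/4}\big]$ is finite; hence $\E_z\big[\sum_n \frac{|t|^n}{\sqrt{n!}}|h_n(\rho y + \sqrt{1-\rho^2}z)|\big] < \infty$ and the interchange is valid. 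Everything else is routine algebra. (Alternatively, one can avoid the bound by noting that $t\mapsto\E_z[e^{tw_z-t^2/2}]$ is entire with locally uniformly convergent integral, differentiate under the integral sign to read off its Taylor coefficients, and match them with those of $e^{t\rho y-(t\rho)^2/2}$.)
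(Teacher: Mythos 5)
The paper does not prove this fact; it cites it directly as Proposition 11.37 of~\cite{AoBF14}, so there is no in-paper argument to compare against. Your derivation is correct and is essentially the standard textbook proof: the tensorization over coordinates is valid because $h_{\vec a}$ is a product of one-dimensional Hermite polynomials and the noise coordinates are independent, the scalar identity $\E_{z\sim\cN(0,1)}[h_n(\rho y+\sqrt{1-\rho^2}\,z)]=\rho^n h_n(y)$ follows exactly as you say from the generating function $\sum_n t^n h_n(w)/\sqrt{n!}=e^{tw-t^2/2}$, and your Fubini justification via Cram\'er's bound $|h_n(w)|\lesssim e^{w^2/4}$ (uniform in $n$ after normalization) together with $(1-\rho^2)/4<1/2$ is sound. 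Your caveat that the final expectation over $\bx\sim F$ requires $\E_{\bx\sim F}|h_{\vec a}(\bx)|<\infty$ is also the right reading of the (slightly informally stated) fact, and is satisfied in all of the paper's applications since there $F$ is finitely supported.
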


\subsection{Background on the Statistical Query Model}\label{sec:sq-background-appendix}

\begin{definition}[Decision Problem over Distributions] \label{def:decision}
Let $D$ be a fixed distribution and $\D$ be a distribution family.
We denote by $\mathcal{B}(\D, D)$ the decision (or hypothesis testing) problem
in which the input distribution $D'$ is promised to satisfy either
(a) $D' = D$ or (b) $D' \in \D$, and the goal
is to distinguish between the two cases.
\end{definition}

\begin{definition}[Pairwise Correlation] \label{def:pc}
The pairwise correlation of two distributions with probability density functions
$D_1, D_2 : \R^d \to \R_+$ with respect to a distribution with
density $D: \R^d \to \R_+$, where the support of $D$ contains
the supports of $D_1$ and $D_2$, is defined as
$\chi_{D}(D_1, D_2) = \int_{\R^d} D_1(\bx) D_2(\x)/D(\bx)\, \d\bx - 1$.
\end{definition}

\begin{definition} \label{def:uncor}
We say that a set of $s$ distributions $\mathcal{D} = \{D_1, \ldots , D_s \}$
 is $(\gamma, \beta)$-correlated relative to a distribution $D$
if $|\chi_D(D_i, D_j)| \leq \gamma$ for all $i \neq j$,
and $|\chi_D(D_i, D_j)| \leq \beta$ for $i=j$.
\end{definition}

\subsection{Miscellaneous Facts}

We require the standard concentration of the norm of Gaussian vectors (see, e.g., Theorem 3.1.1 of \cite{Ver18} or Theorem 4.7 of \cite{hdpnotesWegner}):
\begin{fact}[Gaussian Norm Concentration] \label{lem:norm-conc} 
    For every $0\leq \beta \leq \sigma \sqrt{d}$ we have that
    \begin{align*}
        \pr_{\vec x \sim \cN(\vec 0,\sigma^2 \bI_d)}[|\|\bx\|_2- \sigma \sqrt{d}| > \beta] \leq
        2\exp\left(-\frac{\beta^2}{16\sigma^2}   \right) \;.
    \end{align*}
\end{fact}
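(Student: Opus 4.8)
The plan is to reduce to the standard Gaussian and then invoke Gaussian concentration of measure for Lipschitz functions. First I would rescale: writing $\vec x = \sigma \vec g$ with $\vec g \sim \cN(\vec 0, \bI_d)$, the quantity $|\|\vec x\|_2 - \sigma\sqrt d|$ equals $\sigma\,|\|\vec g\|_2 - \sqrt d|$, so with $t := \beta/\sigma \in [0,\sqrt d]$ the claimed inequality is equivalent to
$\pr_{\vec g \sim \cN(\vec 0,\bI_d)}[\,|\|\vec g\|_2 - \sqrt d| > t\,] \le 2\exp(-t^2/16)$.

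The key tool is that $f(\vec g) = \|\vec g\|_2$ is $1$-Lipschitz with respect to the Euclidean norm (triangle inequality), so the Gaussian concentration inequality (a consequence of Gaussian isoperimetry / the log-Sobolev inequality; this is exactly the content of the references cited for the fact) gives $\pr[\,|f(\vec g) - \E f(\vec g)| > s\,] \le 2\exp(-s^2/2)$ for every $s \ge 0$. It then remains to replace $\E\|\vec g\|_2$ by $\sqrt d$ at the cost of an additive $O(1)$ term: by Jensen $\E\|\vec g\|_2 \le (\E\|\vec g\|_2^2)^{1/2} = \sqrt d$, while the Gaussian Poincar\'e inequality bounds $\Var(\|\vec g\|_2)$ by the squared Lipschitz constant $1$, so $(\E\|\vec g\|_2)^2 = d - \Var(\|\vec g\|_2) \ge d-1$, hence $\sqrt d - 1 \le \sqrt{d-1} \le \E\|\vec g\|_2 \le \sqrt d$, i.e. $|\E\|\vec g\|_2 - \sqrt d| \le 1$.

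Finally I would combine these by a case split on $t$. If $t \ge 2$, then $|\|\vec g\|_2 - \sqrt d| > t$ forces $|\|\vec g\|_2 - \E\|\vec g\|_2| > t - 1 \ge t/2$, so the Lipschitz bound yields probability at most $2\exp(-(t/2)^2/2) = 2\exp(-t^2/8) \le 2\exp(-t^2/16)$. If $t < 2$, then $2\exp(-t^2/16) \ge 2e^{-1/4} > 1$ and the bound holds trivially. Undoing the rescaling gives the statement. I do not anticipate any genuine obstacle; the only mild subtlety is getting the centering right, which the crude estimate $|\E\|\vec g\|_2 - \sqrt d| \le 1$ handles with room to spare given the generous constant $16$ in the exponent. (A self-contained alternative avoids concentration of measure entirely: since $\|\vec x\|_2^2/\sigma^2 \sim \chi^2_d$, one applies the Laurent-Massart $\chi^2$ tail bounds directly, translating a deviation of $\|\vec x\|_2$ by $\beta$ into a deviation of $\|\vec x\|_2^2$ by order $\beta\sigma\sqrt d$ using $\beta^2 \le \beta\sigma\sqrt d$ in the regime $\beta \le \sigma\sqrt d$; this route also yields the claimed bound.)
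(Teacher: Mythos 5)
Your proof is correct, and since the paper gives no proof of this fact (it only cites Vershynin, Theorem 3.1.1, and Wegner, Theorem 4.7), your argument is exactly the standard route those references take for the Gaussian case: Lipschitz concentration for $\|\cdot\|_2$ plus the crude centering bound $|\E\|\vec g\|_2-\sqrt d|\le 1$, absorbed into the generous constant $16$. All the individual estimates (Jensen, Poincar\'e, the case split at $t=2$) check out, so nothing further is needed.
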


We also require the following result stating the random Gaussian vectors are nearly-orthogonal.

    \begin{fact} [\cite{CaiFanJiang}, also see Corollary D.3 in \cite{DKS17-sq}] \label{fact:orthogonality}
        Let $\theta$ be the angle between two random unit vectors uniformly distributed  over $\cS^{d-1}$. Then, we have that
            $\pr[|\cos \theta | \geq d^{-\alpha}] \leq e^{- d^{1-2\alpha}/5}$,
        for any $0 \leq\alpha  \leq 1/2$.
    \end{fact}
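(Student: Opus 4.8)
\noindent\textbf{Proof plan for \Cref{fact:orthogonality}.}
The plan is to collapse the two-vector statement to a one-dimensional tail bound and then use the Gaussian representation of the uniform measure on $\cS^{d-1}$. Let $\bu,\bv$ be independent and uniform on $\cS^{d-1}$. For every fixed unit vector $\bv$ there is a rotation $Q$ with $Q\bv=\vec e_1$, and since the law of $\bu$ is rotation invariant, $\langle\bu,\bv\rangle=\langle Q\bu,Q\bv\rangle\overset{d}{=}u_1$; hence it suffices to prove $\pr_{\bu\sim\cU(\cS^{d-1})}[\,u_1^2\ge d^{-2\alpha}\,]\le e^{-d^{1-2\alpha}/5}$. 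I would then write $\bu=\vec g/\|\vec g\|_2$ with $\vec g\sim\cN(\vec 0,\bI_d)$ and set $W=\sum_{i=2}^d g_i^2$, a $\chi^2_{d-1}$ variable independent of $g_1$. Since $u_1^2=g_1^2/(g_1^2+W)$, the event $\{u_1^2\ge t^2\}$ is exactly $\{\,g_1^2(1-t^2)\ge t^2 W\,\}$.

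The crux is the following estimate: condition on $W$, bound the conditional probability with the sharp half-normal tail inequality $\pr[|g_1|\ge a]\le e^{-a^2/2}$, valid for all $a\ge 0$ (it follows from $\Phi^c(a)\le\tfrac12 e^{-a^2/2}$, where $\Phi^c$ is the standard Gaussian tail function — one checks that the difference vanishes at $a=0$, increases on $[0,\sqrt{2/\pi}\,]$, and decreases to $0^+$ thereafter), and then integrate over $W$ using the $\chi^2_{d-1}$ moment generating function $\E[e^{-sW}]=(1+2s)^{-(d-1)/2}$. Taking $s=t^2/(2(1-t^2))$, for which $1+2s=1/(1-t^2)$, this gives
\begin{align*}
\pr[\,u_1^2\ge t^2\,] &= \E_W\!\left[\pr\!\big[\,g_1^2\ge \tfrac{t^2}{1-t^2}W \,\big|\, W\big]\right] \le \E_W\!\left[\exp\!\Big(-\tfrac{t^2 W}{2(1-t^2)}\Big)\right] \\
&= (1-t^2)^{(d-1)/2} \le e^{-(d-1)t^2/2}.
\end{align*}

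Finally I would substitute $t=d^{-\alpha}$, so the bound reads $e^{-(d-1)d^{-2\alpha}/2}$; since $5(d-1)\ge 2d$ for every integer $d\ge 2$, we get $(d-1)d^{-2\alpha}/2\ge d^{1-2\alpha}/5$, which yields $\pr[\,|\cos\theta|\ge d^{-\alpha}\,]\le e^{-d^{1-2\alpha}/5}$ for all $\alpha\in[0,1/2]$ and all $d\ge 2$. The one place I expect to need care is exactly this last comparison of constants: if one used the off-the-shelf $\pr[|g_1|\ge a]\le 2e^{-a^2/2}$ instead of the factor-free version, the stray factor of $2$ would weaken the exponent enough to clash with the target constant $1/5$ precisely in the regime $\alpha\to1/2$, where $d^{1-2\alpha}=O(1)$ and there is no slack; everything else is a routine $\chi^2$ computation. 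As a fallback that avoids the Gaussian representation altogether, one can integrate the exact density of $u_1$, namely $f(x)\propto(1-x^2)^{(d-3)/2}$ on $[-1,1]$, using $(1-x^2)^{(d-3)/2}\le e^{-(d-3)x^2/2}$ and the Stirling bound $O(\sqrt d)$ on the normalizing constant; this reproduces the same exponent up to harmless polynomial factors.
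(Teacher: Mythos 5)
Your proof is correct. Note first that the paper does not prove this statement at all: it is imported as a known fact with citations to \cite{CaiFanJiang} and to Corollary D.3 of \cite{DKS17-sq}, so there is no in-paper argument to compare against. Your derivation is a clean, self-contained substitute. The reduction via rotation invariance to the first coordinate $u_1$ of a single uniform vector is standard and valid; the identity $\{u_1^2\ge t^2\}=\{g_1^2(1-t^2)\ge t^2W\}$ with $W\sim\chi^2_{d-1}$ independent of $g_1$ is exact; the factor-free half-normal bound $\pr[|g_1|\ge a]\le e^{-a^2/2}$ does hold for all $a\ge 0$ (your monotonicity argument for $\Phi^c(a)\le\tfrac12 e^{-a^2/2}$ is the right one, and you are also right that the off-the-shelf version with the extra factor of $2$ would not survive the regime $\alpha\to 1/2$); and the choice $s=t^2/(2(1-t^2))$ in the $\chi^2_{d-1}$ moment generating function gives exactly $(1-t^2)^{(d-1)/2}\le e^{-(d-1)t^2/2}$. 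The final comparison $5(d-1)\ge 2d$ reduces to $d\ge 5/3$, so the bound holds for every integer $d\ge 2$; this restriction is genuinely needed (for $d=1$ the fact as literally stated is false, since $|\cos\theta|=1$ almost surely on $\cS^0$), but it is clearly the intended regime and is harmless for every use in the paper. The only microscopic gap is the endpoint $\alpha=0$, where $t=1$ and your algebra divides by $1-t^2$; there the event $\{|u_1|\ge 1\}$ has probability zero for $d\ge 2$, so the claim is trivial and the limit of your formula is consistent. Your fallback via the exact density $f(x)\propto(1-x^2)^{(d-3)/2}$ is essentially the route taken in Corollary D.3 of \cite{DKS17-sq}; the MGF route you lead with is tidier because it avoids estimating the normalizing constant.
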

    
\begin{fact}[Gaussian Hypercontractivity \cite{Bog:98,nelson1973free}] \label{lem:hypercontractivity}
If $p: \R^m \to \R$ is a polynomial of degree at most $k$, for every $t\geq 2$, 
\begin{align*}
    \E_{\bx \sim \cN(\vec 0,\vec I_m)}\left[|p(\bx)|^t\right]^{\frac{1}{t}} \leq (t-1)^{k/2}\sqrt{\E_{\bx \sim \cN(\vec  0,\vec I_m)}\left[p^2(\bx) \right]} \;.
\end{align*}
\end{fact}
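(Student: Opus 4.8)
The plan is to derive this from the hypercontractivity of the Ornstein--Uhlenbeck semigroup $\{U_\rho\}$, using the spectral fact (\Cref{fact:eigenfunction}) that $U_\rho$ multiplies the degree-$j$ Hermite level by $\rho^j$. The key input I would invoke is Nelson's hypercontractive inequality \cite{nelson1973free,Bog:98} specialized to the pair of exponents $(2,t)$: for every $t \ge 2$, setting $\rho := 1/\sqrt{t-1}$, one has $\E_{\bx \sim \cN(\vec 0, \vec I_m)}[|U_\rho g(\bx)|^t]^{1/t} \le \E_{\bx \sim \cN(\vec 0, \vec I_m)}[g^2(\bx)]^{1/2}$ for every $g \in L^2(\cN(\vec 0,\vec I_m))$. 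This is the one genuinely hard ingredient; if a self-contained argument were wanted, I would obtain it in the standard way --- prove the two-point $(2,t)$-hypercontractive inequality on $\{-1,1\}$, tensorize it over $\{-1,1\}^n$ (hypercontractivity tensorizes, being a statement about $L^q$-norms of tensor products with $q \ge p$), and pass to the Gaussian limit via the central limit theorem applied coordinatewise to $n^{-1/2}\sum_i \varepsilon_i$, where the fixed-degree-$k$ hypothesis is exactly what keeps the limit a polynomial and makes the $L^t$-convergence harmless.

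Given this, the reduction is purely algebraic. First I would write the Hermite decomposition $p = \sum_{j=0}^k p_j$, where $p_j$ is the orthogonal projection of $p$ onto the span of the degree-$j$ Hermite polynomials $\{h_{\vec a} : |\vec a| = j\}$; the sum is finite because $\deg p \le k$, and $p$ lies in every $L^t(\cN(\vec 0, \vec I_m))$ since Gaussians have finite moments of all orders. Then I would set $g := \sum_{j=0}^k (t-1)^{j/2} p_j$, so that by \Cref{fact:eigenfunction}, with $\rho = 1/\sqrt{t-1}$, $U_\rho g = \sum_{j=0}^k \rho^j (t-1)^{j/2} p_j = \sum_{j=0}^k p_j = p$.

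Finally I would chain the estimates. By orthonormality of the $h_{\vec a}$ (Parseval), $\E_{\bx \sim \cN(\vec 0, \vec I_m)}[g^2(\bx)] = \sum_{j=0}^k (t-1)^j\, \E[p_j^2] \le (t-1)^k \sum_{j=0}^k \E[p_j^2] = (t-1)^k\, \E_{\bx \sim \cN(\vec 0, \vec I_m)}[p^2(\bx)]$, where the inequality uses $(t-1)^j \le (t-1)^k$ for $0 \le j \le k$, valid precisely because $t \ge 2$ gives $t - 1 \ge 1$. Combining with Nelson's inequality applied to $g$ and using $U_\rho g = p$, $\E_{\bx \sim \cN(\vec 0, \vec I_m)}[|p(\bx)|^t]^{1/t} = \E[|U_\rho g|^t]^{1/t} \le \E[g^2]^{1/2} \le (t-1)^{k/2}\sqrt{\E_{\bx \sim \cN(\vec 0, \vec I_m)}[p^2(\bx)]}$, which is the claim. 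The main obstacle is establishing the semigroup hypercontractivity itself; everything downstream is a one-line eigenvalue computation plus Parseval and the elementary monotonicity $(t-1)^j \le (t-1)^k$.
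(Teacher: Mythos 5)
Your derivation is correct, and it is the canonical one: the paper states this as a citation-only fact (to Nelson/Bogachev) and offers no proof of its own, so there is nothing to compare against beyond noting that reducing the polynomial $(2,t)$-moment bound to semigroup hypercontractivity via the Hermite-level amplification $g=\sum_j (t-1)^{j/2}p_j$ with $U_{1/\sqrt{t-1}}\,g=p$ (exactly the diagonal action recorded in \Cref{fact:eigenfunction}) and Parseval is the standard textbook argument. All the steps check out, including the reliance on $t-1\geq 1$ for the bound $\E[g^2]\leq (t-1)^k\E[p^2]$, and your identification of Nelson's inequality as the one nontrivial ingredient is accurate.
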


\begin{fact}[Volume of $d$-Ball] \label{fact:volume}
For any $R>0$ let $\mathcal{S}_{d-1}(R) = \{\x \in \R^d: \|\x\|_2\leq R  \}$. Then,
\begin{align*}
    \mathrm{Vol}(\mathcal{S}_{d-1})  = O \left( \frac{1}{\sqrt{\pi d}} \left( \frac{2\pi e}{d} \right)^{d/2} R^d \right) \;.
\end{align*}
\end{fact}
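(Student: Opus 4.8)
The plan is to reduce to the standard closed-form expression for the volume of a Euclidean ball and then estimate the resulting Gamma function via Stirling's approximation. First I would recall (or quickly re-derive, e.g., by computing the Gaussian integral $\int_{\R^d} e^{-\|\bx\|_2^2}\,\d \bx = \pi^{d/2}$ in two ways --- directly, and via the ``onion'' decomposition of $\R^d$ into spherical shells) the exact formula
\begin{align*}
    \mathrm{Vol}(\mathcal{S}_{d-1}(R)) = \frac{\pi^{d/2}}{\Gamma(d/2 + 1)}\, R^d \;.
\end{align*}
The entire content of the claimed bound then lies in lower-bounding the denominator $\Gamma(d/2+1)$, so that the reciprocal is upper-bounded.

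Next I would apply Stirling's formula in the one-sided form $\Gamma(z+1) \geq \sqrt{2\pi z}\,(z/e)^z$, valid for every $z > 0$; this is an elementary consequence of the (alternating) Stirling series and can be cited as a black box. Taking $z = d/2$ gives
\begin{align*}
    \Gamma(d/2 + 1) \;\geq\; \sqrt{\pi d}\left(\frac{d}{2e}\right)^{d/2} \;.
\end{align*}
Plugging this into the volume formula yields
\begin{align*}
    \mathrm{Vol}(\mathcal{S}_{d-1}(R)) \;\leq\; \frac{\pi^{d/2}}{\sqrt{\pi d}}\left(\frac{2e}{d}\right)^{d/2} R^d \;=\; \frac{1}{\sqrt{\pi d}}\left(\frac{2\pi e}{d}\right)^{d/2} R^d \;,
\end{align*}
which is exactly the asserted $O(\cdot)$ bound. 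Since the same Stirling estimate is tight up to a $1+o(1)$ factor, one in fact obtains a matching lower bound, so the estimate is asymptotically exact; but only the upper bound is needed here.

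I do not expect any genuine obstacle. The only subtlety worth flagging is a direction-of-inequality check: to get an \emph{upper} bound on the volume one must use a \emph{lower} bound on $\Gamma(d/2+1)$, so one should invoke the one-sided Stirling inequality rather than the asymptotic equivalence (the latter would also work but would require tracking the error term). A second, purely cosmetic, point is that the formula is stated for the closed ball $\mathcal{S}_{d-1}(R) = \{\bx : \|\bx\|_2 \le R\}$, which has the same Lebesgue measure as the open ball since the bounding sphere has measure zero, so nothing needs to be said about that.
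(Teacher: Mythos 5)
Your proof is correct: the exact formula $\mathrm{Vol}(\mathcal{S}_{d-1}(R)) = \pi^{d/2}R^d/\Gamma(d/2+1)$ combined with the one-sided Stirling bound $\Gamma(z+1)\geq\sqrt{2\pi z}\,(z/e)^z$ at $z=d/2$ yields exactly the stated estimate, and you correctly flag that an upper bound on the volume requires a lower bound on the Gamma function. The paper states this as a black-box fact without proof, so there is nothing to compare against; your argument is the standard derivation one would supply.
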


	\begin{fact}\label{fact:chi_square_for_gaussians3}
	The following holds for the chi-square divergence between two univariate Gaussians:
	\begin{align*}
	   \chi^2(\cN(\mu_1,\sigma_1^2),\cN(\mu_2,\sigma_2^2)) = \frac{\sigma_2^2}{\sigma_1\sqrt{2 \sigma_2^2-\sigma_1^2}}\exp\left(\frac{(\mu_1-\mu_2)^2}{2\sigma_2^2 - \sigma_1^2}  \right) - 1 \;.
	\end{align*}
	\end{fact}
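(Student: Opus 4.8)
The plan is a direct computation from the definition $\chi^2(P,Q) = \int_{\R} P(x)^2/Q(x)\,\d x - 1$. First I would substitute the densities of $P = \cN(\mu_1,\sigma_1^2)$ and $Q = \cN(\mu_2,\sigma_2^2)$ and collect the normalizing constants, obtaining
\begin{align*}
\frac{P(x)^2}{Q(x)} = \frac{\sigma_2}{\sqrt{2\pi}\,\sigma_1^2}\exp\!\left(-\frac{(x-\mu_1)^2}{\sigma_1^2} + \frac{(x-\mu_2)^2}{2\sigma_2^2}\right) \;.
\end{align*}
Expanding the exponent as a quadratic $-a x^2 + b x + c$ in $x$, the leading coefficient is $a = (2\sigma_2^2 - \sigma_1^2)/(2\sigma_1^2\sigma_2^2)$, the linear coefficient is $b = (2\mu_1\sigma_2^2 - \mu_2\sigma_1^2)/(\sigma_1^2\sigma_2^2)$, and the constant is $c = (\mu_2^2\sigma_1^2 - 2\mu_1^2\sigma_2^2)/(2\sigma_1^2\sigma_2^2)$. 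Note that the integral converges (and the divergence is finite) precisely when $2\sigma_2^2 > \sigma_1^2$; I would state this as the standing assumption, under which $a>0$.

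Next I would complete the square and apply the standard Gaussian integral $\int_\R e^{-ax^2 + bx + c}\,\d x = \sqrt{\pi/a}\, e^{c + b^2/(4a)}$. The prefactor simplifies: $\sqrt{\pi/a} = \sigma_1\sigma_2\sqrt{2\pi/(2\sigma_2^2 - \sigma_1^2)}$, so together with the factor $\sigma_2/(\sqrt{2\pi}\,\sigma_1^2)$ this yields exactly $\sigma_2^2/(\sigma_1\sqrt{2\sigma_2^2 - \sigma_1^2})$, matching the claimed coefficient. It then remains to verify that $c + b^2/(4a)$ equals $(\mu_1 - \mu_2)^2/(2\sigma_2^2 - \sigma_1^2)$: clearing the common denominator $2\sigma_1^2\sigma_2^2(2\sigma_2^2 - \sigma_1^2)$ reduces the numerator to $(\mu_2^2\sigma_1^2 - 2\mu_1^2\sigma_2^2)(2\sigma_2^2 - \sigma_1^2) + (2\mu_1\sigma_2^2 - \mu_2\sigma_1^2)^2$, which upon expansion collapses to $2\sigma_1^2\sigma_2^2(\mu_1 - \mu_2)^2$; dividing gives the stated exponent. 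Subtracting $1$ at the end yields the formula.

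The whole argument is a routine Gaussian-integral computation with no conceptual obstacle. The only places that require care are the algebraic simplification of $c + b^2/(4a)$ — the cross terms in the two squared/product expressions must cancel so that only $(\mu_1 - \mu_2)^2$ survives — and the bookkeeping of the $\sqrt{2\pi}$ and $\sigma$ factors so that the prefactor comes out exactly. As a sanity check I would verify the special case $\sigma_1 = \sigma_2 = 1$, where the formula correctly reduces to the familiar $e^{(\mu_1-\mu_2)^2} - 1$, and observe that $\mu_1 = \mu_2$, $\sigma_1 = \sigma_2$ gives $\chi^2 = 0$ as it must.
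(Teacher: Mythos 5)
Your computation is correct, and the paper itself states this as a background fact without proof, so there is nothing to compare against: a direct evaluation of $\int P^2/Q$ via completing the square is the standard (and essentially only) way to verify it. I checked the quadratic coefficients $a,b,c$, the prefactor bookkeeping, and the collapse of $c+b^2/(4a)$ to $(\mu_1-\mu_2)^2/(2\sigma_2^2-\sigma_1^2)$; all are right, and your explicit mention of the finiteness condition $2\sigma_2^2>\sigma_1^2$ is a detail the paper's statement leaves implicit.
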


In the following we let $\mathbb{C}$ denote the set of complex numbers.

\begin{definition}[Gershgorin Discs]
    For any complex $n \times n$ matrix $\vec A$, for $i \in [n]$, let $R'_i(\vec A) = \sum_{j \neq i}|a_{ij}|$ and let $G(\bA) = \bigcup_{i=1}^n \{z \in \mathbb{C}:|z-a_{ii}|\leq R_i'(\bA) \}$. Each disc $\{z \in \mathbb{C}:|z-a_{ii}|\leq R_i'(\bA) \}$ is called Gershgorin disc and their union $G(\bA)$ is called the Gershgorin domain.
\end{definition}

\begin{fact}[Gershgorin's Disc Theorem]\label{fact:Gershgorin}
For any complex $n\times n$ matrix $\bA$, all the eigenvalues of $\bA$ belong to the Gershgorin domain $G(\bA)$.
\end{fact}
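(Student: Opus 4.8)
The plan is to prove Gershgorin's disc theorem by the classical ``largest-coordinate of an eigenvector'' argument. First I would fix an arbitrary eigenvalue $\lambda \in \mathbb{C}$ of $\bA$ together with a corresponding eigenvector $\bx = (x_1,\dots,x_n)^\top$, which by definition is nonzero and satisfies $\bA\bx = \lambda \bx$. Since $\bx \neq \vec 0$, I can choose an index $i \in [n]$ attaining $|x_i| = \max_{j \in [n]} |x_j|$; then $|x_i| > 0$. This maximizing coordinate is the only real device in the proof.

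Next I would expand the $i$-th coordinate of the identity $\bA\bx = \lambda\bx$, namely $\sum_{j=1}^{n} a_{ij} x_j = \lambda x_i$, and separate out the diagonal term to obtain $(\lambda - a_{ii})\, x_i = \sum_{j \neq i} a_{ij} x_j$. Taking absolute values and applying the triangle inequality, then bounding $|x_j| \leq |x_i|$ for every $j$, yields $|\lambda - a_{ii}|\,|x_i| \leq \sum_{j \neq i} |a_{ij}|\,|x_j| \leq \big(\sum_{j \neq i} |a_{ij}|\big)\,|x_i| = R_i'(\bA)\,|x_i|$. Dividing through by $|x_i| > 0$ gives $|\lambda - a_{ii}| \leq R_i'(\bA)$, i.e.\ $\lambda$ lies in the $i$-th Gershgorin disc $\{z \in \mathbb{C} : |z - a_{ii}| \leq R_i'(\bA)\}$, and hence in the Gershgorin domain $G(\bA) = \bigcup_{i=1}^{n} \{z \in \mathbb{C} : |z - a_{ii}| \leq R_i'(\bA)\}$. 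Since $\lambda$ was an arbitrary eigenvalue, all eigenvalues of $\bA$ lie in $G(\bA)$, which is the claim.

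I do not expect a genuine obstacle here: the argument is self-contained and uses nothing beyond the triangle inequality. The one point deserving (mild) care is the legitimacy of the final division, which is ensured precisely because an eigenvector is nonzero and $i$ was selected to maximize $|x_j|$, so $|x_i| > 0$. No external result from the excerpt is needed for this fact.
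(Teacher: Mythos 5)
Your proof is correct and complete: it is the standard ``maximal coordinate of an eigenvector'' argument for Gershgorin's theorem, and the division at the end is properly justified by the nonvanishing of the maximizing coordinate. The paper itself states this result as a known fact without proof, so there is nothing to compare against; your argument matches the classical textbook derivation and fills the (intentionally omitted) proof correctly.
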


\section{Omitted Proofs from \Cref{sec:hard-instance}}\label{sec:appendix-sec4}

\subsection{Concentration of Gaussian Polynomials}\label{sec:concentration}

We restate and prove the following:

\MOMENTS*

The proof follows by applying the lemma below for the polynomials $p,p^2$ and $p^4$ which are of degree $k=t,2t$ and $4t$ respectively.

\begin{lemma} \label{lem:polys-conc}
For any $\eps>0$, if a set $S$ of $N > 10 \sigma^2 \new{m^{3k}}/\eps^2$ samples is drawn i.i.d.\ from $\cN(\vec 0,\vec I_m)$, then with probability at least $0.9$ we have that for all polynomials $p : \R^m \to \R$ with $\E_{\bx \sim \cN(\vec 0,\vec I_m)}[p^2(\bx)] \leq \sigma^2$ and degree at most $k$  it holds that
\begin{align*}
    \left| \E_{\x\sim \cU(S)}[p(\bx)] - \E_{\bx \sim \cN(\vec 0,\vec I_m)}[p(\bx)] \right| \leq \eps \;.
\end{align*}
\end{lemma}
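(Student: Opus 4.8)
\textbf{Proof plan for Lemma~\ref{lem:polys-conc}.}
The plan is to reduce the uniform (over all degree-$\le k$ polynomials) statement to a finite, coordinate-wise statement about the empirical Hermite moments, via the Hermite expansion. First I would write any polynomial $p$ of degree at most $k$ in the normalized Hermite basis as $p(\bx) = \sum_{|\vec a| \le k} c_{\vec a} h_{\vec a}(\bx)$, so that $\E_{\bx \sim \cN(\vec 0, \vec I_m)}[p^2(\bx)] = \sum_{|\vec a|\le k} c_{\vec a}^2 \le \sigma^2$ by orthonormality, and $\E_{\bx \sim \cN(\vec 0,\vec I_m)}[p(\bx)] = c_{\vec 0}$. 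The key observation is that the quantity to control, $\E_{\bx \sim \cU(S)}[p(\bx)] - \E_{\bx\sim \cN(\vec 0,\vec I_m)}[p(\bx)] = \sum_{\vec a \ne \vec 0,\, |\vec a|\le k} c_{\vec a}\,\widehat{M}_{\vec a}$, where $\widehat{M}_{\vec a} := \E_{\bx \sim \cU(S)}[h_{\vec a}(\bx)]$ is the empirical average of $h_{\vec a}$ (note $\E[h_{\vec a}(\bx)]=0$ for $\vec a \ne \vec 0$). By Cauchy--Schwarz this is at most $\sigma \cdot \bigl(\sum_{\vec a \ne \vec 0,\,|\vec a|\le k} \widehat{M}_{\vec a}^2\bigr)^{1/2}$, so it suffices to show that with probability at least $0.9$ we have $\sum_{\vec a \ne \vec 0,\,|\vec a|\le k} \widehat{M}_{\vec a}^2 \le \eps^2/\sigma^2$.

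The second step is a first/second-moment computation on the empirical Hermite averages. For each fixed multi-index $\vec a$ with $\vec a \ne \vec 0$ and $|\vec a|\le k$, the random variable $\widehat{M}_{\vec a}$ is an average of $N$ i.i.d.\ copies of $h_{\vec a}(\bx)$, which has mean $0$ and variance $\E_{\bx\sim\cN(\vec 0,\vec I_m)}[h_{\vec a}^2(\bx)]=1$; hence $\E[\widehat{M}_{\vec a}^2] = 1/N$. Summing over all such $\vec a$ and using that the number of multi-indices $\vec a \in \Z_{\ge 0}^m$ with $|\vec a| \le k$ is at most $(m+1)^k \le m^{3k}$ (a crude but sufficient bound, valid for $m\ge 2$; for $m=1$ the claim is trivial to adjust), we get $\E\bigl[\sum_{\vec a \ne \vec 0,\,|\vec a|\le k} \widehat{M}_{\vec a}^2\bigr] \le m^{3k}/N$. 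By Markov's inequality, $\Pr\bigl[\sum_{\vec a} \widehat{M}_{\vec a}^2 > 10 m^{3k}/N\bigr] \le 1/10$. Now invoking the hypothesis $N > 10\sigma^2 m^{3k}/\eps^2$ gives $10 m^{3k}/N < \eps^2/\sigma^2$, so with probability at least $0.9$ we have $\sum_{\vec a} \widehat{M}_{\vec a}^2 \le \eps^2/\sigma^2$, which by the Cauchy--Schwarz bound of the previous paragraph yields $\bigl|\E_{\bx\sim\cU(S)}[p(\bx)] - \E_{\bx\sim\cN(\vec 0,\vec I_m)}[p(\bx)]\bigr| \le \eps$ simultaneously for every polynomial $p$ of degree at most $k$ with $\E[p^2]\le\sigma^2$.

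I expect the only real subtlety to be bookkeeping: making sure the bound on the number of multi-indices of degree $\le k$ in $m$ variables is clean enough to match the stated sample complexity $N > 10\sigma^2 m^{3k}/\eps^2$ (the exponent $3k$ is generous, so $(m+1)^k$ or even $\binom{m+k}{k}$ comfortably fits, though one should double-check the edge case $m=1$ where $m^{3k}=1$), and being careful that the ``with probability $0.9$'' event is the single event $\{\sum_{\vec a}\widehat{M}_{\vec a}^2 \le \eps^2/\sigma^2\}$ — this is what makes the statement uniform over all $p$ at once rather than requiring a union bound over polynomials. Everything else (orthonormality of Hermite polynomials, that $h_{\vec a}$ has unit variance and zero mean for $\vec a\ne\vec 0$, linearity of expectation for the variance of an empirical average) is standard. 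Then Claim~\ref{clm:empirical-moments} follows by applying this lemma three times, to $p$, $p^2$, $p^4$, which have degrees $t, 2t, 4t$ and (using Gaussian hypercontractivity, Fact~\ref{lem:hypercontractivity}) controlled second moments, adjusting constants and the failure probability by a union bound over the three applications.
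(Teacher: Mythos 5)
Your proof is correct, and it takes a genuinely different (and in fact structurally tighter) route than the paper's, though the two arrive at the same sample complexity because of the crude $m^{3k}$ bookkeeping.

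The paper's proof also expands $p=\sum_{\vec J} a_{\vec J} h_{\vec J}$ in the Hermite basis, but then controls the error via an $\ell^1\times\ell^\infty$ pairing: it applies Chebyshev to each empirical Hermite average $\widehat M_{\vec J}$ at deviation threshold $\eps/(m^k\sigma)$, union-bounds over the $\approx m^k$ multi-indices, and finally uses $|a_{\vec J}|\le\sigma$ (the crude consequence of Parseval) to conclude $|\sum a_{\vec J}\widehat M_{\vec J}|\le\sigma\cdot m^k\cdot\eps/(m^k\sigma)=\eps$. Your proof instead pairs via Cauchy--Schwarz, $|\sum c_{\vec a}\widehat M_{\vec a}|\le\sigma\bigl(\sum\widehat M_{\vec a}^2\bigr)^{1/2}$, and controls the single scalar $\sum_{\vec a\ne\vec 0}\widehat M_{\vec a}^2$ by Markov, using $\E\bigl[\sum\widehat M_{\vec a}^2\bigr]=\#\{\vec a\}/N$. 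This avoids the union bound over coordinates entirely. The comparison is instructive: the paper ``spends'' a factor $m^k$ in the union bound and a factor $m^{2k}$ in the per-coordinate threshold, for a total of $m^{3k}$; your Cauchy--Schwarz/Markov argument requires only $N\gtrsim\sigma^2\cdot\#\{\vec a\}/\eps^2$, which is $\approx\sigma^2\binom{m+k}{k}/\eps^2\approx\sigma^2 m^k/\eps^2$ in the regime $m\gg k$ where the lemma is applied --- roughly an $m^{2k}$ improvement. You gave this improvement away by bounding $\#\{\vec a\}\le(m+1)^k\le m^{3k}$ to match the lemma statement (which is fine, and your edge-case caveat about $m=1$, where $m^{3k}=1$ undercounts, is correct and also afflicts the paper's implicit $\#\{\vec a\}\le m^k$ bound; both are irrelevant since the lemma is invoked only with $m>Ct^2\gg k$). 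The sketch of how the lemma feeds into Claim~\ref{clm:empirical-moments} via three applications and hypercontractivity also matches the paper's usage.
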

\begin{proof}
First, using Chebyshev's inequality, we have the following concentration for every normalized probabilist's Hermite polynomial:
    \begin{align}
         \pr_{\bx_1,...,\bx_N \sim \cN(\vec 0, \vec I_m) }\left[   \left| \E_{\x\sim \cU(S)}[h_{\vec J}(\bx)] - \E_{\bx \sim \cN(\vec 0, \vec I_m)}[h_{\vec J}(\bx)]  \right| > \frac{\eps}{m^k\sigma} \right] &\leq \frac{\sigma^2 m^{2k}}{N \eps^2} \Var_{\bx \sim \cN(\vec 0,\vec I_m)}[h_{\vec J}(\bx)] \notag\\
    &= \frac{\sigma^2 m^{2k}}{N \eps^2} \E_{\bx \sim \cN(\vec 0,\vec I_m)}[h^2_{\vec J}(\bx)] \notag\\
    &= \frac{\sigma^2 m^{2k}}{N \eps^2}  \leq \frac{0.1}{m^k} \;, \label{eq:chebysevv}
    \end{align}
where the last line used that $N > 10 \sigma^2 m^{3k}/\eps^2$. In what follows we condition on the event that $|\E_{\x\sim \cU(S)}[h_{\vec J}(\bx)] - \E_{\bx \sim \cN(\vec 0, \vec I_m)}[h_{\vec J}(\bx)]| \leq \eps$ for all ${\vec J} \in \N^m : |{\vec J}| \leq k$, which, by a union bound and \Cref{eq:chebysevv} holds with probability at least $0.9$.
We expand $p(\bx)$ on the basis of the normalized probabilist's Hermite polynomials $p(\bx) = \sum_{{\vec J} \in \N^m : |{\vec J}| \leq k} a_{\vec J} h_{\vec J}(\bx)$, and note that $|a_{\vec J}| \leq \sigma$ for all these coefficients (because by Parseval's identity $\sum_{{\vec J}} a_{\vec J}^2 \leq \sigma^2$). Therefore, we conclude that
\begin{align*}
     \left| \E_{\x\sim \cU(S)}[p(\bx)] - \E_{\bx \sim \cN(\vec 0,\vec I_m)}[p(\bx)] \right| &\leq \sum_{{\vec J} \in \N^m : |{\vec J}| \leq k} |a_{\vec J}| \left| \E_{\x\sim \cU(S)}[h_{\vec J}(\bx)] - \E_{\bx \sim \cN(\vec 0,\vec I_m)}[h_{\vec J}(\bx)]  \right| \\
     &\leq \sigma m^k \eps/(m^k \sigma) =\eps \;.
\end{align*}
    
\end{proof}

\subsection{Omitted Details from Proof of \Cref{it:dtv-separation-or}} \label{sec:appendix_dtv}

\CASEANALYSIS*
 \begin{proof}
 Let $\iint_{\bx, \bx' \in \R^m} \min \{  A_i(\bx) ,  A_j(\bx') \} = \cI_1 + \cI_2 + \cI_3 + \cI_4$, where 
 \begin{enumerate}
     \item $\cI_1 = \iint_{\text{ $\| \bx - \bm \mu_i \|_2 > R$ and $\| \bx' - \bm \mu_j \|_2 \leq R$} } \min \{ A_i(\bx)  , A_j(\bx') \} \d \bx \d \bx'$,
     \item $\cI_2 = \iint_{\text{ $\| \bx - \bm \mu_i \|_2 \leq R$ and $\| \bx' - \bm \mu_j \|_2 > R$} } \min \{ A_i(\bx)  , A_j(\bx') \} \d \bx \d \bx'$,
     \item $\cI_3 = \iint_{\text{ $\| \bx - \bm \mu_i \|_2 > R$ and $\| \bx' - \bm \mu_j \|_2 > R$} } \min \{ A_i(\bx)  , A_j(\bx') \} \d \bx \d \bx'$,
     \item $\cI_4 = \iint_{\text{ $\| \bx - \bm \mu_i \|_2 \leq R$ and $\| \bx' - \bm \mu_j \|_2 \leq R$} } \min \{ A_i(\bx)  , A_j(\bx') \} \d \bx \d \bx'$.
 \end{enumerate}
 We start with the first term. Recall that $A_i$ is an $m$-dimensional Gaussian with mean $\bm \mu_i$ and variance $\delta$ in all directions. We have the following:
 \begin{align}
     \cI_1 &\leq \int_{\| \bx - \bm \mu_i \|_2 > R}  \sqrt{ A_i(\bx)} \d \bx  \int_{\| \bx' - \bm \mu_j \|_2 \leq R} \sqrt{ A_j(\bx')}  \d \bx'  \tag{using $\min(a,b) \leq \sqrt{a b}$}\\ 
     &\leq \int_{\| \bx - \bm \mu_i \|_2 > R}  (2\pi \delta)^{-m/4}e^{ -\frac{\|\bx - \bm \mu_i\|_2^2}{4\delta} } \d \bx   
      \int_{\| \bx' - \bm \mu_j \|_2 \leq R}  (2\pi \delta)^{-m/4}e^{ -\frac{\|\bx' - \bm \mu_j\|_2^2}{4\delta} } \d \bx'  \notag \\
     &\leq (2\pi \delta)^{m/4}  \int_{\| \bx - \mu_i \|_2 > R} (2\pi \delta)^{-m/2}e^{ -\frac{\|\bx - \bm \mu_i\|_2^2}{4\delta}} \d \bx 
      \int_{\| \bx' - \bm \mu_j \|_2 \leq R} (2\pi \delta)^{-m/4}e^{ -\frac{\|\bx' - \bm \mu_j\|_2^2}{4\delta} } \d \bx'  \notag \\
     &\leq (2\pi \delta)^{m/4} \delta \cdot \delta^{-m/4}  \mathrm{Vol}(\mathcal{S}_{d-1}(R)) \tag{using \Cref{eq:gaussian-conc} for the first integral} \\
     &\leq (2\pi)^{m/4} \delta  \left(\frac{1}{\sqrt{ \pi m}} \left( \frac{2\pi e}{m} \right)^{m/2} R^m \right) \tag{by \Cref{fact:volume}}\\
     &\leq C_1^{m}  m^{-m/2}\delta^{1+ m/2} m^{m/2} (\log(1/\delta))^{m/2} \tag{using $R = C' \sqrt{\delta m \log(1/\delta)}$} \\
     &\leq C_1^{m}  \delta^{1+ m/2}  (\log(1/\delta))^{m/2}  \label{eq:firsttermbound}
 \end{align}
 for a sufficiently large constant $C_1$. The same bound can be derived for $\cI_2$. For $\cI_3$ we use \Cref{eq:gaussian-conc} for both integrals to obtain $\cI_3 \leq (2\pi \delta)^{m/2}\delta^2$. Finally, for the last term $\cI_4$ we have that $\cI_4 \leq\delta^{-m/2} (\mathrm{Vol}(\mathcal{S}_{d-1}(R)))^2 \leq  C_2^m  \delta^{-m/2} m^{-m}\delta^{m} m^{m} (\log(1/\delta))^{m} \leq  C_2^m  \delta^{m/2}(\log(1/\delta))^{m} $, where the first step used $\min\{ A_i(\bx), A_j(\bx)\} \leq  \delta^{-m/2}$ and that both integrals are over a ball of radius $R$.
 Putting everything together, we have shown that
 \begin{align}
    \mathcal{I}_{\vec V,\vec U} &\leq   C_3^{m} k \delta^{m/2} (\log(1/\delta))^{m} \leq C_4^{m} k \delta^{0.4 m} \;.
 \end{align}
 \end{proof}
 
 \begin{claim}\label{claim:tv-vound}
 In the setting of \Cref{prop:hard_instance} it holds  $\dtv(P_{A,\bV}, \cN(\vec 0,\bI_d)) > 0.99$.
 \end{claim}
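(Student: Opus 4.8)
The plan is to first collapse the $d$-dimensional statement to an $m$-dimensional one, and then mimic the case analysis already carried out for $\mathcal{I}_{\vec V,\vec U}$, with the role of one of the narrow components now played by the fixed Gaussian $\phi_m$. For the reduction: complete the rows of $\vec V$ to an orthonormal basis of $\R^d$, so that $\|\bx\|_2^2 = \|\vec V\bx\|_2^2 + \|\mathrm{Proj}_{\mathcal V^\perp}(\bx)\|_2^2$ and hence $\phi_d(\bx) = \phi_m(\vec V\bx)\,\phi_{d-m}(\mathrm{Proj}_{\mathcal V^\perp}(\bx))$. Combining this with the expression $P_{A,\vec V}(\bx) = A(\vec V\bx)\,\phi_{d-m}(\mathrm{Proj}_{\mathcal V^\perp}(\bx))$ from \Cref{eq:hiden-dir}, the common factor $\phi_{d-m}(\mathrm{Proj}_{\mathcal V^\perp}(\bx))$ factors out of the pointwise minimum $\min\{P_{A,\vec V}(\bx),\phi_d(\bx)\}$; integrating it out (it has total mass $1$, via the change of variables $\bx \mapsto (\vec V\bx, \mathrm{Proj}_{\mathcal V^\perp}(\bx))$) yields $\dtv(P_{A,\vec V},\cN(\vec 0,\bI_d)) = \dtv(A,\cN(\vec 0,\bI_m)) = 1 - \int_{\R^m}\min\{A(\by),\phi_m(\by)\}\,\d\by$. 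So it suffices to prove $\int_{\R^m}\min\{A(\by),\phi_m(\by)\}\,\d\by < 0.01$.

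Next, write $A = \sum_{i=1}^k w_i A_i$ with $A_i = \cN(\bm\mu_i,\delta\bI_m)$, and use $\phi_m = \sum_i w_i \phi_m$ together with the elementary fact that $\min\{\sum_i a_i, B\} \le \sum_i \min\{a_i, B\}$ for nonnegative reals, followed by $\min\{w_i A_i(\by),\phi_m(\by)\} \le \min\{A_i(\by),\phi_m(\by)\}$ (since $w_i \le 1$), to obtain $\int_{\R^m}\min\{A,\phi_m\}\,\d\by \le \sum_{i=1}^k \int_{\R^m}\min\{A_i(\by),\phi_m(\by)\}\,\d\by$, a sum of $k$ terms.

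For each $i$, I would split the integral according to whether $\|\by-\bm\mu_i\|_2 > R$ or $\le R$, with $R := C'\sqrt{\delta m\log(1/\delta)}$ as in \Cref{eq:gaussian-conc}. On the tail $\{\|\by-\bm\mu_i\|_2 > R\}$, bound $\min\{A_i,\phi_m\} \le A_i$ and invoke Gaussian norm concentration (\Cref{lem:norm-conc}, or directly the estimate \Cref{eq:gaussian-conc}, noting $A_i$ has variance $\delta \le 2\delta$) to bound this mass by $\delta$. On the ball $\{\|\by-\bm\mu_i\|_2 \le R\}$, bound $\min\{A_i,\phi_m\} \le \phi_m$; here $\|\by\|_2 \ge \|\bm\mu_i\|_2 - R \ge 0.7\sqrt m$, using that $\|\bm\mu_i\|_2 = \Theta(\sqrt m)$ (this is \Cref{it:norm} of \Cref{lem:hard-inst}, scaled by $\rho=\sqrt{1-\delta}\in(1/2,1]$) and that $R = o(\sqrt m)$ since $\delta$ is small, so $\phi_m(\by) \le (2\pi)^{-m/2}e^{-0.24 m}$; multiplying by $\mathrm{Vol}(\mathcal S_{m-1}(R))$ from \Cref{fact:volume} (with $R^m = (C')^m(\delta m\log(1/\delta))^{m/2}$) the powers of $m$ cancel and this term is at most $(C_1)^m(\delta\log(1/\delta))^{m/2} \le (C_2)^m \delta^{0.4m}$. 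Summing over $i$, $\int_{\R^m}\min\{A,\phi_m\}\,\d\by \le k\,(C_3)^m\delta^{0.4m}$, and plugging in $\delta = ck^{-2.5/m}$ gives $k\,\delta^{0.4m} = c^{0.4m}$, so the whole quantity is at most $(C_3 c^{0.4})^m < 0.01$ for $c$ a sufficiently small absolute constant (using that $m = k^{2\eps}$ is large by the standing assumptions). This gives $\dtv(P_{A,\vec V},\cN(\vec 0,\bI_d)) > 0.99$, as claimed.

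The only mildly delicate point is the estimate on the ball in the third step: one must use that $\phi_m$ is exponentially small (in $m$) everywhere within distance $R$ of $\bm\mu_i$, which relies essentially on the means being bounded away from the origin, i.e., on \Cref{it:norm}; without a lower bound on $\|\bm\mu_i\|_2$, $\phi_m$ could be as large as $(2\pi)^{-m/2}$ near a component and the volume bound alone would not close the argument. Everything else is a near-verbatim repetition of the computation already done for $\mathcal I_{\vec V,\vec U}$, so I do not expect any genuine obstacle.
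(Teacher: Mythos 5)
Your reduction to the $m$-dimensional statement and the subsequent decomposition over components (using $\min\{\sum_i a_i, B\}\leq \sum_i\min\{a_i,B\}$ and $w_i\leq 1$) are both correct and are exactly what the paper does. The genuine problem is the tail estimate. You bound the tail of $\int\min\{A_i,\phi_m\}$ by $\pr_{A_i}[\|\by-\bm\mu_i\|_2>R]\leq\delta$, and then assert the per-component total is $(C_3)^m\delta^{0.4m}$. But $\delta\not\leq (C_3)^m\delta^{0.4m}$: with $\delta = ck^{-2.5/m}$ and $m=k^{2\eps}$, the exponent $2.5/m$ is tiny (because $m\gg\log k$), so $\delta$ is essentially the constant $c$, whereas $\delta^{0.4m}=c^{0.4m}k^{-1}$ is exponentially small. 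Consequently the tail contribution, summed over the $k$ components, is about $k\delta\approx ck$, which is far from $<0.01$. The CASEANALYSIS argument you are importing avoids this because there both factors in the min are narrow Gaussians of variance $\delta$: the $\min\{a,b\}\leq\sqrt{ab}$ trick there yields an extra $\delta^{\Omega(m)}$ from the normalizing constants of \emph{both} $\sqrt{A_i}$ and $\sqrt{A_j}$. In your single-variable integral $\min\{A_i,\phi_m\}$, the factor $\phi_m$ has unit variance, so simply dropping it on the tail loses everything.

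The paper's proof sidesteps this entirely. It observes that $\int_{\R^m}\min\{A_i,\phi_m\}\,\d\by = 1-\dtv(\cN(\bm\mu_i,\delta\bI_m),\cN(\vec 0,\bI_m)) \leq 1-\dtv(\cN(\bm\mu_i,\bI_m),\cN(\vec 0,\bI_m))$ since $\delta\leq 1$, and then reduces (by rotation invariance) to the one-dimensional overlap $\mathrm{erfc}(\|\bm\mu_i\|_2/(2\sqrt{2}))\leq\mathrm{erfc}(\sqrt{m}/4)\leq 1/(100k)$, where the final bound uses $m\gg\log k$. If you prefer to keep a per-component bound in the spirit of your argument, a clean fix is the geometric-mean (Bhattacharyya) bound, which needs no ball/tail split at all: $\int\min\{A_i,\phi_m\}\leq\int\sqrt{A_i\phi_m}=\bigl(\tfrac{2\sqrt{\delta}}{1+\delta}\bigr)^{m/2}\exp\bigl(-\tfrac{\|\bm\mu_i\|_2^2}{4(1+\delta)}\bigr)\leq e^{-\|\bm\mu_i\|_2^2/8}\leq e^{-\Omega(m)}$, and then $k\,e^{-\Omega(m)}<0.01$ under $m\gg\log k$. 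Either route replaces the too-lossy tail bound and closes the argument; as written, yours does not. (Minor and non-fatal: your pointwise bound $\|\by\|_2\geq 0.7\sqrt{m}$ on the ball uses $\|\bm\mu_i\|_2\geq 0.9\sqrt{m}$, but after the $U_\rho$ scaling the guaranteed lower bound is $0.9\rho\sqrt{m}$; this only changes constants and does not affect the ball estimate.)
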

 \begin{proof}
Let $\bv_1,\ldots,\bv_m$ denote the rows of $\bV$ and extend this set to an orthonormal basis $\bv_1,\ldots,\bv_m,\ldots, \bv_d$ of the entire $\R^d$. Let $\bV^\perp$ be the matrix having $\bv_{m+1},\ldots,\bv_d$ as rows and $\vec R$ be the matrix having $\bv_1,\ldots,\bv_m,\ldots, \bv_d$ as rows. Using the definition from \Cref{eq:hiden-dir} (and recalling that $\phi_d(\bx)$ denotes the pdf of $\cN(0,\bI_d$),
    \begin{align*}
         P_{A,\bV(\bz)} = A(\vec V \bz) \phi_{d-m}\left( \mathrm{Proj}_{ \mathcal{V}^\perp}(\bz) \right) 
         = A(\vec V \bz) \phi_{d-m}\left( \bV^\perp \bz \right) \;.
    \end{align*}
    As before, we examine the  integral $\cI := \int_{z \in \R^d} \min \left\{ P_{A,\bV}(\bz), \phi_d(\bz)  \right\}\d \bz $ for which we have the following:
    \begin{align}
        \cI &= \int_{z \in \R^d} \min \left\{ P_{A,\bV}(\bz), \phi_d(\bz)  \right\}\d \bz \notag \\
        &= \int_{z \in \R^d} \min \left\{  A(\vec V \bz) \phi_{d-m}\left( \bV^\perp \bz \right), \phi_{m}\left( \mathrm{Proj}_{ \mathcal{V}}(\bz) \right) \phi_{d-m}\left( \mathrm{Proj}_{ \mathcal{V}^\perp}(\bz) \right) \right\}\d \bz \notag \\
        &= \int_{z \in \R^d} \min \left\{  A(\vec V \bz) \phi_{d-m}\left( \bV^\perp \bz \right), \phi_{m}\left(  \bV  \bz\right) \phi_{d-m}\left(  \bV^\perp \bz \right) \right\}\d \bz \notag \\
        &= \int_{z \in \R^d} \min \left\{  A(\vec V \bR^\top \bz) \phi_{d-m}\left( \bV^\perp  \bR^\top\bz \right), \phi_{m}\left(  \bV   \bR^\top\bz\right) \phi_{d-m}\left(  \bV^\perp  \bR^\top\bz \right) \right\}\d \bz \tag{by rotating space by $ \bR^\top$} \notag \\
        &=\int_{z \in \R^d} \min \left\{  A(z_1,\ldots, z_m) \phi_{d-m}\left( z_{m+1},\ldots, z_d \right), \phi_{m}\left( z_1,\ldots, z_m \right) \phi_{d-m}\left(  z_{m+1},\ldots, z_d  \right) \right\}\d \bz \tag{using the definition of matrices $\bV,\bV^\perp,\bR$} \\
        &= \int_{(z_1,\ldots, z_m) \in \R^m} \min \left\{  A(z_1,\ldots, z_m)  , \phi_{m}\left( z_1,\ldots, z_m \right)   \right\} \d z_1 \cdots \d z_m \notag 
 \\
        &= \int_{\bx \in \R^m} \min \left\{  A(\bx)  , \phi_{m}\left( \bx \right)   \right\} \d \bx \tag{by renaming $\bx = (z_1,\ldots,z_m)$ }  \notag  \\
        &= \int_{\bx \in \R^m} \min \left\{   \sum_{i=1}^k \lambda_i A_i(\bx)  , \phi_{m}\left( \bx \right)   \right\} \d \bx \tag{$A = \sum_{i \in [k]} \lambda_i A_i$}\\
        &\leq  \sum_{i=1}^k \int_{\bx \in \R^m} \min \left\{  \lambda_i A_i(\bx)  , \phi_{m}\left( \bx \right)   \right\} \d \bx \tag{using $\min(a+b,c) \leq \min(a,c) + \min(b,c)$}\\
        &\leq k\max_{i \in [k]} \int_{\bx \in \R^m} \min \left\{   A_i(\bx)  , \phi_{m}\left( \bx \right)   \right\} \d \bx  \;, \label{eq:fistbound}
    \end{align}
    where the last step uses that $\lambda_i\leq 1$. Now, $A_i= \cN( \bm \mu_i,\delta\bI_m)$ with $\|\bm \mu_i\|_2 \geq 0.9 \sqrt{m}$ by \Cref{it:norm} of \Cref{lem:hard-inst} and $\delta$ is smaller than 1, thus we have that $\int_{\bx \in \R^m} \min \left\{   A_i(\bx)  , \phi_{m}\left( \bx \right)   \right\} \d \bx  = 1 - \dtv(\cN(\bm \mu_i,\delta \bI_m), \cN(\vec 0,\bI_m))\leq 1 - \dtv(\cN(\bm \mu_i,\bI_m), \cN(\vec 0,\bI_m))$. By a rotation argument similar to what we did earlier, the contribution comes only from the error along the direction that connects the origin to the point $\bm \mu_i$
    \begin{align*}
        1-\dtv\left(\cN(\bm \mu_i,\bI_m), \cN(\vec 0,\bI_m)\right) &= 1-\dtv\left(\cN(\|\bm \mu_i\|_2,1), \cN(0,1)\right) 
        = \mathrm{erfc}\left( \frac{\|\bm \mu_i\|_2}{2\sqrt{2}} \right) \\
        &\leq \mathrm{erfc}\left( \sqrt{m}/4 \right)
        \leq \frac{1}{100k} \;,   
    \end{align*}
     \new{where the last step requires $m > C \log(k)$, which is true since $m=k^{2\eps}$ and we have assumed $k^{\eps} > C \sqrt{\log k}$}. Putting everything together and combining with the bound of \Cref{eq:fistbound} we conclude that $\dtv(P_{A,\bV}, \cN(0,\bI_d)) = 1-\int_{z \in \R^d} \min \left\{ P_{A,\bV}(\bz), \phi_d(\bz)  \right\}\d \bz  \geq 1- k/(100k) =  0.99$.

\end{proof}

\section{Omitted Proofs from \Cref{sec:sqrt-separation}}\label{sec:appendix-linear-algebra}

We restate and prove the following result.

\MATRICESORTH*

\begin{proof}
    We will use the following basic fact:

    \begin{fact}\label{fact:nearly-orth-vec}
    For any $0<c<1/2$, there exists a set $S'$ of $2^{\Omega(d^{2c})}$ unit vectors in $\R^d$, such that any pair $\vec u,\vec v \in S'$ with $\vec u \neq \vec v$ satisfies $|\vec u^\top \vec v | \lesssim d^{-1/2+c}$.
    \end{fact}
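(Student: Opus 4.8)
The plan is to prove \Cref{fact:nearly-orth-vec} by the probabilistic method: draw the vectors i.i.d.\ uniformly from the sphere and argue that, with positive probability, no pair is too correlated. First I would set $N := \lfloor e^{d^{2c}/10} \rfloor$ and let $\vec u_1,\dots,\vec u_N$ be i.i.d.\ uniform on $\cS^{d-1}$. Since $0 < c < 1/2$, the exponent $\alpha := 1/2 - c$ lies in $(0,1/2)$, so \Cref{fact:orthogonality} applies: for any fixed pair $i \neq j$, writing $\theta_{ij}$ for the angle between $\vec u_i$ and $\vec u_j$ and using that $|\vec u_i^\top \vec u_j| = |\cos\theta_{ij}|$ for unit vectors,
\[
\Pr\big[\,|\vec u_i^\top \vec u_j| \geq d^{-1/2+c}\,\big] \;=\; \Pr\big[\,|\cos\theta_{ij}| \geq d^{-\alpha}\,\big] \;\leq\; e^{-d^{1-2\alpha}/5} \;=\; e^{-d^{2c}/5} .
\]

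Next I would bound the expected number of ``bad'' pairs --- those with $|\vec u_i^\top \vec u_j| \geq d^{-1/2+c}$ --- by linearity of expectation over the $\binom{N}{2} \leq N^2/2$ pairs: it is at most $\tfrac{N^2}{2}\, e^{-d^{2c}/5} \leq \tfrac{1}{2}\, e^{d^{2c}/5}\, e^{-d^{2c}/5} = \tfrac{1}{2} < 1$, using $N \leq e^{d^{2c}/10}$. Hence there is a realization of $\vec u_1,\dots,\vec u_N$ with no bad pair at all; fixing such a realization and setting $S' := \{\vec u_1,\dots,\vec u_N\}$ (the $\vec u_i$ are pairwise distinct with probability $1$), every distinct pair $\vec u,\vec v \in S'$ obeys $|\vec u^\top \vec v| < d^{-1/2+c} \lesssim d^{-1/2+c}$. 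Finally, for $d$ large enough that $\lfloor e^{d^{2c}/10}\rfloor \geq \tfrac{1}{2}e^{d^{2c}/10}$, we have $|S'| = N \geq \tfrac12 e^{d^{2c}/10} = 2^{\Omega(d^{2c})}$, which is the claimed cardinality (the implicit constant in $\Omega(\cdot)$ absorbs the requirement that $d$ be large).

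I do not expect a genuine obstacle: this is a textbook first-moment argument, and the only thing needing care is the parameter bookkeeping --- namely (i) that $\alpha = 1/2 - c$ lands in the admissible range $[0,1/2]$ of \Cref{fact:orthogonality}, which is exactly the hypothesis $c \in (0,1/2)$, and (ii) that the exponent constant $1/10$ in $N = \lfloor e^{d^{2c}/10}\rfloor$ is small enough to push the expected count below $1$ (any constant in $(0,1/5)$ works). If instead one wanted to retain a constant fraction of a larger pool, one could take $N$ up to roughly $\tfrac{1}{2}e^{d^{2c}/5}$, observe that by Markov's inequality there are fewer than $N/2$ bad pairs with probability at least $1/2$, delete one endpoint of each, and still be left with $2^{\Omega(d^{2c})}$ unit vectors; but the plain first-moment version above is cleaner and already suffices for the way \Cref{fact:nearly-orth-vec} is invoked in the proof of \Cref{fact:setofmatrices-improved}.
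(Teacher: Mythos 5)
Your argument is correct, and it is exactly the intended one. The paper states Fact~\ref{fact:nearly-orth-vec} without proof (it is invoked as a ``basic fact'' inside the proof of Lemma~\ref{fact:setofmatrices-improved}), but the ingredient it is built on, Fact~\ref{fact:orthogonality}, is explicitly stated in Appendix~A; your first-moment argument with $\alpha = 1/2 - c$ (so that $1-2\alpha = 2c$ and the hypothesis $0 < c < 1/2$ places $\alpha$ in the admissible range $[0,1/2]$) and $N = \lfloor e^{d^{2c}/10}\rfloor$ is the standard and, as far as one can tell, intended way to get from Fact~\ref{fact:orthogonality} to Fact~\ref{fact:nearly-orth-vec}. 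The parameter bookkeeping checks out: $\binom{N}{2}e^{-d^{2c}/5} \leq \tfrac12 e^{d^{2c}/5}e^{-d^{2c}/5} < 1$, so some realization has no bad pair, and $N \geq \tfrac12 e^{d^{2c}/10} = 2^{\Omega(d^{2c})}$ for $d$ large (with the implied constant in $\Omega(\cdot)$ absorbing the small-$d$ regime). No gaps.
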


    Let $S' = \{ \vec u_1,\ldots,\vec u_{|S'|} \}$ be the set of vectors from the fact above. Let $S''$ be the set of matrices $\{\vec B_i\}_{i=1}^{ |S'|/m}$ for where $ \vec B_i$ is defined to have as rows the vectors $\vec u_j$ for $j = (i-1)\cdot m +1, \ldots i\cdot m$. \new{Note that $|S'|/m= 2^{\Omega(d^{2c})}$ for any $d>(1/c)^{C/c}$ where $C$ is a sufficiently large constant}. Finally, let $S$ be the set of matrices $\{\vec A_i\}_{i=1}^{ |S'|/m}$ where for each $\vec B_i \in S''$ we consider the Singular Value Decomposition $\vec B_i = \vec U_i \vec \Sigma_i \vec V_i^\top$ and we let $\vec A_i$ be the matrix obtained by replacing the diagonal matrix $\vec \Sigma_i$ with identity (i.e., changing all singular values to 1). We will show that $S$ is the set of matrices satisfying the desideratum of \Cref{fact:setofmatrices-improved}.

    In particular, we claim the following. Let $C$ be a sufficiently large absolute constant, then:
    \begin{enumerate}[label = (\roman*)]
        \item For every $i \in |S''|$, all singular values of $\vec B_i$ belong in $[1-C m^2 d^{-1/2+c},1+C m^2 d^{-1/2+c}]$. \label{it:singular_vals}
        \item For every $i \in |S''|$, it holds $\| \bA_i - \bB_i \|_\fr \lesssim m^{2.5}d^{-1/2+c}$. \label{it:normbound}
        \item For every $i,j=1,\ldots,|S''|$, it holds $\|\bB_i\bB_j^\top \|_\op \lesssim m^2 d^{-1/2+c}$. \label{it:cor}
    \end{enumerate}
    Given the above, the proof of \Cref{fact:setofmatrices-improved} follows immediately by noting that
\begin{align*}
        \| \vec A_i \vec A_j^\top \|_\op  &= 
         \| (\vec B_i + \bA_i - \bB_i) \vec (\vec B_j + \bA_j - \bB_j)^\top \|_\op \\
         &\leq \| \bB_i \vec B_j^\top \|_\op + \| \vec B_i (\bA_j - \bB_j)^\top \|_\op + \| (\bA_i - \bB_i) \vec B_j^\top  \|_\op + \| (\bA_i - \bB_i)(\bA_j - \bB_j)^\top  \|_\op \\
         &\leq \| \bB_i \vec B_j^\top \|_\op  + \| \vec B_i \|_\op \|\bA_j - \bB_j\|_\fr +  \| \vec B_j^\top \|_\op  \|\bA_i - \bB_i\|_\fr + \| \bA_i - \bB_i\|_\fr  \| \bA_j - \bB_j\|_\fr \\
&\lesssim m^2 d^{-1/2+c} + m^{3}d^{-1/2+c} + m^{5}d^{-1/4+2c} \\
         &\lesssim  d^{-1/2+2c} \;,
    \end{align*}
    where the second line uses triangle inequality, the third line uses the sub-multiplicative property of the operator norm , i.e., that $\|\vec U \vec V\|_\op \leq \|\vec U \|_\op \|\vec V\|_\op$ as well as the fact $\|\vec V\|_\op \leq \|\vec V\|_\fr$, the fourth line uses our three claims (that we show later on) and the \new{last line uses our assumption $m\ll d^{c/5}$}.

    We now prove the three claims. For \Cref{it:singular_vals}, consider the matrix $\bB_i\bB_i^\top$ (which is a square $m \times m$ matrix). Using \Cref{fact:nearly-orth-vec}, the sum of the absolute values of its non-diagonal entries is
    \begin{align*}
        R = \sum_{k \neq \ell} | \vec u_{(i-1)m+k}^\top \vec u_{(i-1)m+\ell} | \lesssim m^2 d^{-1/2+c} \;.
    \end{align*}
    The diagonal entries of $\bB_i\bB_i^\top$ are all equal to one. Thus, by the Gershgorin's disc theorem \Cref{fact:Gershgorin}, every eigenvalue of $\bB_i\bB_i^\top$, i.e., singular value of $\bB_i$, lies  the interval $[1-R,1+R]$.

    For proving \Cref{it:normbound}, we note that 
    \begin{align*}
        \left\| \bA_i - \bB_i \right\|_\fr  = \sqrt{\sum_{k=1}^m(\sigma_k(\bB_i)-1)^2} 
        \leq \sqrt{m \cdot (R-1)^2}
        \lesssim m^{2.5}d^{-1/2+c} \;.
    \end{align*}
    
    Finally, regarding \Cref{it:cor}, for every $i,j \in[|S''|]$ with $i\neq j$, we have that
    \begin{align*}
        \|\bB_i\bB_j^\top \|_\op &\leq \sup_{\bz \in \cS^{m-1}} \bz^\top \bB_i\bB_j^\top \bz 
        \leq \sup_{\bz \in \cS^{m-1}} \left\langle \sum_{k \in [m]} z_k \vec u_{(i-1)m+k},  \sum_{\ell \in [m]} z_\ell \vec u_{(j-1)m+\ell}  \right\rangle \\
        &\leq  \sup_{\bz \in \cS^{m-1}} \sum_{k,\ell \in [m]}  z_k z_\ell  \left\langle  \vec u_{(i-1)m+k}, \vec u_{(j-1)m+\ell}   \right\rangle \\
        &\lesssim d^{-1/2+c} \sup_{z \in \cS^{m-1}} \sum_{k,\ell \in [m]}z_k z_\ell  
        \lesssim m^2 d^{-1/2+c} \;,
    \end{align*}
   where the last line uses \Cref{fact:nearly-orth-vec}.
\end{proof}

\section{Lower Bounds for Low-Degree Polynomial Tests}\label{sec:low_degree}

\begin{problem} \label{def:nongaussian_new}
	Let a distribution $A$ on $\R^m$. For a matrix $\bV \in \R^{m \times d}$, we let $P_{A,\bV}$ be the distribution as in \Cref{eq:hiden-dir}, 
i.e., the distribution that coincides with $A$ on the subspace spanned by the rows of $\bV$ and is standard Gaussian in the orthogonal subspace. Let $S$ be the set of nearly orthogonal vectors from \Cref{fact:setofmatrices}. Let $\cS = \{ P_{A,v} \}_{u \in S}$. We define the simple hypothesis testing problem where the null hypothesis is $\mathcal{N}(\vec 0,I_d)$ and the alternative hypothesis is $P_{A,\bV}$ for some $\bV$ uniformly selected from $S$.
\end{problem}

 We now describe the model in more detail. We will consider tests that are thresholded polynomials of low-degree, i.e., output $H_1$ if the value of the polynomial exceeds a threshold and $H_0$ otherwise. We need the following notation and definitions.
For a distribution $D$ over $\cX$, we use $D^{\otimes n}$ to denote the joint distribution of $n$ i.i.d.\ samples from $D$.
For two functions $f:\cX \to \R$, $g: \cX \to R$ and a distribution $D$, we use $\langle f, g\rangle_{D}$ to denote the inner product $\E_{X \sim D}[f(X)g(X)]$.
We use $ \|f\|_{D}$ to denote $\sqrt{\langle f, f \rangle_{D} }$.
We say that a polynomial $f(x_1,\dots,x_n):\R^{n \times d} \to \R$ has sample-wise degree $(r,\ell )$ if each monomial uses at most $\ell$ different samples from $x_1,\dots,x_n$ and uses degree at most $r$ for each of them.
Let $\cC_{r,\ell}$ be linear space of all polynomials of sample-wise degree $(r,\ell)$ with respect to the inner product defined above.
For a function $f:\R^{n \times d} \to \R$, we use $f^{\leq r, \ell}$ to be the orthogonal projection onto $\cC_{r,\ell}$   with respect to the inner product $\langle \cdot , \cdot \rangle_{D_0^{\otimes n}}$.  Finally, for the null distribution $D_0$ and a distribution $P$, define the likelihood ratio $\overline{P}^{\otimes n}(x) := {P^{\otimes n}(x)}/{D_0^{\otimes n}(x)}$.

\begin{definition}[$n$-sample $\tau$-distinguisher]
For the hypothesis testing problem between  $D_0$ (null distribution) and $D_1$ (alternate distribution) over $\cX$,
we say that a function $p : \cX^n \to \R$ is an $n$-sample $\tau$-distinguisher if $|\E_{X \sim  D_0^{\otimes n}}[p(X)] - \E_{X \sim D_1^{\otimes n}}[p(X)]| \geq \tau \sqrt{\Var_{X \sim D_0^{\otimes n}} [p(X)] }$. We call $\tau$ the \emph{advantage} of the polynomial $p$. 
\end{definition}
Note that if a function $p$ has advantage $\tau$, then the Chebyshev's inequality implies that one can furnish a test $p':\cX^n \to \{D_0,D_1\}$ by thresholding $p$ such that the probability of error under the null distribution is at most $O(1/\tau^2)$. 
We will think of the advantage $\tau$ as the proxy for the inverse of the probability of error (see  Theorem 4.3 in \cite{kunisky2022notes}  for a formalization of this intuition under certain assumptions) and we will show that the advantage of all polynomials up to a certain degree is $O(1)$. 
It can be shown that for hypothesis testing problems of the form of   \Cref{def:nongaussian_new}, 
 the best possible advantage among all polynomials in $\cC_{r,\ell}$ is captured by the low-degree likelihood ratio (see, e.g., ~\cite{BBHLS20,kunisky2022notes}):
\begin{align*}
    \left\| \E_{v \sim \cU(S)}\left[ \left( \overline{P}_{A,\bV}^{\otimes n}  \right)^{\leq r, \ell } \right]  - 1  \right\|_{D_0^{\otimes n}},
\end{align*}
where in our case $D_0 = \cN(\vec 0,\bI_d)$.

To show that the low-degree likelihood ratio is small, we use the result from \cite{BBHLS20} stating that a lower bound for the SQ dimension translates to an upper bound for the low-degree likelihood ratio. Therefore, given that we have already established in  previous section that $\mathrm{SD}(\cB(\{P_{A,\bV} \}_{\bV \in S},\cN(\vec 0,\bI_d)), \gamma,\beta)=2^{d^c}$ for $\gamma=\Omega(d)^{(t+1)/10}\chi^2(A,\cN(\vec 0,\bI_d))$ and $\beta= \chi^2(A,\cN(0,1))$, we one can obtain the corollary:

        \begin{theorem}\label{cor:low-deg-hardness-general-problem} 
        Let a sufficiently small positive constant $c$. Let the hypothesis testing problem of  \Cref{def:nongaussian_new}  the distribution $A$ matches the first $t$  moments with $\cN(\vec 0, \bI_m)$. For any $d \in \Z_+$ with $d = t^{\Omega(1/c)}$, any $n \leq \Omega(d)^{(t+1)/10}/\chi^2(A,\normal(\vec 0,\vec I_m))$ and any even integer $\ell < d^{c}$, we have that
        \begin{align*}
            \left\| \E_{v \sim \cU(S)}\left[ \left( \overline{P}_{A,\bV}^{\otimes n}  \right)^{\leq \infty, \ell } \right]  - 1  \right\|_{D_0^{\otimes n}} \leq 1\;.
        \end{align*}
    \end{theorem}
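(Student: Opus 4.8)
The plan is to obtain \Cref{cor:low-deg-hardness-general-problem} as a black-box consequence of the SQ-to-low-degree reduction of~\cite{BBHLS20}, applied to the statistical-query dimension lower bound that comes out of the same correlation estimate we already used in the proof of \Cref{thm:main}. Concretely, let $S$ be the set of nearly orthogonal matrices from \Cref{fact:setofmatrices} (so that $|S| = 2^{d^{\Omega(1)}}$; the requirement $m<d^{1/10}$ of that fact is implied here by $m<d^{c}$ together with $d=t^{\Omega(1/c)}$), and consider the family $\mathcal{D}_D=\{P_{A,\bV}\}_{\bV\in S}$ relative to $D_0=\cN(\vec 0,\bI_d)$. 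Since $A$ matches its first $t$ moments with $\normal(\vec 0,\vec I_m)$, \Cref{lemma:correlation-bound} together with $\|\vec U\vec V^\top\|_\op\le\|\vec U\vec V^\top\|_\fr\lesssim d^{-1/10}$ for distinct $\vec U,\vec V\in S$ shows, exactly as in the proof of \Cref{thm:main}, that $\mathcal{D}_D$ is $(\gamma,\beta)$-correlated relative to $D_0$ with
\[
\gamma=\Omega(d)^{-(t+1)/10}\,\chi^2(A,\normal(\vec 0,\vec I_m)),\qquad
\beta=\chi^2(A,\normal(\vec 0,\vec I_m)),
\]
and hence $\mathrm{SD}(\cB(\mathcal{D}_D,D_0),\gamma,\beta)\ge |S|=2^{d^{\Omega(1)}}$.

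Next I would invoke the main reduction of~\cite{BBHLS20}, which converts such an SQ-dimension lower bound into an upper bound on the low-degree likelihood ratio: for a testing problem whose alternative family is $(\gamma,\beta)$-correlated and of size $s$, every $n$-sample, sample-degree-$\ell$ distinguishing polynomial has only $O(1)$ advantage — equivalently the norm $\big\|\E_{v\sim\cU(S)}[(\overline{P}_{A,\bV}^{\otimes n})^{\le\infty,\ell}]-1\big\|_{D_0^{\otimes n}}$ is bounded — provided (i) $n$ is at most of order $1/\gamma$, and (ii) $\ell$ is small enough that $\ell\cdot\log(\beta/\gamma)\lesssim\log s$. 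I would then verify both conditions against our parameters. For (i): by hypothesis $n\le\Omega(d)^{(t+1)/10}/\chi^2(A,\normal(\vec 0,\vec I_m))=1/\gamma$. For (ii): here $\beta/\gamma=\Omega(d)^{(t+1)/10}$, so $\log(\beta/\gamma)=\Theta((t+1)\log d)$ while $\log s=d^{\Omega(1)}$; since $\ell<d^{c}$ it suffices that $d^{c}\,(t+1)\log d\lesssim d^{\Omega(1)}$, which holds because $c$ is a sufficiently small constant and $d=t^{\Omega(1/c)}$ (so the $(t+1)\log d$ factor is absorbed into the gap between the exponent $c$ and the fixed exponent in $\log s$). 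To pass from the $O(1)$ bound furnished by the reduction to the clean bound $\le 1$ stated in the corollary, I would note that the low-degree likelihood ratio norm is monotone non-decreasing in $n$ and vanishes as $n\to 0$, so shrinking the absolute constant hidden in the $\Omega(\cdot)$ of the sample bound makes the bound exactly $\le 1$.

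I expect the only real work to be the bookkeeping in the second step: matching the precise quantitative form of~\cite{BBHLS20} (which is typically phrased in terms of an SQ algorithm's number of queries and its tolerance, or equivalently in terms of $s$, $\gamma$, $\beta$) to the parameters $n,\ell,t,c,d$ appearing here, and in particular confirming that the constraints $d=t^{\Omega(1/c)}$ and $\ell<d^{c}$ are exactly what keep $\ell$ below the threshold set by $\log s$ and $\log(\beta/\gamma)$. There is no genuinely new mathematical ingredient beyond this translation — the SQ lower bound inherited from the proof of \Cref{thm:main} does all the heavy lifting.
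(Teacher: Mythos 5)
Your proposal is correct and follows essentially the same route the paper takes: establish the $(\gamma,\beta)$-correlation bound via \Cref{lemma:correlation-bound} and \Cref{fact:setofmatrices}, conclude a $2^{d^{\Omega(1)}}$ SQ-dimension lower bound, and then cite the SQ-to-low-degree reduction of~\cite{BBHLS20}. The paper in fact gives no more detail than this — it states the theorem immediately after recording the $(\gamma,\beta)$ values — so your additional bookkeeping (checking $n\le 1/\gamma$ and $\ell\log(\beta/\gamma)\lesssim\log s$, and fixing the sign typo on $\gamma$'s exponent) fills in steps the paper elides rather than departing from its argument.
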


The interpretation of this result is that unless the number of samples used $n$ is greater than $\Omega(d)^{(t+1)/10}/\chi^2(A,\normal(\vec 0,\vec I_m))$, any polynomial of degree roughly up to $d^{c}$  fails to be a good test (note that any polynomial of degree $\ell$ has sample-wise degree at most $(\ell,\ell)$).

\end{document}